\documentclass{article}

\PassOptionsToPackage{numbers, compress}{natbib}
\usepackage[preprint]{neurips_2026}

\usepackage[utf8]{inputenc} 
\usepackage[T1]{fontenc}    
\usepackage{hyperref}       
\usepackage{url}            
\usepackage{booktabs}       
\usepackage{amsfonts}       
\usepackage{nicefrac}       
\usepackage{array}
\usepackage{microtype}      
\usepackage{graphicx}
\usepackage{subcaption}
\usepackage{multirow}
\usepackage{makecell}
\usepackage{pifont}
\usepackage{tcolorbox}
\usepackage{enumitem}
\usepackage{amsmath}
\usepackage{amssymb}
\usepackage{mathtools}
\usepackage{amsthm}
\usepackage{float}
\usepackage{algorithm}
\usepackage{algpseudocode}
\usepackage{wrapfig}

\usepackage[capitalize,noabbrev]{cleveref}
\crefname{appendix}{Appendix}{Appendices}
\Crefname{appendix}{Appendix}{Appendices}

\theoremstyle{plain}
\newtheorem{theorem}{Theorem}[section]
\newtheorem{proposition}[theorem]{Proposition}

\theoremstyle{definition}

\theoremstyle{remark}

\newcommand{\ourmodel}{{\textsc{RE-Tab}{}}}
\newcommand{\ourmetric}{{\textsc{TabROUGE}{}}}
\newcommand{\tick}{\ding{51}}
\newcommand{\cross}{\ding{55}}
\newcommand{\inlinehl}[2]{{\setlength{\fboxsep}{1pt}\colorbox{#1}{#2}}}
\usepackage{colortbl}
\usepackage[table]{xcolor}

\definecolor{Best}{RGB}{210,240,220}
\definecolor{Second}{RGB}{245,245,200}
\newcommand{\hilite}[1]{\cellcolor{Best}#1}
\newcommand{\hilight}[1]{\cellcolor{Second}#1}

\title{Enhancing Table Reasoning with Deterministic Table-State Rewards}

\author{%
  Tung Sum Thomas Kwok\thanks{Equal contribution.} \\
  UCLA\\
  Los Angeles, CA 90095 \\
  \texttt{tk1018@ucla.edu} \\
  \And
  Xinyu Wang\footnotemark[1] \\
  McGill University \\
  Montreal, Canada \\
  \And
  Hengzhi He\\
  UCLA\\
  Los Angeles, CA 90095 \\
  \And
  Xiaofeng Lin\\
  UCLA\\
  Los Angeles, CA 90095 \\
  \And
  Peng Lu\\
  University of Montreal\\
  Montreal, Canada\\
  \And
  Liheng Ma\\
  McGill University \\
  Montreal, Canada \\
  \And
  Chunhe Wang\\
  UCLA\\
  Los Angeles, CA 90095 \\
  \And
  Yingnian Wu\\
  UCLA\\
  Los Angeles, CA 90095 \\
  \And
  Lei Ding\\
  University of Manitoba \\
  Winnipeg, Canada\\
  \And
  Guang Cheng\\
  UCLA\\
  Los Angeles, CA 90095 \\
}

\author{\mdseries
Tung Sum Thomas Kwok$^{1*}$, 
Xinyu Wang$^{2*}$, 
Hengzhi He$^{1}$,
Xiaofeng Lin$^{1}$, \\
Peng Lu$^{3}$,
Liheng Ma$^{2}$,
Chunhe Wang$^{1}$,
Chun Ho Mak$^{4}$,\\
Yuyu Luo$^{5}$,
Yingnian Wu$^{1}$, 
Lei Ding$^{6}$,
Guang Cheng$^{1}$\\
$^{1}$University of California, Los Angeles, 
$^{2}$McGill University,\\
$^{3}$Université de Montréal,
$^{4}$University College London, \\
$^{5}$The Hong Kong University of Science and Technology (Guangzhou), \\
$^{6}$University of Manitoba,\\
\texttt{\href{mailto:tk1018@ucla.edu}{tk1018@ucla.edu},
\href{mailto:guangcheng@ucla.edu}{guangcheng@ucla.edu}}
}

\begin{document}

\maketitle

\begin{abstract}
    Large Language Models (LLMs) struggle with multi-step reasoning over structured tables. The primary reason is the lack of explicit supervision for intermediate reasoning states. Existing learned reward models or executor-based verifiers are either unscalable or rely on answer-checking environments unavailable for many tabular tasks. This leaves no signal that is scalable and grounded in the query. 
    To address this, we introduce \ourmetric{}, a training-free and deterministic state reward. By adapting the Longest Common Subsequence (LCS) metric from text summarization to evaluate tabular states, \ourmetric{} assesses the lexical coverage and structural integrity of intermediate tables against the query without requiring learned models or external executors. Built upon this metric, we propose \ourmodel{}, a plug-and-play, training-free framework. \ourmodel{} reframes table reasoning as deterministic control over intermediate states, utilizing \ourmetric{} for stepwise feedback and trajectory-level test-time scaling (TTS) signals. Across six backbones and three benchmarks, \ourmodel{} improves accuracy by an average of 26.7~pp over no-reward baselines. It also reduces TTS samples by up to 33\%. Preliminary GRPO experiments further indicate \ourmetric{}'s viability as a scalable post-training reward, increasing gains by 8.34~pp. We further analyze failure modes of \ourmetric{}, including paraphrase under-rewarding and echo-column hacking, and identify when structure-aware lexical rewards remain reliable. 
\end{abstract}

\section{Introduction}

\textbf{Table reasoning agents reason by manipulating tables.} Tool-using table reasoning agents answer queries through operations (filter, project, sort, aggregate, join) that reshape the input table across intermediate states~\cite{wang2024chainoftable,ji2025treeoftable,cheng2023binding,zhang-etal-2024-syntqa,yang2025cit,sui-etal-2024-tap4llm,chen-2023-large}. Each intermediate table is the agent's \emph{working state}, conditioning the next action and grounding downstream reasoning. Treating these states explicitly aligns table reasoning with the world-model view used in language and vision agents~\cite{xing2025critiquesworldmodels,wu2025rlvrworld}, making \emph{state quality} the central object of supervision.

\textbf{Intermediate state errors compound silently.} Without a state-quality signal, the agent cannot detect when a transformation drops query-relevant evidence, retains irrelevant content, or applies one operation too many. These errors accumulate, leaving the final table without the evidence needed to recover. \cref{fig:motivation}(a) shows a Chain-of-Table trajectory that retrieves the right rows but, lacking a stop signal, over-applies operations and outputs a wrong answer. \cref{fig:motivation}(b) shows a VLM agent failing on a visually corrupted state despite a correct reasoning procedure. Both stem from one root cause: no \emph{state-level} feedback ties the table to the query.

\begin{figure}[t]
    \centering
    \includegraphics[width=0.98\linewidth]{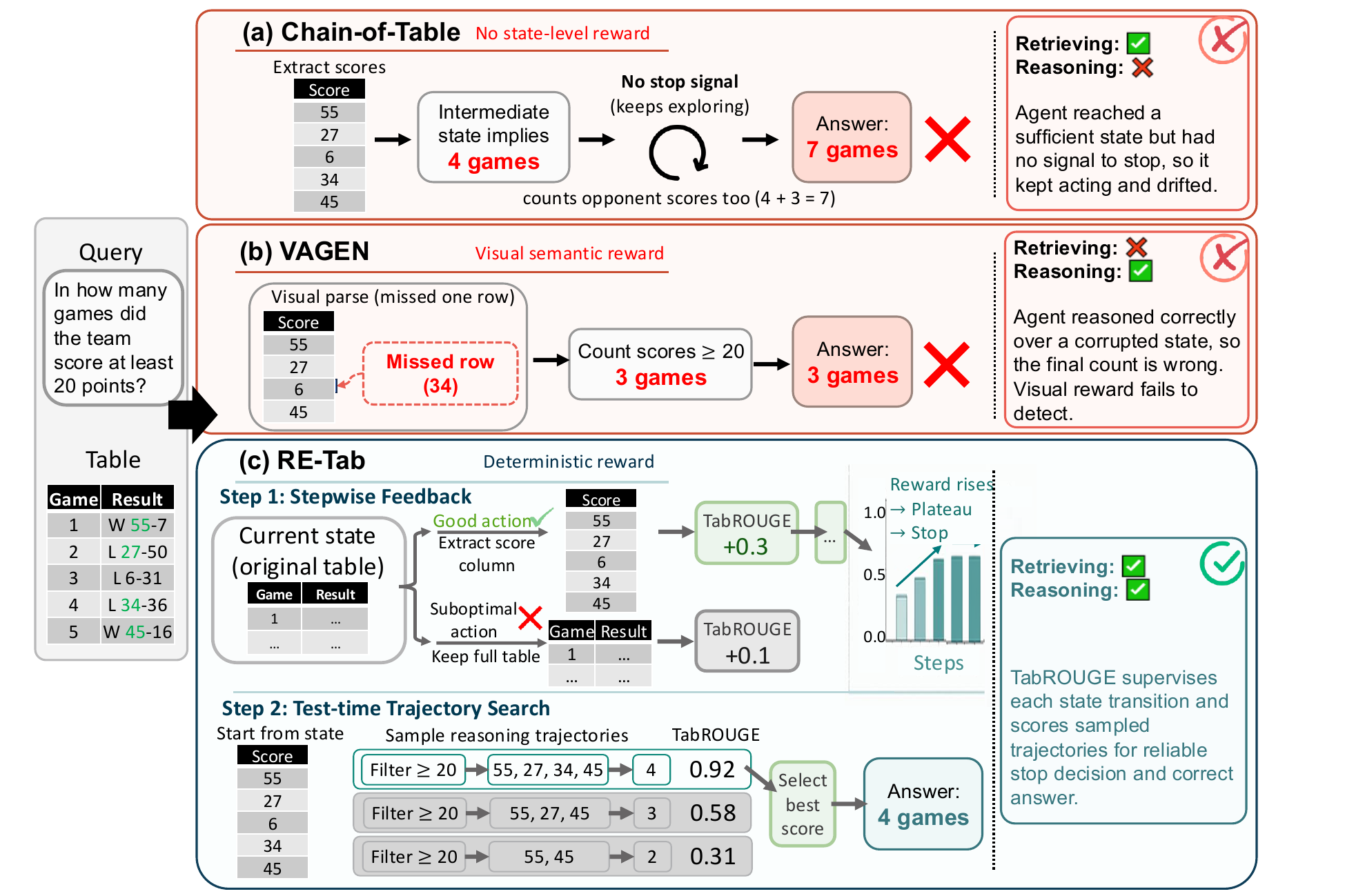}
    \caption{Without deterministic state-level reward, two prior agent paradigms drift in complementary ways: (a) Chain-of-Table retrieves correctly but, lacking a stop signal, over-applies operations and answers 7. (b) VAGEN reasons over a visually corrupted state and answers 3. (c) \ourmodel{} attaches \ourmetric{} to each state transition (Step 1) and to sampled trajectories (Step 2), producing the correct answer 4. Reward values shown are illustrative, with actual ranges reported in \cref{subsec:reward-validation}.}
    \label{fig:motivation}
    \vspace{-5mm}
\end{figure}

\textbf{Existing reward signals do not solve this cleanly.} Each existing reward family falls short on at least one axis. \emph{Final-answer rewards} are too sparse to guide intermediate transformations, leaving long-trajectory credit assignment open~\cite{liu-etal-2024-rethinking,nguyen2025interpretable}. \emph{Executor-based rewards} require answer-checking environments unavailable for tasks such as multi-table joins, schema understanding, or open-ended analytics~\cite{xing2025mmtu}. \emph{Learned reward models} need preference data that is costly to collect at table scale and brittle across domains~\cite{10.5555/3600270.3602281,yang2025tablegptr1advancingtabularreasoning}. \emph{Semantic-similarity rewards} smooth over exact numeric and symbolic strings~\cite{hessel-etal-2021-clipscore,10.1145/3589335.3651526}, and their context limits truncate long serialized tables before scoring, destroying the state information being evaluated.

\textbf{Useful table states are compactly query-aligned.} A useful intermediate table keeps query-relevant headers and values while discarding irrelevant content. \emph{Lexical coverage} measures retained evidence, and \emph{compactness} measures whether the state is narrow enough for the next reasoning step. Both are deterministic functions of the query-table pair, requiring no learned model or executor, and match how tool-using agents progress.

\textbf{\ourmetric{}: a deterministic query-to-state reward built for training-free table reasoning.} \ourmetric{} adapts ROUGE-L's Longest Common Subsequence (LCS)~\cite{rouge} to table states. It serializes each intermediate table into a deterministic ``\texttt{[header] is [value]}'' string and scores it against the query through a single LCS match normalized by the encoded table length. The numerator measures coverage of query-relevant evidence, and the denominator penalizes bloat. \textbf{Three properties make this the right design for training-free table reasoning.} First, lexical matching has no encoding phase and no length ceiling, preserving exact numeric and symbolic strings that embedding rewards lose to truncation. Second, order-preserving LCS over ``\texttt{[header] is [value]}'' rewards the row-level header-value coupling that bag-of-words rewards such as BM25 cannot. Third, the single-LCS denominator makes echo-column hacking unprofitable: duplicating query tokens inflates numerator and denominator at the same rate. We position \ourmetric{} as a \emph{relative process signal} for ranking states within a trajectory rather than an absolute correctness oracle, with scope characterized in \cref{subsec:state-reward}.

\textbf{\ourmodel{}: stepwise feedback and trajectory ranking from a single signal.} \ourmodel{} uses this scalar at two levels. Within a trajectory, \ourmetric{} is appended as a feedback token after each action and as a rolling-variance stopping criterion, so the next action conditions on cumulative state quality rather than on the model's token confidence. Across sampled trajectories, the aggregated trajectory reward serves as the test-time scaling (TTS) selection signal, replacing action-confidence voting and uniform majority. One deterministic reward thus supervises step and trajectory levels without any learned scorer or executor.

\textbf{Contributions and results.}
Across six backbones and three benchmarks, from small open-weight models to GPT-5.4~\cite{openai2026gpt54}, \ourmodel{} improves accuracy by 26.7~pp over the no-reward baseline within the same pipeline and reduces samples needed to reach the 95\% accuracy band by roughly 33\% versus action-confidence voting. Frontier-model results show that the targeted intermediate-state failures persist beyond small backbones, while the same training-free reward remains effective under a state-of-the-art reasoner. As a process reward, \ourmetric{} matches or exceeds trained Qwen2.5-Math-PRM-72B and TaTToo scorers on 4/5 evaluation sets despite having no learned parameters. Failure-mode analysis identifies where lexical supervision is unreliable and shows the headline gains survive each regime. A preliminary GRPO study further suggests \ourmetric{} scales as a post-training reward, adding 8.34~pp over the training-free framework. Our contributions are:
\begin{itemize}
    \item \textbf{A deterministic query-to-state reward for table reasoning.} \ourmetric{} adapts ROUGE-style LCS matching to serialized intermediate tables, with scope (\cref{subsec:motivating-study,subsec:state-reward}) delimiting when lexical supervision is reliable.
    \item \textbf{A plug-and-play, training-free framework for tool-using table reasoning.} \ourmodel{} uses \ourmetric{} for both step-level feedback and trajectory selection, improving accuracy and cutting token cost without any trained scorer (\cref{sub:two-phase}).
    \item \textbf{Empirical validation across models and benchmarks.} Consistent gains over no-reward and PRM baselines on six backbones, ranging from small open-weight models to a state-of-the-art frontier reasoner, and three benchmarks, with a preliminary GRPO study showing viability as a post-training reward (\cref{sub:initial-results,sub:ablation,sub:grpo-ablation,subsec:failure-modes-main}).
\end{itemize}

\section{Related Work}
\textbf{Table Reasoning.}
Table reasoning requires understanding structured tables to infer answers~\cite{10.1609/aaai.v37i11.26536, cheng-etal-2024-call}. Early approaches relied on semantic parsing and pre-training~\cite{jiang2022omnitab, lee-etal-2025-dcg}, with execution-grounded systems such as TAPAS~\cite{herzig-etal-2020-tapas}, TaBERT~\cite{yin2020tabertpretrainingjointunderstanding}, TAPEX~\cite{liu2022tapex}, and RAT-SQL~\cite{wang-etal-2020-rat} obtaining intermediate supervision through trained encoders or runnable SQL executors. \ourmodel{} is complementary, offering a training-free, executor-free state signal that applies wherever a serialized table can be produced. Recently, advanced LLMs and VLMs have enabled complex reasoning via symbolic and multi-step reasoning~\cite{pal-etal-2023-multitabqa, lee2024learninge, zhu-etal-2025-statschartmwp}, integrating executable programs with QA generation~\cite{cheng2023binding, zhang-etal-2024-syntqa} and retrieval-augmented strategies~\cite{lin-etal-2023-inner, wu-etal-2023-tacr}. Sequential Planning frameworks iteratively transform tables using predefined tools~\cite{10.14778/3659437.3659452, wang2024chainoftable}, whereas Compositional Planning decomposes queries into subproblems~\cite{wang-etal-2023-know, 10.1145/3539618.3591708, jiang-etal-2023-structgpt}, with additional enhancements in decomposition and causal analysis~\cite{ji2025treeoftable,yang2025triples,yang2025cit}. Search-based methods employ Monte Carlo Tree Search~\cite{10.1007/s10462-022-10228-y}, process-supervised rewards~\cite{li2025alphasql, zhang2025rewardsqlboostingtexttosqlstepwise,zou2026tattoo}, and progressive verification for multi-step reasoning~\cite{hong-etal-2025-data,chegini-etal-2025-repanda}. Other recent table reasoning systems push in complementary directions through trained trajectory pruning and verifier-guided reinforcement learning~\cite{yang2025tablegptr1advancingtabularreasoning,guo2026rethinkingtablepruningtableqa}. We include a more detailed comparison against concurrent work in \cref{sec:additional-related-work}. Nevertheless, these approaches rely on internal reasoning traces and task-specific executors, limiting the transferability of the supervision signal across pipelines. Our work addresses this gap by studying a lightweight deterministic reward for table-state transitions.


\textbf{Trajectory optimization in table reasoning.}
State-based reasoning is widely studied in visual and multimodal domains that focus on perception, representation learning and the flow of information across reasoning steps~\cite{10.5555/3737916.3740687, liu2024llavanext, basu2024understanding, liu2024vmamba, sarch2025grounded,wang2025vagen,kwok2026tabqaworldoptimizingmultimodalreasoning}. Similar ideas have also been explored in code generation to improve reasoning over intermediate program states~\cite{faircodegenteam2025cwmopenweightsllmresearch}. These developments motivate applying world model perspectives to structured reasoning over tables. In table reasoning, interactive methods have started to incorporate related components, such as tabular grounding~\cite{wang2024chainoftable, yang2025triples, hollmann2025tabular, qu2025tabicl}, causal tracing and interpretability~\cite{yang2025cit}, and self-correction in multi-turn interactions~\cite{nguyen2025interpretable, yang2025tablegptr1advancingtabularreasoning}. Techniques like Mix-SC~\cite{liu-etal-2024-rethinking} further improve robustness by aggregating multiple reasoning trajectories through voting and TTS. Recent RL frameworks further optimize trajectory~\cite{wang2025ragenunderstandingselfevolutionllm,zhang2026starpostabilityaugmentedreinforcementpolicy} by incorporating correctness annotations in table reasoning~\cite{zou2026tattoo}. Despite these advances, current approaches lack stepwise quantitative evaluation of exact table states, leaving quality estimation over table reasoning trajectories largely unexplored. We address this gap with \ourmodel{} by evaluating intermediate table states at both local transition and complete trajectory level. 
\section{Deterministic State-Level Reward for Table Reasoning} \label{sec:methodology}
\cref{sub:problem-formulation} formalizes tool-based table reasoning as a POMDP and test-time trajectory search. We then develop the deterministic state reward in three steps: \cref{subsec:motivating-study} motivates a ROUGE-style design via training-free reward comparisons, \cref{subsec:state-reward} defines \ourmetric{} with its scope and failure modes, and \cref{sub:two-phase} integrates it into the two-phase \ourmodel{} pipeline.

\subsection{Problem Formulation}  \label{sub:problem-formulation}
\textbf{Tool-based Table Reasoning State Transitions.}
We formalize tool-based table reasoning~\cite{wang2024chainoftable,ji2025treeoftable} as a partially observable Markov decision process (POMDP)~\cite{ASTROM1965174}. Each instance consists of a table $T$, a natural-language query $Q$, and a ground-truth answer $\alpha$, with the goal of producing $\hat{\alpha}$ that maximizes an evaluation function $G(\hat{\alpha},\alpha)$. Since LLMs and VLMs struggle to attend to entire tables token by token~\cite{wang2025needleinatable}, agents reason over partial table snapshots. At step $t$, the agent observes $o_t\in O$, a partial view of the latent state $T_t$, and selects an atomic table operation $a_t\in A$~\cite{wang2024chainoftable} with arguments $\theta_{a_t}$. We adopt a curated tool inventory to promote controlled execution and more consistent outcome distributions under current LLMs (\cref{tab:main_results,tab:lower-variance}). Each step uses a single-pass transition: sample one tool call, execute it, and reward the resulting post-action state. Conditioned on $o_t$, the agent generates $\theta_{a_t}$, executes $a_t$, and induces $T_{t+1}\sim P(\cdot\mid T_t,a_t)$, $r_t=r(a_t,T_{t+1},Q)$, and $o_{t+1}\sim\Omega(\cdot\mid T_{t+1})$. The trajectory $\tau$ aggregates $\{r_t\}_{t=0}^{S}$ into a discounted return $R(\tau)$.

\textbf{Optimal Test-time Trajectory Search.}
At test time, the agent samples multiple independent rollouts under the same prompt and tool interface, then selects the highest-reward trajectory. We use sample-and-rank rather than breadth-first enumeration to better match OOD inference. With $R(\tau)=\sum_{l=0}^{S}\gamma_l\,r(a_l,T_{l+1},Q)$ and $\gamma_l\in[0,1]$, the agent returns $\hat{\alpha}_{\tau^*}=a_S\in\tau^*$, where $\tau^*=\arg\max_{\tau\in\mathcal{T}}R(\tau)$. The maximization is over complete sampled trajectories, not alternative actions within a rollout. \ourmodel{} computes step-wise rewards from observations and aggregates them into $R(\tau)$ for selection.

\subsection{Pilot Study: Comparing Reward Families} \label{subsec:motivating-study}
\ourmodel{} introduces a training-free \emph{deterministic state-supervision signal} for tool-based table reasoning, scoring whether each action moves the intermediate table state toward the query. Before formalizing \ourmetric{}, we evaluate existing training-free reward families on three complex table reasoning datasets, WikiTableQuestions (WTQ)~\cite{pasupat-liang-2015-compositional}, MMQA~\cite{wu2025mmqa} and MMTU~\cite{xing2025mmtu} using GPT-4.1-nano~\cite{openai_gpt41_2025} as base reasoning model. We include semantic-based BERTScore~\cite{Zhang*2020BERTScore:}, and lexical-based BM25~\cite{10.1561/1500000019}, and ROUGE LCS matching~\cite{rouge}. We operate on the deterministic ``\texttt{[header] is [value]}'' table-state serialization in~\cite{borisov2023language}, further specified in \cref{sec:textual-encoding}. An additional image-rendered VLM agent baseline serves as an orthogonal modality axis, analyzed in \cref{subsec:case-study-vlm-appendix}. 

\begin{table*}[t]
    \centering
    \caption{VLM-based rewards improve on reasoning-focused datasets but degrade on exact lookup in large tables, indicating the need for tabular-specific tuning. Embedding-based similarity rewards are also unstable due to truncation that disrupts table structure. The ROUGE-L column reports the standard F-measure on the ``\texttt{[header] is [value]}'' encoding. \ourmetric{} (\cref{subsec:state-reward}) refines this with a precision-leaning $|\operatorname{Enc}(T)|$ denominator. Results report accuracy (mean $\pm$ std).}
    \label{tab:vs-vagen}
    \vspace{1mm}
    \resizebox{0.82\textwidth}{!}{
    \begin{tabular}{c|c|c|c|c|c}
    \toprule
    Dataset &
    \makecell{Tabular \\ no reward} &
    \makecell{VLM \\ + rewards} &
    \makecell{Tabular \\ + BERTScore} &
    \makecell{Tabular \\ + BM25} &
    \hilite{\textbf{\makecell{Tabular \\ + ROUGE-L}}} \\
    \midrule
    WTQ & 59.18 $\pm$ 3.48 & 61.22 $\pm$ 3.45 & 51.02 $\pm$ 3.53 & 57.10 $\pm$ 3.50 & \textbf{67.35 $\pm$ 3.34}\\
    MMQA & 32.91 $\pm$ 3.32 & 45.57 $\pm$ 3.52 & 48.10 $\pm$ 3.53 & 51.90 $\pm$ 3.53 & \textbf{60.76 $\pm$ 3.45}\\
    MMTU & 53.26 $\pm$ 3.53 & 28.26 $\pm$ 3.18 & 52.17 $\pm$ 3.53 & 54.30 $\pm$ 3.52 & \textbf{69.57 $\pm$ 3.25} \\
    \midrule
    Average & 48.45 $\pm$ 3.53 & 45.02 $\pm$ 3.52 & 50.43 $\pm$ 3.54 & 54.43 $\pm$ 3.52 & \textbf{65.89 $\pm$ 3.35}\\
    \bottomrule
    \end{tabular}
    }
\end{table*}
\cref{tab:vs-vagen} surfaces three observations. First, reward feedback matters: all three text-based variants beat the no-reward baseline on average. Second, lexical rewards (BM25, ROUGE) outperform semantic BERTScore consistently. Beyond prior findings that embeddings fail to capture numeric differences~\cite{10.1145/3589335.3651526}, embedding context limits (e.g., 512-token DeBERTa~\cite{he2021debertadecodingenhancedbertdisentangled}, 4096-token QWEN3-embedding~\cite{zhang2025qwen3embeddingadvancingtext}) force truncation of long table serializations and destroy state information being scored. Lexical matching has no such limit (further length analysis in \cref{sec:analysis-size}). Third, ROUGE outperforms BM25 because BM25 ignores position~\cite{10.1561/1500000019} and cannot reward row-level header-value coupling, whereas LCS rewards order-preserving matches under the ``\texttt{[header] is [value]}'' encoding. These observations motivate \ourmetric{}, which we formalize next.

\subsection{\ourmetric{}: Adapting ROUGE to Tabular State Supervision} \label{subsec:state-reward}
\begin{figure}[t]
    \centering
    \includegraphics[width=0.95\linewidth]{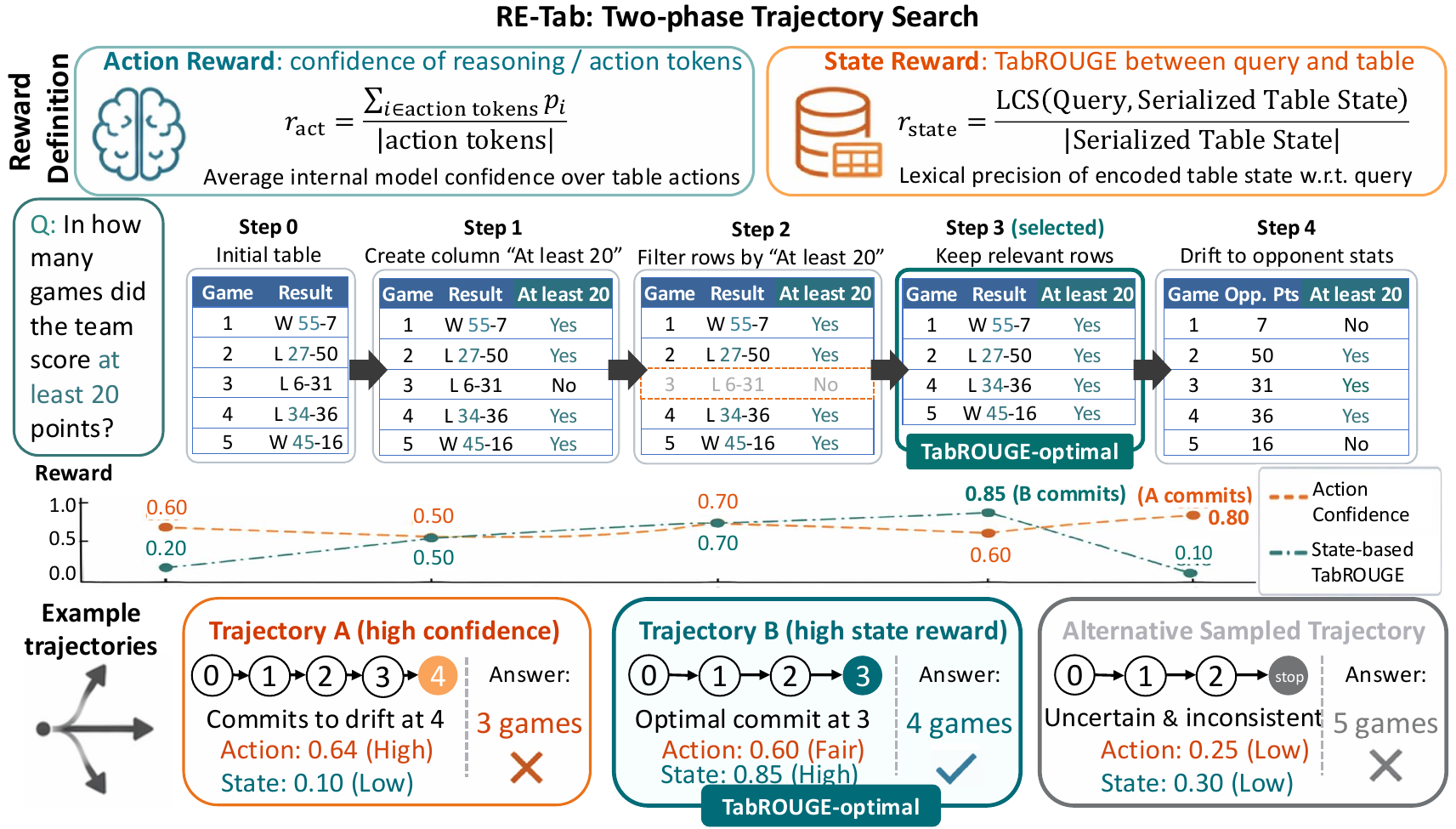}

    \caption{\textbf{State reward is more reliable than action-token confidence for trajectory selection.} \textcolor{orange}{Action reward} averages model confidence over action tokens, while \textcolor{teal}{state reward}~\ourmetric{} scores the post-action table against the query. \textcolor{orange}{Action reward} favors Step~4, where the trajectory drifts to opponent statistics and yields a confidently incorrect answer. \textcolor{teal}{State reward} peaks at Step~3, where the table retains the relevant evidence and produces the correct answer. Reward values follow the same illustrative convention as in \cref{fig:motivation}.}
\label{fig:main-workflow}
\vspace{-3mm}
\end{figure}
We extend ROUGE-L~\cite{rouge} to formally define \ourmetric{}. After each step $l$, the updated table state $T_{l+1}$ is converted into the textual sequence $\operatorname{Enc}(T_{l+1})$ via the deterministic ``\texttt{[header] is [value]}'' encoding introduced above. We include detailed design rationale and encoding sensitivity in \cref{sec:tabrouge-design,sub:delimiter-sensitivity}. The reward compares the encoded table against the query $Q$: \begin{align} \label{eq:tabrouge}
r_{\ourmetric}(a_{l}, T_{l+1}, Q)
&= \frac{\operatorname{LCS}(Q, \operatorname{Enc}(T_{l+1}))}{|\operatorname{Enc}(T_{l+1})|}.
\end{align}

The numerator counts query-relevant content preserved in the state, the denominator penalizes bloat. This rewards lexical coverage and precision without embedding-driven truncation: in \cref{fig:main-workflow}, \ourmetric{} peaks at Step 3 where the relevant ``at least 20'' column overlaps the query with irrelevant rows filtered out. Duplicating query tokens does not produce proportional gains, mitigating reward hacking. The trajectory reward $R_{\text{state}}(\tau) = \sum_{l=0}^{S} \gamma_{l}\,r_{\ourmetric}(a_l, T_{l+1}, Q)$ uses $\gamma_l{=}1$ throughout. Rolling-variance early stopping terminates near the \ourmetric{} peak (e.g., Step~3), so the unweighted sum concentrates on peak contributions and trajectory-level argmax selects the best peak-aligned rollout. Formal analysis, discount sensitivity, and validation tests are in \cref{sub:theoretical-support-rouge,sec:implementation-details,sec:tabrouge-design}.

\paragraph{Design rationale.} The LCS-over-``\texttt{[header] is [value]}'' design gives \ourmetric{} three properties that address the failure modes of training-free alternatives in \cref{tab:vs-vagen}. Lexical matching has no encoding phase and no length ceiling, preserving the exact numeric and symbolic strings that embedding rewards lose to truncation. Order-preserving LCS rewards the row-level header-value coupling that bag-of-words BM25 cannot capture. The single-LCS denominator $|\operatorname{Enc}(T_{l+1})|$ resists token stuffing because duplicating query tokens inflates numerator and denominator at the same rate. Together these handle \emph{lexical coverage, structural integrity, exact numeric and symbolic match, and bloat penalization without a learned model or executor.} \vspace{-1mm}

\paragraph{Scope and known failure modes.} Three failure regimes follow from the lexical formulation: paraphrase and derived-evidence under-rewarding (F1), echo-column reward hacking (F2), and renamed/distractor-column collisions (F3). Because \ourmetric{} is consumed as a \emph{relative} Best-of-$K$ ranking signal rather than an absolute correctness verifier, F1 attenuates correct-trajectory scores symmetrically without flipping argmax, and F2/F3 are bounded by the denominator. \cref{subsec:failure-modes-main} quantifies each regime on real trajectories. Full correlation, calibration, and stress-test results are in \cref{subsec:reward-validation,subsec:stress-tests}. \vspace{-1mm}

\subsection{Two-Phase Implementation of Deterministic State Supervision} \label{sub:two-phase} \vspace{-1mm}
\paragraph{Two-Phase Search.} \ourmodel{} operationalizes deterministic state supervision at the step and trajectory levels, exploring and scoring without backtracking on low-reward actions:
\begin{enumerate}[noitemsep,topsep=2pt,parsep=0pt]
    \item \textbf{Intermediate state reasoning}: at each step the agent scores the post-action state via $r(a_l, T_{l+1}, Q)$ and appends the scalar as a feedback token (e.g., \texttt{[reward: 0.72]}). \ourmetric{} plays two roles: (i) \emph{reward in prompt}, conditioning the next action on cumulative progress, and (ii) \emph{reward for stopping}, triggering rolling-variance early stopping once threshold $v$ is reached (hyperparameters in \cref{tab:all_hyperparameters}).
    \item \textbf{TTS trajectory search}: the agent ranks $\mathcal{T}$ by the aggregated state-supervision signal, outputs $\tau^{*}=\arg\max_{\tau\in\mathcal{T}} R(\tau)$, and returns $\hat{\alpha}_{\tau^{*}}=a_{S}\in \tau^{*}$.
\end{enumerate}

\paragraph{Optimization Objective.} \label{par:simulative-objective} The agent maximizes the expected state-supervision signal $J(\mathcal{T}) = \mathbf{E}_{\tau \sim \mathcal{T}} \big[ R_{\text{state}}(\tau) \big]$, biasing toward trajectories whose intermediate states stay query-aligned across turns, favoring preserved evidence over plausible language traces. \vspace{-2mm}
\section{Experiments} \label{sec:experiments} \vspace{-1mm}
We evaluate \ourmodel{} along three axes. \cref{sub:initial-results} reports state-of-the-art comparisons on two pipelines and head-to-head against trained PRM scorers. \cref{sub:ablation} ablates accuracy and trajectory efficiency under six modern backbones spanning small open-weight models to a state-of-the-art frontier model. \cref{sub:grpo-ablation,subsec:failure-modes-main} examine post-training viability and the empirical magnitude of the failure modes characterized in \cref{subsec:state-reward}.

\textbf{Baseline.} We evaluate \ourmodel{} as a framework incorporated into two major table reasoning paradigms: sequential planning, represented by Chain-of-Table~\cite{wang2024chainoftable} (example illustrated in \cref{fig:motivation,fig:main-workflow}), and compositional planning, represented by Tree-of-Table~\cite{ji2025treeoftable} that decomposes queries into atomic sub-queries. We reproduce each pipeline to isolate the effect of \ourmodel{} as a controlled comparison and use GPT-3.5 as the base model for parity with prior systems. We further isolate \ourmetric{} as a process reward model (PRM) by inserting it into the same pipeline as the recent PRM baselines~\cite{zou2026tattoo}. All PRM-baseline comparisons share the DeepSeek-R1-Distill-Qwen-14B backbone~\cite{deepseek2025r1}, Best-of-4 selection, prompt template, and step budget, with only the scorer swapped.

\textbf{Datasets.} Following standard table-reasoning protocols~\cite{yang2025cit,zou2026tattoo}, we report accuracy on WTQ, TableBench (TB: numerical reasoning (NR), fact checking (FC), data analysis (DA))~\cite{10.1609/aaai.v39i24.34739}, TabFact~\cite{Chen2020TabFact}, and MMQA~\cite{wu2025mmqa}, using the unified extract-normalize-compare protocol in \cref{subsec:evaluation-protocol}. We sample $n=200$ per dataset with random seed 1018, except for the full TB-FC dataset where $n=96$. The corresponding binomial 95\% Wilson half-widths are $\pm 6.86$ pp at $n=200$ and $\pm 9.81$ pp at $n=96$. Cross-condition gaps below these resolutions should be read as ties. \vspace{-1mm}
\subsection{Main Benchmark Results} \label{sub:initial-results} \vspace{-1mm}
\begin{table}[t!]
    \centering
    \small
    \caption{Main benchmark comparison across datasets. GPT-3.5-based methods report TabFact while PRM baselines (DeepSeek-R1-Distill-Qwen-14B, Best-of-4) report MMQA instead. ``---'' indicates the result was not reported by the original work under a comparable evaluation setting.}
    \label{tab:initial-result}
    \setlength{\tabcolsep}{5pt}
    \renewcommand{\arraystretch}{1.1}
    \resizebox{0.9\textwidth}{!}{%
    \begin{tabular}{l c c c c c c}
        \toprule
        \textbf{Method} & \textbf{WTQ} & \textbf{TB-NR} & \textbf{TB-FC} & \textbf{TB-DA} & \textbf{TabFact} & \textbf{MMQA} \\
        \midrule
        \rowcolor{gray!11} \multicolumn{7}{l}{\textit{GPT-3.5-based Table Reasoning Agent}} \\
        Binder & 54.8 & 66.8 & 67.7 & 26.8 & 83.3 & --- \\
        Chain-of-Table & 59.9 & 68.5 & 78.1 & 30.3 & 80.2 & --- \\
        \textit{Chain-of-Table Repl.} & 62.2 & 66.3 & 70.7 & 32.6 & 82.6 & --- \\
        Tree-of-Table & 61.1 & --- & --- & --- & 81.9 & --- \\
        \textit{Tree-of-Table Repl.} & 60.2 & 48.9 & 56.5 & 31.5 & 81.5 & --- \\
        Dater & 65.8 & 65.0 & 69.8 & 28.6 & 83.6 & --- \\
        TabSQLify & 68.7 & 65.2 & 76.0 & 28.0 & 78.3 & --- \\
        TALON & 70.7 & 67.3 & 77.1 & 28.9 & 87.6 & --- \\
        Table-Critic & 72.6 & \underline{73.0} & \underline{81.3} & 33.8 & 90.6 & --- \\
        CIT-DP & \underline{76.4} & --- & --- & --- & \underline{91.3} & --- \\
        TIDE & 75.0 & --- & --- & --- & 89.8 & --- \\
        \textbf{\ourmodel{} Chain-of-Table (GPT-3.5)} & {76.5} & \textbf{76.1} & \textbf{81.5} & \textbf{41.3} & \textbf{92.4} & --- \\
        \textbf{\ourmodel{} Tree-of-Table (GPT-3.5)} & {74.5} & 65.2 & 73.9 & \underline{38.0} & \underline{91.3} & --- \\
        \midrule
        \rowcolor{gray!11} \multicolumn{7}{l}{\textit{Process Reward Model baselines on DeepSeek-R1-Distilled-Qwen-14B base model (Best-of-4)}} \\
        Majority Vote & 64.7 & 65.5 & 76.2 & 23.5 & --- & 18.4 \\
        LLM-as-a-judge & 65.2 & 66.7 & 77.2 & 23.5 & --- & 19.6 \\
        Skywork-PRM-7B & 65.9 & 66.1 & 76.8 & 24.1 & --- & 21.4 \\
        Math-Shepherd-PRM-7B & 66.8 & 67.2 & 76.2 & 22.7 & --- & 22.0 \\
        Qwen2.5-Math-PRM-7B & 65.2 & 66.9 & 75.4 & 23.2 & --- & 23.5 \\
        ThinkPRM & 64.3 & 69.2 & 75.8 & 21.6 & --- & 22.4 \\
        GenPRM & 69.8 & \textbf{71.5} & 76.3 & 25.3 & --- & 23.8 \\
        Qwen2.5-Math-PRM-72B & 69.2 & 70.4 & 77.8 & 25.5 & --- & 24.4 \\
        {TaTToo} & {69.8} & {71.2} & {77.4} & {27.7} & --- & {25.1} \\
        \textbf{\ourmodel{} (DS-R1-Distill-Qwen-14B)} & \textbf{71.4} & 71.1 & \textbf{79.4} & \textbf{35.5} & --- & \textbf{46.7} \\
        \bottomrule
    \end{tabular}}
    \vspace{-2mm}
\end{table}
\textbf{State-of-the-art Table Reasoning Agent.} \cref{tab:initial-result} shows that \ourmodel{} consistently improves reasoning accuracy over both pipelines and matches or exceeds prior systems. In Chain-of-Table, \ourmodel{} exceeds prior CIT-DP, while Tree-of-Table integration matches within 2~pp. Because CIT-DP and TIDE codebases are unreleased, we report only published WTQ and TabFact numbers, and we reproduce~\cite{wang2024chainoftable,ji2025treeoftable} within 0.4~pp to ground the improvements.

\textbf{PRM head-to-head.} On reward-model robustness, the training-free \ourmetric{} outperforms the larger Qwen2.5-Math-PRM-72B and the Qwen3-8B-finetuned TaTToo~\cite{zou2026tattoo,yang2025qwen3technicalreport} on 4 of 5 evaluation sets (\cref{tab:initial-result}), with TB-NR a near-tie ($71.1$ vs $71.2$, within the Wilson half-width). When supervision can be defined directly on the encoded table state, a deterministic stepwise reward reaches parity with much larger trained scorers without being trained itself. We next ablate how \ourmodel{} contributes across reasoning settings.
\vspace{-1mm}
{
\clubpenalty=0        
\widowpenalty=0       
\brokenpenalty=0
\interlinepenalty=0
\subsection{Ablation: Accuracy and Efficiency} \label{sub:ablation} \vspace{-1mm}
\textbf{Setup.} We narrow benchmarks to WTQ~\cite{pasupat-liang-2015-compositional} (ID), MMQA~\cite{wu2025mmqa}, and MMTU~\cite{xing2025mmtu} (recent benchmarks approximating OOD conditions). To test generalization beyond the GPT-3.5 parity setting, we re-run \ourmodel{} on six backbones (QWEN3-8B~\cite{yang2025qwen3technicalreport}, Ministral-3-8B~\cite{mistralai_ministral3_8b_25-12}, GPT-4.1-nano~\cite{openai_gpt41_2025}, GPT-oss-20B~\cite{openai_gptoss20b_2025}, GPT-5-nano~\cite{openai_gpt5nano_2025}, and GPT-5.4~\cite{openai2026gpt54}). The first five cover small open-weight and lightweight proprietary backbones, and we include GPT-5.4 to verify that the targeted failure modes and gains persist under a frontier model. We ablate along two axes: (i) single-trajectory inference comparing no-reward against \ourmetric{}, and (ii) multi-trajectory TTS under full \ourmodel{}. For TTS, we benchmark state-based \ourmetric{} against standard action-confidence frameworks~\cite{anonymous2025deep,razghandi-etal-2025-cer} (\cref{fig:main-workflow}), including chain- and step-level confidence and equal-weighting majority voting as controls. All ablation numbers are from our own runs. Implementation details, confidence derivations, and BLEU~\cite{10.3115/1073083.1073135} / ROUGE-L~\cite{rouge} results are in \cref{sec:detail-operation,sec:implementation-details,sub:blue-rouge}.

\textbf{Accuracy.} \cref{tab:main_results} reports that \ourmetric{} benefits both intermediate-state reasoning and TTS trajectory search, as described in \cref{sub:two-phase}. Reward-based supervision over intermediate reasoning consistently improves over no-reward baselines across models, datasets, and reasoning paradigms, with an average accuracy gain of 26.7~pp. Under TTS, using \ourmetric{} as a trajectory voting weight further improves performance. The state-based \ourmetric{} generally outperforms action confidence and uniform rewards on the OOD MMQA and MMTU sets, while on in-distribution WTQ the two reward families are comparable, plausibly because pretraining bias makes internal action confidence already aligned with the answer in this setting. The same pattern holds for the frontier GPT-5.4 backbone, which already starts from a much stronger no-reward baseline yet still benefits from \ourmetric{} on both single-trajectory and TTS settings, indicating that the intermediate-state failure modes \ourmodel{} addresses are not eliminated by larger pretrained reasoners. Similar trends in BLEU and ROUGE-L appear in \cref{tab:blue-rouge}, with volatility analysis in \cref{sub:supporting-main-results}.

\textbf{Efficiency.}
Beyond accuracy, we measure how many trajectories the agent needs before its prediction stabilizes, sampling 100 trajectories per WTQ query and defining convergence as 95\% accuracy with under 5\% volatility. As shown in \cref{fig:reward-analysis-combined}(a, b), no-reward search needs 22 to 27 trajectories, action-confidence reward reduces this to 15 to 18, and state-based \ourmetric{} reaches the same band with 12, a 20-33\% drop relative to action confidence. The figure stays inside the rolling-variance early-stopping budget (\cref{sec:implementation-details}). Per-step \ourmetric{} injection is complementary to TTS voting. Removing it degrades \ourmetric{}-weighted performance, while the confidence-based variant is unaffected because action-token confidence does not depend on injected feedback. End-to-end latency, token budgets, and the per-step LCS bound $O(|Q|\,|\mathrm{Enc}(T_t)|)$ are reported in \cref{tab:cost-accounting,sub:efficiency-protocol}, with LLM drifting analysis in \cref{sub:llm-drifting}.}
\begin{table}[!t]
\centering
\vspace{-5mm}
\caption{Each method block separates \emph{single-trajectory} inference (no reward, \ourmetric{}) from \emph{multi-trajectory} TTS (EQ, EQ+\ourmetric{} filtering, AC, AS, RG); accuracy (\%). \ourmetric{} improves no-reward baselines, and RG outperforms action-confidence voting on OOD datasets. \inlinehl{Second}{yellow} marks \ourmetric{} single-turn inference and \inlinehl{Best}{green} marks full \ourmodel{} (ST+RG). \textbf{Bold}/\underline{underline} denote best/second per row.}
\label{tab:main_results}

\setlength{\tabcolsep}{3pt}
\renewcommand{\arraystretch}{1.15}
\definecolor{stcol}{HTML}{F2F2F7}

\resizebox{0.9\textwidth}{!}{%
\begin{tabular}{l|c|>{\columncolor{stcol}}c>{\columncolor{stcol}}c|ccccc|>{\columncolor{stcol}}c>{\columncolor{stcol}}c|ccccc}
\toprule
\multirow{2}{*}{Model}
& \multirow{2}{*}{\textbf{Code}}
& \multicolumn{7}{c|}{\textbf{Chain-of-Table (Tools)}}
& \multicolumn{7}{c}{\textbf{Tree-of-Table (Tools)}} \\
\cmidrule(lr){3-9}\cmidrule(lr){10-16}
&
& \multicolumn{2}{c|}{\textit{Single Trajectory}} & \multicolumn{5}{c|}{\textit{Multi Trajectory (TTS)}}
& \multicolumn{2}{c|}{\textit{Single Trajectory}} & \multicolumn{5}{c}{\textit{Multi Trajectory (TTS)}} \\
\cmidrule(lr){3-4}\cmidrule(lr){5-9}\cmidrule(lr){10-11}\cmidrule(lr){12-16}

\ourmetric{} & \cross & \cross & \hilight{\tick} & \cross & \tick & \tick & \tick & \hilite{\tick}
 & \cross & \hilight{\tick} & \cross & \tick & \tick & \tick & \hilite{\tick}\\
 TTS Voting & \cross & \cross & \hilight{\cross} & EQ & EQ & AC & AS & \hilite{RG} & \cross & \hilight{\cross} & EQ & EQ & AC & AS & \hilite{RG}\\
\midrule
\multicolumn{16}{c}{WTQ}\\
\midrule
 QWEN3-8B
& 32.65 & 32.65 & \hilight{46.94} & 34.69 & 46.94 & 48.98 & \textbf{58.16} & \hilite{55.61}
& 35.71 & \hilight{44.90} & 44.90 & 42.86 & 53.06 & \textbf{57.14} & \hilite{55.10} \\
 Ministral-3-8B
& 35.71 & 45.92 & \hilight{51.02} & 42.86 & 57.14 & \textbf{67.35} & \textbf{67.35} & \hilite{\textbf{67.35}}
& 42.86 & \hilight{49.49} & 53.06 & 55.10 & \textbf{63.27} & 61.22 & \hilite{\textbf{63.27}} \\
 GPT-4.1-nano
& 44.90 & 59.18 & \hilight{67.35} & 63.27 & 69.39 & 75.51 & 77.55 & \hilite{\textbf{81.63}}
& 55.10 & \hilight{65.31} & 58.16 & 61.22 & 67.35 & 69.39 & \hilite{\textbf{71.43}} \\
 GPT-oss-20B
& 56.12 & 62.24 & \hilight{69.39} & 65.31 & 71.43 & 81.63 & \textbf{87.76} & \hilite{85.71}
& 57.14 & \hilight{66.33} & 69.39 & 83.67 & 83.67 & \textbf{85.71} & \hilite{\textbf{85.71}} \\
 GPT-5-nano
& 48.98 & 79.59 & \hilight{\textbf{89.80}} & 81.63 & 85.71 & \textbf{89.80} & \textbf{89.80} & \hilite{\textbf{89.80}}
& 79.59 & \hilight{83.67} & 85.71 & 85.71 & \textbf{91.84} & \textbf{91.84} & \hilite{\textbf{91.84}} \\
 GPT-5.4
& 78.43 & 88.24 & \hilight{{90.20}} & 88.24 & 90.20 & \textbf{92.16} & \textbf{92.16} & \hilite{\textbf{92.16}} & 82.35 & \hilight{88.24} & 86.27 & 88.24 & \textbf{94.12} & \textbf{94.12} & \hilite{\textbf{94.12}} \\
\midrule
\multicolumn{16}{c}{MMQA}\\
\midrule
 QWEN3-8B
& 24.05 & 25.32 & \hilight{40.51} & 26.58 & 34.18 & 37.97 & 41.77 & \hilite{\textbf{44.30}}
& 36.71 & \hilight{43.04} & 37.97 & 41.77 & 39.24 & 39.24 & \hilite{\textbf{44.94}}\\
 Ministral-3-8B
& 22.78 & 27.85 & \hilight{43.04} & 27.85 & 40.51 & 43.04 & 37.97 & \hilite{\textbf{49.37}}
& 35.44 & \hilight{46.84} & 37.97 & 50.63 & 49.37 & 55.70 & \hilite{\textbf{56.96}} \\
 GPT-4.1-nano
& 22.78 & 32.91 & \hilight{60.76} & 49.37 & 64.56 & 63.29 & \textbf{68.35} & \hilite{67.09}
& 31.65 & \hilight{58.23} & 51.90 & 56.96 & 67.09 & 68.35 & \hilite{\textbf{73.42}} \\
 GPT-oss-20B
& 34.18 & 55.70 & \hilight{72.15} & 56.96 & 73.42 & 74.68 & 78.48 & \hilite{\textbf{83.54}}
& 60.76 & \hilight{74.68} & 63.29 & 75.95 & 77.22 & \textbf{83.54} & \hilite{79.75} \\
 GPT-5-nano
& 41.77 & 49.37 & \hilight{63.29} & 60.76 & 70.89 & 79.75 & 82.28 & \hilite{\textbf{83.54}}
& 56.96 & \hilight{74.68} & 60.76 & 83.54 & 84.81 & \textbf{86.08} & \hilite{\textbf{86.08}} \\
 GPT-5.4
& 67.09 & 84.81 & \hilight{89.87} & 83.54 & 89.87 & 91.14 & \textbf{93.67} & \hilite{\textbf{93.67}}
& 82.28 & \hilight{88.61} & 79.75 & 87.34 & 87.34 & {87.34} & \hilite{\textbf{89.87}} \\
\midrule 
\multicolumn{16}{c}{MMTU}\\
\midrule
QWEN3-8B
& 30.43 & 36.96 & \hilight{45.65} & 35.87 & 44.57 & 46.20 & 48.91 & \hilite{\textbf{52.17}}
& 25.27 & \hilight{29.35} & 28.26 & 31.52 & 30.43 & 31.52 & \hilite{\textbf{33.70}} \\
 Ministral-3-8B
& 28.80 & 29.35 & \hilight{36.96} & 27.17 & 32.61 & 48.91 & 52.17 & \hilite{\textbf{54.35}}
& 27.17 & \hilight{29.89} & 28.80 & 32.34 & 33.70 & 34.78 & \hilite{34.78}\\
 GPT-4.1-nano
& 50.00 & 53.26 & \hilight{69.57} & 63.04 & 72.83 & 76.09 & \textbf{79.35} & \hilite{\textbf{79.35}}
& 52.17 & \hilight{65.22} & 57.61 & 70.65 & 75.00 & 76.09 & \hilite{\textbf{78.26}} \\
 GPT-oss-20B
& 54.35 & 48.91 & \hilight{69.57} & 51.09 & 61.96 & 72.83 & 75.00 & \hilite{\textbf{77.17}}
& 44.57 & \hilight{60.87} & 51.09 & 69.57 & 77.17 & 75.00 & \hilite{\textbf{81.52}} \\
 GPT-5-nano
& 67.39 & 80.43 & \hilight{85.87} & 80.43 & 88.04 & 89.13 & \textbf{90.22} & \hilite{\textbf{90.22}}
& 45.65 & \hilight{80.43} & 47.83 & 81.52 & 82.61 & 82.61 & \hilite{\textbf{85.87}} \\
 GPT-5.4
& 81.72 & 83.87 & \hilight{89.25} & 84.95 & 88.17 & 89.25 & {88.17} & \hilite{\textbf{91.40}}
& 78.49 & \hilight{89.25} & 83.87 & 89.24 & 88.17 & 89.25 & \hilite{\textbf{90.32}} \\
\bottomrule
\end{tabular}}
\vspace{-3mm}
\end{table}
\begin{figure}[t]
    \centering
    \includegraphics[width=\textwidth]{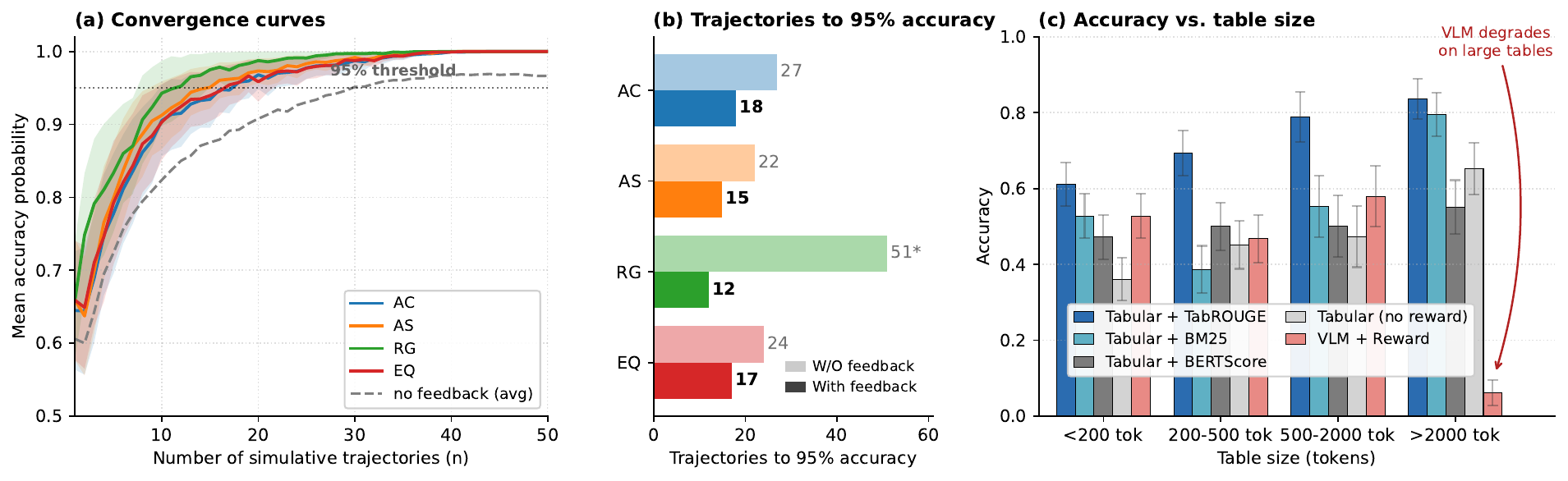}
    \caption{Reward design and table-size effects. \textbf{(a)~\ourmetric{} reaches the 95\% accuracy band with fewest sampled trajectories.} Mean accuracy probability vs number of sampled trajectories under four stepwise rewards with $\pm 1$~SE, along with no-feedback average in grey dashed curve. \textbf{(b)~Per-step \ourmetric{} injection is complementary to TTS voting.} Trajectories required to reach 95\% accuracy with \textbf{dark}/\textbf{light} bars are with/without stepwise feedback. \textbf{(c)~\ourmetric{} is the only deterministic reward that stays smooth across table sizes.} On GPT-4.1-nano, competing rewards dip in the medium-table bin and recover at large sizes, whereas \ourmetric{} tracks accuracy monotonically across small, medium, and large tables.}
    \label{fig:reward-analysis-combined}
    \vspace{-4mm}
\end{figure}

\begin{wraptable}{r}{0.4\linewidth}
\centering
\small
\vspace{-8mm}
\caption{GRPO post-training with \ourmetric{} on QWEN3-8B as the verifiable reward improves over the training-free TTS variant on WTQ and MMQA. }
\label{tab:grpo-preliminary}
\setlength{\tabcolsep}{8pt}
\begin{tabular}{lccc}
\toprule
\textbf{Dataset} & \textbf{Base} & \textbf{TTS} & \textbf{GRPO} \\
\midrule
WTQ & 32.65 & 55.61 & \textbf{77.55} \\
MMQA   & 25.32 & 44.30 & \textbf{58.23} \\
MMTU   & 36.96 & \textbf{52.17} & 41.30 \\
\midrule
Average & 31.64 & 50.69 & \textbf{59.03} \\
\bottomrule
\end{tabular}
\end{wraptable}
\subsection{GRPO Post-Training} \label{sub:grpo-ablation}\vspace{-0.5mm}
We further test whether \ourmetric{} can serve as a deterministic reward for policy-level post-training, beyond its training-free use as a search controller. Under a $4\times$ A100 budget we run an initial GRPO~\cite{shao2024deepseekmathpushinglimitsmathematical} study on QWEN3-8B, training WTQ and MMQA for 300 steps and MMTU for 150 steps (long-valued MMTU tables cause out-of-memory (OOM) errors at the longer schedule). \cref{tab:grpo-preliminary} shows an average 8.34~pp improvement over the training-free TTS baseline. On MMTU, GRPO improves over the base model but does not reach the training-free variant, which we attribute to the OOM-induced halved schedule rather than to a reward limitation. A full multi-model study under uniform compute is left to future work. These results stand as a proof-of-concept that \ourmetric{} can serve as a reward for policy-level post-training, not only a heuristic search add-on. \vspace{-1mm}
\subsection{Failure Modes, Sensitivity, and Why Headline Gains Survive} \label{subsec:failure-modes-main} \vspace{-0.5mm}
We quantify the three failure regimes from \cref{subsec:state-reward} on real trajectories using the \emph{ranking-inversion rate}, the fraction of paired (correct, wrong) intermediate-state pairs whose ordering flips after a semantics-preserving perturbation, since only an inversion can mislead Best-of-$K$ selection. Across $1{,}063$ paired perturbations spanning paraphrase, unit conversion, and derived-evidence categories (F1), \ourmetric{} produces zero inversions against a mean correct-vs-wrong reward gap of $0.20$ (\cref{tab:stress_test}). For F2, the denominator $|\operatorname{Enc}(T)|$ in \cref{eq:tabrouge} absorbs the boost in all but one audited trajectory ($+0.013$, $\sim7\%$ of the reward gap, \cref{tab:echo-column-audit}). F3 produces a $0.8\%$ inversion rate across $N{=}275$ pairs, well below the Wilson half-width of \cref{tab:main_results} (6.86~pp). Per-question-type analysis (\cref{fig:question-type-analysis}) shows the largest gains on arithmetic queries (Sum, Difference, Ratio: $+7$ to $+18$~pp), confirming the lexical formulation does not penalize numeric reasoning. The 26.7~pp within-pipeline gain therefore cannot be explained by these regimes. Detailed cases, VLM-agent comparison, $\beta$-sweep, and per-condition reward-injection results are in \cref{subsec:failure-cases,subsec:echo-column-audit,subsec:stress-tests,subsec:case-study-vlm-appendix,subsec:beta-sensitivity,subsec:question-type-analysis,subsec:reward-injection-detail}.
\section{Discussion and Limitations}\label{sec:discussion}
In this work, we position deterministic state supervision to improve table reasoning quality through dynamic, state-based table understanding. Process-level supervision alone yields a 26.7~pp average accuracy gain across our backbones, with the GRPO results in \cref{sub:grpo-ablation} showing \ourmetric{} also functions as a viable RL reward. We position \ourmetric{} as a deterministic control signal scoped to serialized table-state supervision, not as a substitute for learned or executor-grounded verifiers. The failure regimes in \cref{subsec:failure-modes-main} motivate explicit auditing during deployment but do not undermine the headline gains. Three further limitations bound the current approach.
\begin{itemize}[noitemsep,topsep=2pt,parsep=0pt]
    \item \textbf{Robustness and generalization.} \ourmetric{} depends on a fixed serialization, so formatting or delimiter changes can affect LCS scores without changing table semantics (\cref{sub:delimiter-sensitivity}). Our POMDP analysis is qualitative and does not guarantee reward sufficiency or adversarial robustness. The absence of reproducible execution-grounded and trained-verifier baselines leaves OOD robustness under extreme schema shifts open.
    \item \textbf{Global reward calibration.} \ourmetric{} compares states within the same table-query trajectory but is not calibrated across different tables, calling for a globally calibrated reward in multi-table settings.
    \item \textbf{Precision bias.} In its default precision-leaning configuration, \ourmetric{} favors compact states and may over-penalize auxiliary evidence temporarily useful for later reasoning. A recall-aware hybrid for retrieval-heavy settings is reported in \cref{sec:tabrouge-design,subsec:beta-sensitivity}.
\end{itemize}

Two near-term directions address these gaps. First, composing \ourmodel{} with structural normalization preprocessing mitigates echo-column risk. Second, combining early-state \ourmetric{} pruning with an executor-grounded final-stage verifier inherits the portability of lexical rewards and the precision of execution. More broadly, \ourmetric{} is a natural reward for RL post-training beyond GRPO, and extending tabular state supervision to prediction, imputation, and causal inference~\cite{qu2025tabicl,hollmann2025tabular} is a key future direction.




\bibliographystyle{plainnat}
\bibliography{example_paper}

\newpage
\appendix
\crefalias{section}{appendix}
\crefalias{subsection}{appendix}
\crefalias{subsubsection}{appendix}
\onecolumn
\section{Additional Related Work} \label{sec:additional-related-work}
\paragraph{Concurrent and closely related work.}
Two lines of concurrent work are especially closely related and worth distinguishing explicitly. \textbf{TabTracer}-style systems~\cite{luo2026tabtracermontecarlotree} combine MCTS-style tree search with typed operators and deterministic state hashing to enable rollback. They provide stronger execution-grounded correctness checking but require an orchestration layer that enforces typed constraints on each operation. \ourmodel{}, by contrast, operates on generic serializable states with no typed execution guards, making it applicable to any tool-using agent without infrastructure changes, at the cost of relying on lexical rather than semantic verification. \textbf{Operation-R1}-style approaches~\cite{ding2025orr1automatingmodelingsolving} leverage verifiable operation-wise rewards with RL distillation to produce single-pass pipelines. Their contribution is complementary to ours: they distill a policy through training, whereas \ourmodel{} guides an unmodified policy at test time. A direct head-to-head comparison under shared datasets would quantify the accuracy-versus-training-cost trade-off between these two paradigms and is an important direction for future work. As a practical guide: \ourmetric{} is preferable when (i) no runnable execution environment or domain-specific verifier rules are available, (ii) portability across heterogeneous agent architectures is more important than maximising semantic coverage, or (iii) training data for a learned verifier is scarce. Execution-grounded or trained verifiers (VPRM-style, TableGPT-R1)~\cite{pronesti2026outcomeverificationverifiableprocess,yang2025tablegptr1advancingtabularreasoning} are preferable when an executor and labelled intermediate states are available and the performance ceiling matters more than deployment flexibility.

\section{Theoretical Properties}
\subsection{How unoptimized LLM agent drifts harm performance?} \label{sub:llm-drift-harm}
\begin{proposition}[Variance growth with interaction length]
\label{prop:accuracy-degradation}

Let
\[
Y_S := \ell(\hat{a}(\tau_S), A)
\]
denote a nonnegative loss after $S$ interaction steps, where
$Y_S = 0$ if and only if $\hat{a}(\tau_S)=A$.
Assume that each additional step introduces stochastic variability without
systematic corrective bias, i.e.,
\[
\mathbb{E}[Y_S \mid Y_{S-1}] = Y_{S-1}.
\]

Then the variance of the loss is non-decreasing:
\[
\mathrm{Var}(Y_S) \ge \mathrm{Var}(Y_{S-1}).
\]
\end{proposition}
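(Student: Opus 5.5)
The plan is to use the law of total variance, conditioning $Y_S$ on $Y_{S-1}$. We assume $Y_S$ and $Y_{S-1}$ are square-integrable, i.e.\ $\mathbb{E}[Y_S^2]<\infty$; this is implicit in the statement, since otherwise the variances are not finite numbers. Under that assumption,
\[
\mathrm{Var}(Y_S) = \mathbb{E}\bigl[\mathrm{Var}(Y_S \mid Y_{S-1})\bigr] + \mathrm{Var}\bigl(\mathbb{E}[Y_S \mid Y_{S-1}]\bigr).
\]
The martingale-type hypothesis $\mathbb{E}[Y_S \mid Y_{S-1}] = Y_{S-1}$ makes the second term exactly $\mathrm{Var}(Y_{S-1})$. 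The first term is the expectation of a conditional variance, which is almost surely nonnegative, so its expectation is nonnegative. Adding the two gives $\mathrm{Var}(Y_S)\ge \mathrm{Var}(Y_{S-1})$. The nonnegativity of the loss and the characterization $Y_S=0 \iff \hat{a}(\tau_S)=A$ are interpretive only and are not needed for the inequality.

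An equivalent route avoids conditional variances and works directly with second moments. By the tower property, $\mathbb{E}[Y_S]=\mathbb{E}[Y_{S-1}]$, so it suffices to show $\mathbb{E}[Y_S^2]\ge \mathbb{E}[Y_{S-1}^2]$. Write $Y_S = Y_{S-1} + D_S$, where $D_S$ satisfies $\mathbb{E}[D_S\mid Y_{S-1}]=0$. The cross term is
\[
\mathbb{E}[Y_{S-1}D_S]=\mathbb{E}\bigl[Y_{S-1}\,\mathbb{E}[D_S\mid Y_{S-1}]\bigr]=0.
\]
Hence $\mathbb{E}[Y_S^2]=\mathbb{E}[Y_{S-1}^2]+\mathbb{E}[D_S^2]$, which gives the claim and shows the variance increment is exactly $\mathbb{E}[D_S^2]$.

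The only delicate step is the integrability needed to pull $Y_{S-1}$ out of the conditional expectation in the cross term. Two things are required. First, $Y_{S-1}D_S$ must be integrable, which follows from Cauchy–Schwarz once $Y_S,Y_{S-1}\in L^2$. Second, $Y_{S-1}$ must be $\sigma(Y_{S-1})$-measurable, which holds trivially. I would state the $L^2$ assumption explicitly at the start of the proof.
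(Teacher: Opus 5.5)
Your proposal is correct and matches the paper's proof: both apply the law of total variance conditioned on $Y_{S-1}$, use the martingale hypothesis to identify $\mathrm{Var}(\mathbb{E}[Y_S\mid Y_{S-1}])$ with $\mathrm{Var}(Y_{S-1})$, and drop the nonnegative term $\mathbb{E}[\mathrm{Var}(Y_S\mid Y_{S-1})]$. Your explicit $L^2$ assumption and your second-moment variant, which identifies the increment as $\mathbb{E}[D_S^2]$, are sound refinements of details the paper leaves implicit.
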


\begin{proof}
By the law of total variance,
\begin{align*}
\mathrm{Var}(Y_S)
&= \mathrm{Var}\!\big(\mathbb{E}[Y_S \mid Y_{S-1}]\big)
  + \mathbb{E}\!\big[\mathrm{Var}(Y_S \mid Y_{S-1})\big] \\
&= \mathrm{Var}(Y_{S-1})
  + \mathbb{E}\!\big[\mathrm{Var}(Y_S \mid Y_{S-1})\big].
\end{align*}
Since conditional variances are nonnegative,
\[
\mathbb{E}[\mathrm{Var}(Y_S \mid Y_{S-1})] \ge 0,
\]
which implies
\[
\mathrm{Var}(Y_S) \ge \mathrm{Var}(Y_{S-1}).
\]
\end{proof}

This is a direct application of the law of total variance under the martingale assumption above. We include it to formalize the intuition that, absent corrective feedback, agent loss variance accumulates with interaction length, motivating the need for stepwise reward grounding.

\subsection{How does Explicit Reward improve multi-turn LLM conversation performance}
\paragraph{1. Global State Distillation and Information Compression}
In tool-based TableQA, the observation $o_t$ is spatially limited to a small window of rows and columns, creating a significant information gap where the agent lacks visibility into the ``Global Table State'' $T_t$. Formally, the state entropy remains high given only the observation, $H(T_t \mid o_t) \gg 0$. 
The scalar reward $r_t$ functions as a \textbf{compressed representation} of the entire hidden state. It maps the high-dimensional, unobserved data in $T_t \setminus o_t$ into a single-dimensional metric relative to the query $Q$. By providing $r_t$, the environment effectively ``telegraphs'' the global utility of an operation to the agent. This allows the LLM to make informed decisions about the next action $a_{t+1}$ based on global progress rather than being misled by the local, potentially unrepresentative samples found in the snapshot $o_t$.

In our implementation, this partial observation is not an abstract assumption but a concrete interface constraint. At each step, the agent is shown only a serialized snapshot of the current dataframe rather than the full table, typically capped to the first 20 visible rows together with the currently retained columns. Only actions that explicitly expose the table state, namely the initial load, \texttt{f\_select\_column}, \texttt{f\_select\_row}, and \texttt{f\_print\_table}, refresh this observation for the model. As a result, the agent must reason from a local view that may omit globally relevant evidence, which is exactly why an external reward is useful: it summarizes whether the updated hidden table state is moving toward a more answer-sufficient configuration even when the visible snapshot remains incomplete. Appendix~\ref{subsec:observation-model} provides the full observation protocol and sensitivity discussion.

\paragraph{2. Mitigation of Stochastic Drift ($\epsilon$-Accumulation)}
The tool-based transition is modeled as $T_{t+1} = \mathcal{P}(T_t, a_t, \theta_{a_t}) + \epsilon_t$, where $\epsilon_t$ represents the ``imagination gap'' between the LLM's predicted outcome and the actual tabular transformation. In code-based generation or unsupervised tool-calling, these discrepancies are unobserved, leading to a compounding error $\sum_{l=0}^{S} \epsilon_l$. 
Without the turn-wise scalar $r_t$ to ground the agent, the reasoning trajectory $\tau$ becomes \textbf{brittle}, eventually decoupling from the ground truth. The reward signal acts as a corrective feedback loop, ensuring that the generated parameters $\theta_{a_t}$ are grounded in the physical state of the dataset.

\paragraph{3. Convergence via Density of Search Signals}
Code-based generation often suffers from sparse terminal rewards, whereas our framework utilizes $r_t$ to create a \textbf{dense reward environment}. The trajectory reward $R(\tau) = \sum \gamma_l r_l$ transforms the search into a directed optimization problem. 
The scalar $r_t$ provides a ``gradient'' for the agent. If an action $a_t$ fails to maximize the reward, it provides an immediate signal to prune that reasoning path. This significantly reduces the variance in the terminal prediction $\hat{\alpha}$ by constraining the trajectory space $\mathcal{T}$ to paths with measurable progress, shifting the process from a stochastic linguistic walk to a directed heuristic search.

\paragraph{4. Grounding of Latent State Utility}
As noted in~\cite{wang2025needleinatable}, LLMs fail to attend to all tokens, and many atomic operations (e.g., casting types or re-indexing) result in latent changes to $T_{t+1}$ that are invisible in the snapshot $o_{t+1}$. 
In an unsupervised setting, the agent suffers from \textbf{perceptual aliasing}, where it cannot distinguish between a productive transformation and a redundant one. The scalar reward $r_t$ resolves this by capturing the latent utility of the operation, validating that the table is ``closer'' to the goal state even when the visual observation remains static. This calibration ensures that the reasoning tokens $z_t$ stay aligned with the underlying data structure.

\subsection{Detailed theoretical support for preference under \ourmetric{}} \label{sub:theoretical-support-rouge}
The discussion in this subsection formalizes the same assumptions used in the main text. The goal is not to claim universal semantic optimality, but to characterize what the metric prefers when the encoding is deterministic and lexical overlap is treated as a proxy for retained useful evidence.

\begin{proposition}[Preference for Compact Query-Aligned States]
\label{prop:rouge_optimality}
Given a query $Q$, a candidate set of intermediate table states $\mathcal{T}$ and \ourmetric{} (Equation~\ref{eq:tabrouge}), any maximizer
$$
T^\star \in \arg\max_{T\in\mathcal{T}} s(T;Q)
$$
prefers a ``just-right'' table state which preserves available query-aligned information while excluding removable redundancy under the assumptions above. A maximizer of this score cannot contain removable redundancy: if $\exists T' \subset T^\star$ such that $\Omega(T', Q) = \Omega(T^\star, Q)$, then
\begin{equation*}
    \mathcal{L}(T') < \mathcal{L}(T^\star) \implies \frac{\Omega(T', Q)}{\mathcal{L}(T')} > \frac{\Omega(T^\star, Q)}{\mathcal{L}(T^\star)}
\end{equation*}
which yields a higher score $s(T'; Q) > s(T^\star; Q)$, contradicting the maximality of $T^\star$. At the same time, it cannot omit available query-relevant information. If additional content $\Delta T$ can be added to form $T^+ = T^\star \cup \Delta T$ with overlap gain $\Delta \Omega$ and encoding increase $\Delta \mathcal{L}$, the ratio improves whenever the marginal information density exceeds the current state's density:
\begin{equation*}
    \frac{\Delta \Omega}{\Delta \mathcal{L}} > s(T^\star; Q) \implies s(T^+; Q) > s(T^\star; Q)
\end{equation*}
\end{proposition}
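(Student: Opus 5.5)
We first fix notation so that the proposition becomes a statement about ratios of nonnegative quantities. For a state $T$, write $\Omega(T,Q) := \operatorname{LCS}(Q,\operatorname{Enc}(T))$ for the overlap and $\mathcal{L}(T) := |\operatorname{Enc}(T)|$ for the encoded length. By \cref{eq:tabrouge}, the score is $s(T;Q)=\Omega(T,Q)/\mathcal{L}(T)$. We restrict attention to states with $\mathcal{L}(T)>0$. We also record the standing assumption that the deterministic serialization is monotone: if $T'\subset T$ (fewer rows or columns), then $\operatorname{Enc}(T')$ is a subsequence of $\operatorname{Enc}(T)$. Consequently, a proper sub-table has $\mathcal{L}(T')<\mathcal{L}(T)$ and $\Omega(T',Q)\le \Omega(T,Q)$. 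The strict length inequality requires that every header--value cell contributes at least one token, which the ``\texttt{[header] is [value]}'' encoding guarantees. With these conventions, both claims reduce to elementary inequalities.

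\textbf{No removable redundancy.} We argue by contradiction. Suppose $T^\star$ is a maximizer and $T'\subset T^\star$ lies in $\mathcal{T}$ with $\Omega(T',Q)=\Omega(T^\star,Q)$. Monotonicity gives $\mathcal{L}(T')<\mathcal{L}(T^\star)$.
\begin{itemize}
\item If $\Omega(T^\star,Q)>0$, dividing the common positive numerator by a strictly smaller positive denominator gives $s(T';Q)>s(T^\star;Q)$, contradicting maximality.
\item If $\Omega=0$, the two scores are tied at zero. The claim is vacuous in this case, and we will state this caveat explicitly: the ``just-right'' preference is strict only when some query overlap exists.
\end{itemize}

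\textbf{No omitted available information.} Let $T^+=T^\star\cup\Delta T\in\mathcal{T}$, with $\Delta\Omega=\Omega(T^+,Q)-\Omega(T^\star,Q)$ and $\Delta\mathcal{L}=\mathcal{L}(T^+)-\mathcal{L}(T^\star)>0$. The key step is the mediant inequality: for $b,d>0$,
\[
\frac{a+c}{b+d}>\frac{a}{b}\iff bc>ad\iff \frac{c}{d}>\frac{a}{b}.
\]
Setting $a=\Omega(T^\star,Q)$, $b=\mathcal{L}(T^\star)$, $c=\Delta\Omega$, $d=\Delta\mathcal{L}$ gives exactly
\[
\frac{\Delta\Omega}{\Delta\mathcal{L}}>s(T^\star;Q)\implies s(T^+;Q)>s(T^\star;Q).
\]
Since $T^\star$ is a maximizer, this shows that every available augmentation in $\mathcal{T}$ must satisfy $\Delta\Omega/\Delta\mathcal{L}\le s(T^\star;Q)$. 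This is the precise sense in which a maximizer omits no information whose marginal density exceeds its own.

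\textbf{Main obstacle.} The difficulty is not algebraic but lies in justifying the modeling assumptions. LCS is not additive under concatenation, so $\Delta\Omega$ must be defined as the realized difference rather than as the LCS of $\Delta T$ alone; with that definition, the mediant step is exact. We also need $T',T^+\in\mathcal{T}$, since the maximization is only over the candidate set, and we need the subsequence-monotonicity of $\operatorname{Enc}$ to obtain the strict length drop. I would state these three conditions as the ``assumptions above'' and note that the proposition characterizes local optimality with respect to single removals and additions, not global semantic optimality.
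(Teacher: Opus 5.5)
Your proof is correct. The removal half is the paper's argument, while the addition half takes a different and stronger route.

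\emph{Removal half.} The paper deletes a substring $\Delta$ of $\operatorname{Enc}(T^\star)$ whose removal leaves the LCS unchanged, and notes that the same numerator over $|\operatorname{Enc}(T^\star)|-|\Delta|$ is larger. You work with sub-tables $T'\subset T^\star$ and get the strict length drop from subsequence-monotonicity of the encoding. Your version states two things the paper uses silently: the edited object must be (the encoding of) a candidate in $\mathcal{T}$, and the strict inequality needs $\Omega(T^\star,Q)>0$. One correction: when $\Omega=0$ the stated strict claim is not vacuous but false, since every candidate then ties at $0$. Positivity therefore has to be added as a hypothesis, not just noted as a caveat.

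\emph{Addition half.} Here you genuinely diverge. The paper's proof never establishes the density comparison stated in the proposition. It assumes the LCS gain $\delta$ is at least the added length $|\Gamma|$, and concludes $s(T^+;Q)\ge s(T^\star;Q)$, strictly when $\delta>|\Gamma|$. But added tokens can raise the LCS by at most their number, so $\delta\le|\Gamma|$ always. That hypothesis therefore forces $\delta=|\Gamma|$ (every added token matched), the paper's strict case is empty, and the remaining weak inequality does not by itself contradict maximality. Your mediant step, $\frac{a+c}{b+d}>\frac{a}{b}\iff\frac{c}{d}>\frac{a}{b}$ for $b,d>0$, with $\Delta\Omega$ taken as the realized difference, gives the sharp threshold for arbitrary augmentations. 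It is what actually proves the displayed implication. The paper's computation is the special case of marginal density $1$, where your criterion gives strict improvement exactly when $s(T^\star;Q)<1$.
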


If there exists a substring $\Delta \subset \mathrm{Enc}(T^\star)$ such that
\[
\mathrm{LCS}(Q,\mathrm{Enc}(T^\star))
= \mathrm{LCS}\!\bigl(Q,\mathrm{Enc}(T^\star)\setminus \Delta\bigr),
\]
then removing $\Delta$ yields a new table $\tilde T$ satisfying
\begin{align*}
    r_{\ourmetric{}}(\tilde T;Q)
&= \frac{\mathrm{LCS}(Q,\mathrm{Enc}(T^\star))}
{|\mathrm{Enc}(T^\star)| - |\Delta|}\\
&> r_{\ourmetric{}}(T^\star;Q),    
\end{align*}
contradicting the optimality of $T^\star$. Hence $T^\star$ cannot contain removable redundancy. Conversely, if there exists additional content $\Gamma$ such that appending it to $T^\star$ yields a table $T^+$ with
\[
\mathrm{LCS}(Q,\mathrm{Enc}(T^+))
= \mathrm{LCS}(Q,\mathrm{Enc}(T^\star)) + \delta,
\quad \delta \ge |\Gamma|,
\]
then
\begin{align*}
    r_{\ourmetric{}}(T^+;Q)
& = \frac{\mathrm{LCS}(Q,\mathrm{Enc}(T^\star)) + \delta}
{|\mathrm{Enc}(T^\star)| + |\Gamma|}\\
& \ge r_{\ourmetric{}}(T^\star;Q),    
\end{align*}
with strict inequality when $\delta > |\Gamma|$, again contradicting the maximality of $T^\star$ under this metric. Therefore, \ourmetric{} $r_{\ourmetric{}}(T;Q)$ favors intermediate table states that preserve useful query-aligned information while excluding unnecessary content, under the lexical-overlap assumptions stated above.

\subsection{Conditional monotonicity of ROUGE under query-aligned tool steps}
\begin{proof}\label{proof:rouge-monotonic}
Let $S_k = Enc(T_k)$ denote the textual encoding of the table at step $k$ (following the specification in Section~\ref{sec:textual-encoding}), and let $Q$ denote the token sequence of the query. We model each tool step as an operator that may (i) remove query-irrelevant content or (ii) derive new query-relevant information under correct and successful execution:
\[
S_{k+1} = \Pi_k(S_k),
\]
where $\Pi_k$ either removes tokens independent of $Q$ or appends derived tokens that are functionally determined by existing table entries and relevant to $Q$. Now consider the representation length $L_k := |S_k|$ and overlap $c_k := \mathrm{LCS}(Q,S_k)$. Define the stepwise increments
\[
\Delta L_k := L_{k+1}-L_k,
\qquad
\Delta c_k := c_{k+1}-c_k.
\]
Under pruning steps, $\Delta L_k<0$ and typically $\Delta c_k= 0$ since query-relevant tokens are preserved. 
Under derivation steps, both $\Delta L_k,\Delta c_{k}>0$ due to appended tokens and overlapping content with query tokens.

To formalize the likelihood that length increase is accompanied by sufficient overlap gain, let $U_k=\{u_1,\dots,u_{\Delta L_k}\}$ denote the multiset of tokens newly appended by $\Pi_k$ when $\Delta L_k>0$. Define indicator variables
\[
Z_i := I\{u_i \text{ contributes to an LCS match with } Q\},
\]
so that the overlap gain satisfies the lower bound
\begin{equation} \label{eq:one-word-multiple-overlap}
 \Delta c_k \;\ge\; \sum_{i=1}^{\Delta L_k} Z_i   
\end{equation}
Since derivation is query-conditioned under correct execution, we assume each appended token is query-relevant with probability at least $p$:
\[
\mathbb{E}[Z_i \mid \Delta L_{k}, \mathcal{F}_k] \ge p,
\]
where $\mathcal{F}_k$ denotes the interaction history up to step $k$ (including the current table state and query) and $\Delta L_{k}$ is $\mathcal{F}_{k}$-measurable. Now consider Eq. \ref{eq:one-word-multiple-overlap}, by taking conditional expectations on $\Delta L_{k}$ and $\mathcal{F}_{k}$, we have 
\begin{align*}
    \mathbb{E}[\Delta c_{k}|\Delta L_{k}, \mathcal{F}_{k}]&\geq \mathbb{E}\left(\sum_{i=1}^{\Delta L_{k}}Z_{i}\middle|\Delta L_{k}, \mathcal{F}_{k}\right)\\
    \text{By linearity of expectation, } \\\mathbb{E}\left(\sum_{i=1}^{\Delta L_{k}}Z_{i}\middle|\Delta L_{k}, \mathcal{F}_{k}\right)&=\sum_{i=1}^{\Delta L_{k}}\mathbb{E}(Z_{i}|\Delta L_{k}, \mathcal{F}_{k})\\
    & \geq \sum_{i=1}^{\Delta L_{k}}p=p\Delta L_{k}\\
    \mathbb{E}[\Delta c_{k}|\Delta L_{k},\mathcal{F}_{k}]&\geq p\Delta L_{k} \\
    \text{By law of total expectation,}\\
    \mathbb{E}[\Delta c_{k}|\Delta L_{k}]&=\mathbb{E}[\mathbb{E}[\Delta c_{k}|\Delta L_{k},\mathcal{F}_{k}]]\geq p\Delta L_{k}
\end{align*}
Now, under the assumption of conditional independence between $\{Z_{i}\}$ given $\mathcal{F}_{k}$, we have the Hoeffding's inequality such that for any $\epsilon \in (0,p)$, 
\begin{align*}
    \mathbb{E}(\Delta c_{k}|\Delta L_{k})&\geq p\Delta L_{k} \\
    \Pr\left[\sum_{i=1}^{n}Z_{i}-E\left(\sum_{i=1}^{n}Z_{i}\right)\leq -t\right]&\leq \exp \left(-\frac{2t^{2}}{\sum_{i=1}^{n}(1)^{2}}\right)\\
    \text{By choosing }t=\epsilon n, \\
    \Pr\left[\sum_{i=1}^{n}Z_{i}\leq (p-\epsilon)n\right] &\leq \exp \left(-2\epsilon^{2}n\right) \\
    \Pr\left[\sum_{i=1}^{n}Z_{i}\leq (p-\epsilon)\Delta L_{k}\middle|\Delta L_{k}\right] &\leq \exp \left(-2\epsilon^{2}\Delta L_{k}\right)
\end{align*}
Consider the extreme events between $\{S_{n} \leq (p-\epsilon)n\}$ and $\{\Delta c_{k} \leq (p-\epsilon)n\}$, for $S_{n}\leq \Delta c_{k}$, $\{\Delta c_{k} \leq (p-\epsilon)n\}\subseteq\{S_{n} \leq (p-\epsilon)n\}$ given a fixed $(p-\epsilon)n$. Therefore, we have 
\begin{align*}
    \Pr\left[\Delta c_{k}\leq (p-\epsilon)\Delta L_{k}\middle|\Delta L_{k}\right] &\leq \Pr\left[\sum_{i=1}^{n}Z_{i}\leq (p-\epsilon)\Delta L_{k}\middle|\Delta L_{k}\right] \\
    \Pr\left[\Delta c_{k}\leq (p-\epsilon)\Delta L_{k}\middle|\Delta L_{k}\right] & \leq \exp \left(-2\epsilon^{2}\Delta L_{k}\right)
\end{align*}
Therefore, as long as $p$ is not too small, the probability that a derivation step substantially increases length without adding commensurate query-overlap decays exponentially in the number of appended tokens. Letting $\widetilde{\mathrm{ROUGE\text{-}L}}(Q,S_k) := \mathrm{LCS}(Q,S_k)/|S_k|$ denote the precision-form score at step $k$, and combining this with the condition for monotone improvement of this surrogate,
\[
\Delta c_k \ge \widetilde{\mathrm{ROUGE\text{-}L}}(Q,S_k)\cdot \Delta L_k
\Longleftrightarrow \widetilde{\mathrm{ROUGE\text{-}L}}(Q,S_{k+1}) \ge \widetilde{\mathrm{ROUGE\text{-}L}}(Q,S_k),
\]
we conclude that, under query-conditioned derivation and typical navigation pruning, the expectation of $\widetilde{\mathrm{ROUGE\text{-}L}}$ is monotonically non-decreasing as $k$ increases. We emphasize that this is a result under the stated modeling abstraction: legitimate transformations that compute new answer values not lexically present in $Q$ (e.g., a derived numeric column whose surface tokens do not appear in the query) fall outside this regime and may temporarily reduce overlap before downstream lookup steps recover it.
\end{proof}

\section{Textual Encoding Details} \label{sec:textual-encoding}
In our experiment, we utilize the exact format as reference to~\cite{borisov2023language}. Given a tabular data set of $m$ columns with feature names $f_1, f_2, \dots, f_m$ and $n$ rows of samples $\mathbf{s}_1, \dots, \mathbf{s}_n$, we let the entry $v_{i,j}, i \in \{1, \dots, n\}, j \in \{1, \dots, m\}$ represent the value of the $j$-th feature of the $i$-th data point. Taking the feature name and value into account, each sample $\mathbf{s}_i$ of the table is transformed into a textual representation $\mathbf{t}_i$ using the following subject-predicate-object transformation:

\begin{align}
    t_{i,j} &= [f_j, \text{``is''}, v_{i,j}, \text{``,''}] & \forall i \in \{1, \dots, n\}, j \in \{1, \dots, m\}, \label{eq:1} \\
    \mathbf{t}_i &= [t_{i,1}, t_{i,2}, \dots, t_{i,m}] & \forall i \in \{1, \dots, n\}, \label{eq:2}
\end{align}
where $t_{i,j}$, the textually encoded feature, is a clause with information about a single value and its corresponding feature name, and $[\cdot]$ denotes the concatenation operator. We define $Enc(T)$ in Section \ref{subsec:state-reward} as $\mathbf{t}_{i}$. We conduct sensitivity analysis on different delimiters in Section \ref{sub:delimiter-sensitivity}.

\section{Design Rationale for Lexical \ourmetric{}} \label{sec:tabrouge-design}

This section expands on the design choices summarized in the main text. We describe (i) why the reward is intentionally lexical rather than embedding-based, (ii) the lexical-vs-semantic tradeoff, (iii) the optional recall-aware hybrid, and (iv) the scope of the metric across tables.

\paragraph{Why a lexical reward.} For intermediate table control, lexical matching is preferable to embedding-based similarity for three practical reasons. First, it preserves exact numeric and symbolic strings that are often decisive in table reasoning, whereas dense semantic similarity can smooth over denotationally important differences. Second, it avoids truncating long serialized tables to the context limit of an embedding model, which would destroy state information before scoring. Third, it is deterministic and cheap enough to evaluate after every transformation, which makes it suitable for trajectory-time search control rather than only post-hoc answer checking.

\paragraph{Lexical-vs-semantic tradeoff.} \ourmetric{} is a lexical, deterministic reward rather than a semantic equivalence model. This is an intentional design tradeoff. Lexical matching is lightweight, reproducible, and training-free, making it practical for repeated reward evaluation inside multi-step trajectories. At the same time, it may under-reward semantically useful but lexically mismatched states, including paraphrases, schema renamings, implicit computations, derived evidence, and unit-converted evidence. This is especially relevant for legitimate table reasoning transformations such as computing a new column, normalizing a unit, or renaming a header into a more operational form: these steps can be necessary for solving the problem even when they temporarily reduce direct lexical overlap with $Q$. In our framework, however, \ourmetric{} is used primarily to evaluate \emph{relative improvement across steps} rather than to make an absolute yes-or-no judgment about whether a single table state is fully correct. This relative-progress role makes the reward more robust in practice: even when the signal is imperfect in isolation, it remains useful for identifying whether a transformation moves the trajectory toward a more query-aligned state. We therefore position \ourmetric{} as a compact, deterministic progress signal for pruning-centric search, not as a complete solution to semantic adequacy in table reasoning. The encoding penalizes bloat through the denominator, so duplicated text can still increase $|Enc(T)|$ and lower the score.

\paragraph{Recall-aware hybrid.} The default formulation is deliberately precision-oriented, which is useful in pruning-centric trajectories where the agent progressively narrows the table. For retrieval-heavy settings we evaluate a recall-aware hybrid
\[
r_{\text{hybrid}}=\beta\cdot r_{\ourmetric{}}+(1{-}\beta)\cdot r_{\text{rec}},\qquad r_{\text{rec}}=\frac{LCS(Q,Enc(T))}{|Q|}.
\]
We retain $\beta{=}1$ (pure precision) in all main experiments and recommend $\beta{<}1$ when recall over candidate evidence dominates the task. The full $\beta$ sweep is reported in Appendix~\ref{subsec:beta-sensitivity}.

\paragraph{Trajectory-local scope.} \ourmetric{} is conditioned on both the query and the current table state and is not intended for cross-table comparison. Instead, it serves as a trajectory-local progress signal, where relative improvements indicate that a transformation moves the table toward a more sufficient and minimal representation for answering the query.

\section{Atomic Operation Tools} \label{sec:detail-operation}
\begin{table}[H]
\centering
\caption{Summary of the list for atomic operations. Most tools are referenced from Chain-of-Table~\cite{wang2024chainoftable}, except for two additional format-related functions for Python, including string and datetime operations.}
\label{tab:tool_summary}
\resizebox{0.95\textwidth}{!}{
\begin{tabular}{l p{4.2cm} p{8.5cm}}
\toprule
\textbf{Tool Name} & \textbf{Key Arguments} & \textbf{Functionality} \\
\midrule

\texttt{print\_table} 
& N/A 
& Displays the current table snapshot for inspection without modifying the environment. \\

\texttt{f\_get\_data\_info} 
& show\_missing 
& Reports dataset shape, column names, data types, and missing-value statistics. \\

\texttt{f\_select\_column} 
& column\_names, reasoning 
& Projects the table onto a subset of relevant columns while validating column existence and logging agent reasoning. \\

\texttt{f\_select\_row} 
& indices, condition, lookup, reasoning 
& Selects rows using explicit indices or symbolic conditions; supports lookup-only mode and robust expression parsing. \\

\texttt{f\_sort\_by} 
& columns, order 
& Sorts the table lexicographically by one or more columns in ascending or descending order. \\

\texttt{f\_aggregate} 
& column/op or columns\_ops, group\_by, show\_stats\_only 
& Performs statistical aggregation (count, sum, avg, min, max, diff), supporting multi-column and grouped computation. \\

\texttt{f\_compute\_column} 
& new\_col, col\_A, op, col\_B, element\_wise, special\_op 
& Constructs new features via scalar or element-wise arithmetic, including specialized operations such as grouped time differences. \\

\texttt{f\_string\_operation} 
& columns, operation, start/end, replace, concat, new\_columns 
& Performs symbolic string transformations including casing, substring extraction, concatenation, replacement, and type conversion. \\

\texttt{f\_process\_datetime} 
& columns, operation, parse\_format, timezone, new\_columns 
& Parses, transforms, and extracts temporal attributes from datetime columns. \\

\texttt{f\_join} 
& join\_type, left\_on, right\_on, cross\_join, suffix, reasoning 
& Combines multiple tables via relational join or Cartesian product with column conflict resolution. \\

\texttt{f\_final\_answer} 
& answer 
& Submits the final answer and terminates the agent’s reasoning trajectory. \\

\bottomrule
\end{tabular}
}
\end{table}

\section{Additional Experimental Results}
\subsection{Supporting Experiments for Main Results Table~\ref{tab:main_results}} \label{sub:supporting-main-results}
\subsubsection{Empirical Validation for Using Pre-defined Tools} We provide empirical support for using pre-defined tools instead of generating code directly. By quantifying generation variance through the entropy of outcome distributions under repeated sampling, we assign each generation one of three discrete outcomes: \textit{Correct \& Executable}, \textit{Executable but Incorrect}, or \textit{Compilation Failure}. For each task, we compute the entropy of the resulting multinomial distribution across multiple independent runs, where a more positive entropy corresponds to more consistent and stable generation behavior. As shown in Table~\ref{tab:lower-variance}, both Chain-of-Table and Tree-of-Table achieve substantially more positive generation entropy compared to unconstrained code generation across all datasets. This indicates that restricting the model’s action space to structured tool operations significantly stabilizes the generation process, leading to lower outcome variability while preserving task performance.
\begin{table}[H]
\centering
\caption{We estimate generation variance using the entropy of generation outcomes. More positive entropy indicates more consistent and stable generation behavior. Tool-constrained reasoning (Chain-of-Table and Tree-of-Table) yields substantially more positive entropy across all datasets.  Values are reported as mean $\pm$ standard deviation over tasks.}
\label{tab:lower-variance}
\resizebox{0.75\columnwidth}{!}{
\begin{tabular}{c|ccc}
\toprule
\multirow{2}{*}{Dataset} & \multicolumn{3}{c}{Generation Entropy} \\
\cmidrule(lr){2-4}
& Code Generation & {Chain-of-Table (Tools)} & {Tree-of-Table (Tools)} \\
WikiTQ & -0.9316 $\pm$ 0.0472 & -0.6068 $\pm$ 0.0954 & -0.6090 $\pm$ 0.1044\\
MMQA & -0.9156 $\pm$ 0.1175 & -0.6430 $\pm$ 0.0336 & -0.6344 $\pm$ 0.0659 \\
MMTU & -0.9078 $\pm$ 0.1003 & -0.6024 $\pm$ 0.1179 & -0.6195 $\pm$ 0.0191\\
\bottomrule
\end{tabular}
}
\end{table}

\subsubsection{Correlation and Calibration of \ourmetric{}}\label{subsec:reward-validation}
To validate that \ourmetric{} tracks meaningful progress rather than superficial lexical overlap, we measure the association between terminal reward values and final answer correctness over $n{=}1{,}839$ sampled trajectory steps from WikiTQ, MMQA, and MMTU. We find a Spearman correlation of $\rho{=}0.287$ ($p{<}10^{-35}$) and a Kendall correlation of $\tau{=}0.236$ ($p{<}10^{-34}$), with both tests indicating statistical significance. This shows that higher-reward states are consistently associated with better final answers even though the reward is computed without a learned verifier. The moderate rather than strong correlation reflects the intended design as a compact progress signal over all intermediate states (including low-reward early steps). Restricting to terminal states only yields a stronger association.

We further summarize the empirical calibration of \ourmetric{} by binning all $n{=}1{,}690$ trajectory steps from WikiTQ, MMQA, and MMTU according to terminal reward. As shown in Table~\ref{tab:reward-calibration}, the probability of correctness increases broadly with higher reward ranges. Trajectories with zero reward are substantially less reliable ($P(\text{correct}){=}0.22$), whereas trajectories in the $(0.15, 0.25]$ range reach $P(\text{correct}){=}0.69$. A small non-monotone dip appears at $(0.10, 0.15]$ due to the limited sample size in that bin ($n{=}53$), consistent with wide confidence intervals $[0.27, 0.53]$ that overlap neighboring bins. This broadly positive calibration supports using \ourmetric{} for trajectory ranking and early stopping, even though the signal is moderate rather than perfectly monotone.

\begin{table}[H]
\centering
\caption{Calibration of terminal \ourmetric{} against final answer correctness over $n{=}1{,}690$ trajectory steps from WikiTQ, MMQA, and MMTU (the $n{=}1{,}839$ figure in the main text includes all intermediate steps. Here we report terminal steps with non-empty bins). Higher reward ranges correspond to broadly increasing empirical probability of correctness, supporting \ourmetric{} as a meaningful trajectory-selection signal. The non-monotone dip at $(0.10,0.15]$ reflects the small sample size ($n{=}53$) in that bin.}
\label{tab:reward-calibration}
\resizebox{0.82\columnwidth}{!}{
\begin{tabular}{lccccc}
\toprule
\ourmetric{} range & Total & Correct & $P(\mathrm{correct})$ & 95\% CI \\
\midrule
$0$             & 822 & 177 & 0.215 & [0.188, 0.244] \\
$(0, 0.02]$     & 491 & 187 & 0.381 & [0.339, 0.424] \\
$(0.02, 0.05]$  & 236 &  94 & 0.398 & [0.337, 0.462] \\
$(0.05, 0.10]$  &  75 &  40 & 0.533 & [0.421, 0.643] \\
$(0.10, 0.15]$  &  53 &  21 & 0.396 & [0.273, 0.531] \\
$(0.15, 0.25]$  &  13 &   9 & 0.692 & [0.423, 0.886] \\
\bottomrule
\end{tabular}
}
\end{table}

\subsubsection{Standard Deviation of Main Result (Table \ref{tab:main_results})}
\begin{table}[H]
\centering
\caption{Performance Standard Deviation(\%) across datasets, models, workflows, and inference settings. Column in \textcolor{brown}{yellow} indicates performance after incorporating reward feedback in \textcolor{brown}{single-turn inference}. Column in \textcolor{teal}{green} indicates performance with \textcolor{teal}{full \ourmodel{} incorporation}.}
\label{tab:main_results_sd}

\setlength{\tabcolsep}{3pt}
\renewcommand{\arraystretch}{1.15}

\resizebox{0.9\textwidth}{!}{%
\begin{tabular}{l|c|ccccccc|ccccccc}
\toprule
\multirow{1}{*}{Model}
& \textbf{Code} & \multicolumn{7}{c|}{\textbf{Chain-of-Table (Tools)}} 
& \multicolumn{7}{c}{\textbf{Tree-of-Table (Tools)}} \\
\cmidrule(lr){1-16}
 
ST Reward & \cross & \cross & \hilight{\tick} & \cross & \tick & \tick & \tick & \hilite{\tick}
 & \cross & \hilight{\tick} & \cross & \tick & \tick & \tick & \hilite{\tick}\\
 SR Reward & \cross & \cross & \hilight{\cross} & EQ & EQ & AC & AS & \hilite{RG} & \cross & \hilight{\cross} & EQ & EQ & AC & AS & \hilite{RG}\\
\midrule
\multicolumn{16}{c}{WikiTQ}\\
\midrule
 QWEN3-8B & \textbf{3.32} & \textbf{3.32} & \hilight{3.53} & 3.37 & 3.53 & 3.53 & 3.49 & \hilite{3.51} & \textbf{3.39} & \hilight{3.52} & 3.52 & 3.50 & 3.53 & 3.50 & \hilite{3.52} \\
 Ministral-3-8B & 3.39 & 3.52 & \hilight{3.53} & 3.50 & 3.50 & \textbf{3.32} & \textbf{3.32} & \hilite{\textbf{3.32}} & 3.50 & \hilight{3.54} & 3.53 & 3.52 & \textbf{3.40} & \textbf{3.40} & \hilite{3.41} \\
 GPT-4.1-nano & 3.52 & 3.48 & \hilight{3.32} & 3.40 & 3.26 & 3.04 & 2.95 & \hilite{\textbf{2.74}} & 3.52 & \hilight{3.37} & 3.49 & 3.45 & 3.30 & 3.26 & \hilite{\textbf{3.19}} \\
 GPT-oss-20B & 3.51 & 3.43 & \hilight{3.26} & 3.37 & 3.19 & 2.74 & \textbf{2.32} & \hilite{2.47} & 3.50 & \hilight{3.34} & 3.26 & 2.61 & 2.61 & \textbf{2.47} & \hilite{\textbf{2.47}} \\
 GPT-5-nano & 3.53 & 2.85 & \hilight{\textbf{2.14}} & 2.74 & 2.47 & \textbf{2.14} & \textbf{2.14} & \hilite{\textbf{2.14}} & 2.85 & \hilight{2.61} & 2.47 & 2.47 & \textbf{1.94} & \textbf{1.94} & \hilite{\textbf{1.94}} \\

\midrule
\multicolumn{16}{c}{MMQA}\\
\midrule
 QWEN3-8B
& \textbf{3.02} & \textbf{3.07} & \hilight{3.47} & 3.12 & 3.35 & 3.43 & 3.49 & \hilite{3.51} & \textbf{3.37} & \hilight{3.50} & 3.43 & 3.49 & 3.45 & 3.45 & \hilite{3.52} \\
 Ministral-3-8B
& \textbf{2.97} & \textbf{3.17} & \hilight{3.50} & \textbf{3.17} & 3.47 & 3.50 & 3.43 & \hilite{3.54} & \textbf{3.38} & \hilight{3.53} & 3.43 & 3.54 & 3.54 & 3.51 & \hilite{3.50} \\
 GPT-4.1-nano
& \textbf{2.97} & 3.32 & \hilight{3.45} & 3.54 & 3.38 & 3.41 & 3.29 & \hilite{3.32} & 3.29 & \hilight{3.49} & 3.53 & 3.50 & 3.32 & \textbf{3.29} & \hilite{\textbf{3.12}} \\
 GPT-oss-20B
& 3.35 & 3.51 & \hilight{3.17} & 3.50 & 3.12 & 3.07 & 2.91 & \hilite{\textbf{2.62}} & 3.45 & \hilight{3.07} & 3.41 & 3.02 & 2.97 & \textbf{2.62} & \hilite{2.84} \\
 GPT-5-nano
& 3.49 & 3.54 & \hilight{3.41} & 3.45 & 3.21 & 2.84 & 2.70 & \hilite{\textbf{2.62}} & 3.50 & \hilight{3.07} & 3.45 & 2.62 & 2.54 & \textbf{2.45} & \hilite{\textbf{2.45}} \\

\midrule 
\multicolumn{16}{c}{MMTU}\\
\midrule
QWEN3-8B
& \textbf{3.25} & 3.41 & \hilight{3.52} & \textbf{3.39} & 3.51 & 3.53 & 3.53 & \hilite{3.53} & \textbf{3.07} & \hilight{3.22} & 3.18 & 3.29 & 3.25 & 3.29 & \hilite{3.34} \\
 Ministral-3-8B
& 3.20 & 3.22 & \hilight{3.41} & \textbf{3.15} & 3.31 & 3.53 & 3.53 & \hilite{3.52} & \textbf{3.15} & \hilight{3.24} & 3.20 & 3.31 & 3.34 & 3.37 & \hilite{3.37} \\
 GPT-4.1-nano
& 3.54 & 3.53 & \hilight{3.25} & 3.41 & 3.15 & 3.02 & \textbf{2.86} & \hilite{\textbf{2.86}} & 3.53 & \hilight{3.37} & 3.49 & 3.22 & 3.06 & \textbf{2.88} & \hilite{2.92} \\
 GPT-oss-20B
& 3.52 & 3.53 & \hilight{3.25} & 3.53 & 3.43 & 3.14 & 3.06 & \hilite{\textbf{2.97}} & 3.51 & \hilight{3.45} & 3.53 & 3.25 & 2.97 & 3.06 & \hilite{\textbf{2.74}} \\
 GPT-5-nano
& 3.31 & 2.81 & \hilight{2.46} & 2.81 & 2.29 & 2.20 & \textbf{2.10} & \hilite{\textbf{2.10}} & 3.52 & \hilight{2.81} & 3.53 & 2.74 & 2.68 & 2.68 & \hilite{\textbf{2.47}} \\

\bottomrule
\end{tabular}}
\end{table}

\subsubsection{BLEU and ROUGE Score of Main Result (Table \ref{tab:main_results})} \label{sub:blue-rouge}
\begin{table}[H]
\centering
\caption{BLEU and ROUGE-L performance (\%) across datasets, models, workflows, and inference settings. Although \ourmodel{} consistently improves reasoning quality, the lack of strict output formatting control sometimes causes the LLM to produce full sentences rather than exact answer phrases, which artificially lowers both metrics despite the answers being semantically correct.}
\label{tab:blue-rouge}

\setlength{\tabcolsep}{3pt}
\renewcommand{\arraystretch}{1.15}

\resizebox{0.9\textwidth}{!}{%
\begin{tabular}{l|c|ccccccc|ccccccc}
\toprule
\multirow{1}{*}{Model}
& \textbf{Code} & \multicolumn{7}{c|}{\textbf{Chain-of-Table (Tools)}} 
& \multicolumn{7}{c}{\textbf{Tree-of-Table (Tools)}} \\
\cmidrule(lr){1-16}
 
ST Reward & \cross & \cross & \hilight{\tick} & \cross & \tick & \tick & \tick & \hilite{\tick}
 & \cross & \hilight{\tick} & \cross & \tick & \tick & \tick & \hilite{\tick}\\
 SR Reward & \cross & \cross & \hilight{\cross} & EQ & EQ & AC & AS & \hilite{RG} & \cross & \hilight{\cross} & EQ & EQ & AC & AS & \hilite{RG}\\
\midrule
\multicolumn{16}{c}{Metric: \textbf{BLEU}}\\
\midrule
\multicolumn{16}{c}{WikiTQ}\\
\midrule
 QWEN3-8B & 28.74 & 21.98 & \hilight{29.70} & 14.14 & 22.44 & 21.98 & \textbf{30.00} & \hilite{27.77} & \textbf{33.55} & \hilight{31.43} & 20.91 & 23.07 & 25.57 & 25.47 & \hilite{26.29}\\
 Ministral-3-8B & 24.81 & 15.10 & \hilight{21.52} & 9.07 & 17.15 & 15.10 & 18.42 & \hilite{\textbf{27.98}} & 25.61 & \hilight{\textbf{24.85}} & 14.07 & 19.89 & 18.88 & 22.29 & \hilite{21.82} \\
 GPT-4.1-nano & 36.74 & 13.38 & \hilight{4.65} & 14.53 & 15.26 & 46.26 & 50.26 & \hilite{\textbf{59.44}} & 56.33 & \hilight{\textbf{60.49}} & 10.39 & 29.12 & 40.99 & 51.45 & \hilite{42.60} \\
 GPT-oss-20B & \textbf{48.74} & 39.84 & \hilight{26.79} & 35.93 & 40.30 & 37.60 & 38.24 & \hilite{43.57} & 45.15 & \hilight{44.90} & 26.79 & 39.95 & 39.95 & 40.27 & \hilite{40.27} \\
 GPT-5-nano & 36.74 & 23.71 & \hilight{18.83} & 57.23 & \textbf{59.98} & 55.00 & 57.44 & \hilite{58.44} & {58.44} & \hilight{58.44} & 60.23 & 62.21 & \textbf{63.19} & \textbf{63.19} & \hilite{\textbf{63.19}}\\
 GPT-5.4 & 60.32 & 66.35 & \hilight{66.35} & 66.35 & 69.05 & \textbf{70.02} & 69.23 & \hilite{\textbf{70.02}} & 61.58 & \hilight{66.99} & 67.35 & 67.99 & \textbf{73.87} & 72.76 & \hilite{72.58} \\

\midrule
\multicolumn{16}{c}{MMQA}\\
\midrule
 QWEN3-8B
& 22.78 & 15.52 & \hilight{19.67} & 23.45 & 22.21 & 15.92 & 20.97 & \hilite{\textbf{24.12}} & 29.11 & \hilight{\textbf{29.95}} & 24.19 & 26.63 & 24.30 & 26.34 & \hilite{22.44} \\
 Ministral-3-8B
& 15.54 & 6.33 & \hilight{12.54} & \textbf{23.45} & 20.86 & 6.33 & 10.83 & \hilite{12.82} & 9.52 & \hilight{12.68} & 24.19 & \textbf{26.35} & 25.11 & 24.65 & \hilite{20.78} \\
 GPT-4.1-nano
& 16.46 & 26.91 & \hilight{\textbf{45.16}} & 23.93 & 26.82 & 33.76 & 45.16 & \hilite{43.90} & 46.43 & \hilight{45.16} & 41.20 & 45.90 & 47.80 & \textbf{50.96} & \hilite{48.43} \\
 GPT-oss-20B
& 30.96 & 31.11 & \hilight{\textbf{49.80}} & 34.65 & 32.86 & 31.11 & 48.54 & \hilite{49.58} & 52.66 & \hilight{\textbf{53.97}} & 34.65 & 40.18 & 36.70 & 37.97 & \hilite{40.10} \\
 GPT-5-nano
& 35.44 & 37.46 & \hilight{45.41} & 14.30 & 49.20 & 42.21 & 48.65 & \hilite{\textbf{54.61}} & 56.47 & \hilight{56.40} & 33.50 & 58.78 & 61.25 & 61.48 & \hilite{\textbf{61.67}} \\
 GPT-5.4
& 54.71 & 63.01 & \hilight{65.45} & 63.01 & 65.45 & 66.67 & \textbf{69.11} & \hilite{\textbf{69.11}} & 58.91 & \hilight{61.35} & 56.06 & 61.75 & 61.75 & 61.75 & \hilite{\textbf{62.97}} \\

\midrule
\multicolumn{16}{c}{MMTU}\\
\midrule
QWEN3-8B
& 11.96 & 5.43 & \hilight{8.27} & \textbf{13.41} & 12.06 & 3.94 & 6.50 & \hilite{9.41} & 8.67 & \hilight{10.45} & \textbf{17.21} & 13.32 & 14.41 & 13.32 & \hilite{15.50}\\
 Ministral-3-8B
& \textbf{14.13} & 5.92 & \hilight{8.82} & 4.57 & 7.56 & 7.01 & 5.78 & \hilite{8.27} & 11.68 & \hilight{8.82} & 8.70 & 8.64 & 11.06 & 9.33 & \hilite{10.93} \\
 GPT-4.1-nano
& 17.39 & 13.99 & \hilight{16.70} & 13.04 & 4.27 & 16.70 & 16.70 & \hilite{\textbf{19.42}} & 19.97 & \hilight{19.97} & 8.67 & 17.39 & 19.02 & 17.93 & \hilite{\textbf{20.65}} \\
 GPT-oss-20B
& \textbf{21.74} & 17.25 & \hilight{18.72} & 15.39 & 16.79 & 17.88 & 16.49 & \hilite{17.10} & 17.63 & \hilight{17.57} & 16.30 & 16.26 & 18.66 & 17.57 & \hilite{17.75}\\
 GPT-5-nano
& 17.61 & 20.42 & \hilight{20.39} & 17.26 & 18.20 & 19.88 & 21.47 & \hilite{\textbf{22.83}} & 21.47 & \hilight{21.47} & 18.06 & 19.56 & \textbf{22.06} & 19.83 & \hilite{21.19}\\
 GPT-5.4
& 25.53 & 26.02 & \hilight{\textbf{28.42}} & 26.02 & 27.89 & \textbf{28.42} & 27.89 & \hilite{\textbf{28.42}} & 23.76 & \hilight{25.71} & 24.82 & 26.77 & 26.77 & \textbf{27.83} & \hilite{\textbf{27.83}}\\
\midrule
\multicolumn{16}{c}{Metric: \textbf{ROUGE-L}}\\
\midrule
\multicolumn{16}{c}{WikiTQ}\\
\midrule
 QWEN3-8B & 30.03 & 26.33 & \hilight{34.73} & 21.02 & 25.13 & 26.33 & {33.50} & \hilite{\textbf{35.83}} & \textbf{38.73} & \hilight{37.39} & 26.13 & 28.41 & 32.07 & 30.94 & \hilite{35.67}\\
 Ministral-3-8B & 26.18 & 18.96 & \hilight{25.57} & 16.60 & 24.57 & 18.52 & 21.49 & \hilite{\textbf{31.81}} & \textbf{31.39} & \hilight{29.97} & 22.10 & 26.91 & 28.08 & 30.24 & \hilite{29.59} \\
 GPT-4.1-nano & 37.94 & 17.25 & \hilight{8.67} & 18.08 & 23.65 & 51.60 & 55.60 & \hilite{\textbf{63.83}} & 61.49 & \hilight{\textbf{65.82}} & 18.45 & 37.88 & 46.21 & 56.50 & \hilite{46.22} \\
 GPT-oss-20B & \textbf{50.08} & 42.43 & \hilight{30.55} & 41.55 & 45.92 & 41.26 & 43.37 & \hilite{49.16} & \textbf{51.54} & \hilight{51.10} & 30.55 & 43.82 & 43.82 & 44.93 & \hilite{44.93} \\
 GPT-5-nano & 38.75 & 30.36 & \hilight{27.56} & 63.62 & 66.36 & 60.13 & 63.43 & \hilite{\textbf{64.76}} & 64.76 & \hilight{64.76} & 66.95 & 68.72 & 70.63 & 70.63 & \hilite{\textbf{70.63}} \\
 GPT-5.4 & 64.08 & 73.33 & \hilight{73.33} & 73.33 & 75.99 & \textbf{77.27} & 76.63 & \hilite{\textbf{77.27}} & 69.36 & \hilight{73.01} & 74.71 & 75.67 & \textbf{80.16} & 79.00 & \hilite{79.26} \\

\midrule
\multicolumn{16}{c}{MMQA}\\
\midrule
 QWEN3-8B
& 23.10 & 17.03 & \hilight{25.13} & \textbf{29.85} & {27.66} & 17.60 & 23.23 & \hilite{26.10} & 30.98 & \hilight{\textbf{32.78}} & 31.40 & 31.02 & 28.64  & 25.36 & \hilite{30.58} \\
 Ministral-3-8B
& 16.69 & 8.62 & \hilight{17.88} & \textbf{29.85} & 26.32 & 9.04 & 16.45 & \hilite{18.51} & 13.43 & \hilight{18.64} & 31.40 & \textbf{31.43} & 31.26 & 27.53 & \hilite{30.78}\\
 GPT-4.1-nano
& 17.50 & 29.79 & \hilight{\textbf{49.52}} & 28.92 & 34.90 & 36.89 & 49.52 & \hilite{48.54} & 50.79 & \hilight{49.52} & 45.29 & 50.17 & 52.91 & 53.97 & \hilite{\textbf{56.50}} \\
 GPT-oss-20B
& 32.46 & 34.20 & \hilight{53.46} & 36.78 & 39.63 & 34.20 & 52.19 & \hilite{\textbf{53.68}} & 56.26 & \hilight{\textbf{58.00}} & 36.78 & 43.89 & 40.83 & 43.27 & \hilite{41.86} \\
 GPT-5-nano
& 37.34 & 40.87 & \hilight{49.45} & 25.19 & 56.44 & 45.86 & 52.90 & \hilite{\textbf{59.33}} & 61.12 & \hilight{61.12} & 39.15 & 64.77 & 66.56 & 66.70 & \hilite{\textbf{67.01}}\\
 GPT-5.4
& 56.68 & 68.68 & \hilight{70.81} & 68.68 & 70.81 & 72.03 & 74.11 & \hilite{\textbf{74.43}} & 65.05 & \hilight{67.32} & 61.56 & 68.65 & 68.42 & 68.27 & \hilite{\textbf{69.87}}\\

\midrule
\multicolumn{16}{c}{MMTU}\\
\midrule
QWEN3-8B
& \textbf{15.66} & 9.71 & \hilight{12.42} & 14.33 & 12.88 & 8.04 & 10.97 & \hilite{13.60} & 12.78 & \hilight{14.59} & 18.31 & 14.51 & 15.60 & 14.52 & \hilite{\textbf{16.68}}\\
 Ministral-3-8B
& \textbf{16.02} & 8.14 & \hilight{11.72} & 7.16 & 9.69 & 9.23 & 8.14 & \hilite{12.98} & 15.53 & \hilight{13.44} & 10.70 & 10.23 & 12.9 & 11.46 & \hilite{13.33} \\
 GPT-4.1-nano
& 20.11 & 18.49 & \hilight{21.85} & 15.73 & 9.78 & 21.52 & 21.85 & \hilite{\textbf{24.75}} & 25.11 & \hilight{25.11} & 12.27 & 21.96 & 24.11 & 23.23 & \hilite{\textbf{25.56}} \\
 GPT-oss-20B
& \textbf{25.44} & 20.17 & \hilight{23.14} & 17.40 & 20.63 & 21.04 & 21.06 & \hilite{21.62} & 22.65 & \hilight{22.54} & 18.79 & 21.08 & 24.20 & 22.95 & \hilite{22.91}\\
 GPT-5-nano
& 21.65 & 25.51 & \hilight{26.75} & 19.58 & 24.07 & 25.15 & 27.84 & \hilite{\textbf{29.92}} & \textbf{28.04} & \hilight{\textbf{28.04}} & 20.46 & 25.09 & 26.90 & 25.72 & \hilite{26.18}\\
 GPT-5.4
& 29.54 & 30.95 & \hilight{\textbf{33.89}} & 30.95 & 33.18 & \textbf{33.89} & 33.18 & \hilite{\textbf{33.89}} & 28.44 & \hilight{31.34} & 29.89 & 32.40 & 32.21 & 33.27 & \hilite{\textbf{33.47}}\\
\bottomrule
\end{tabular}}
\end{table}

\subsection{LLM Drifting} \label{sub:llm-drifting}
\begin{figure}[H]
    \centering
    \includegraphics[width=0.9\linewidth]{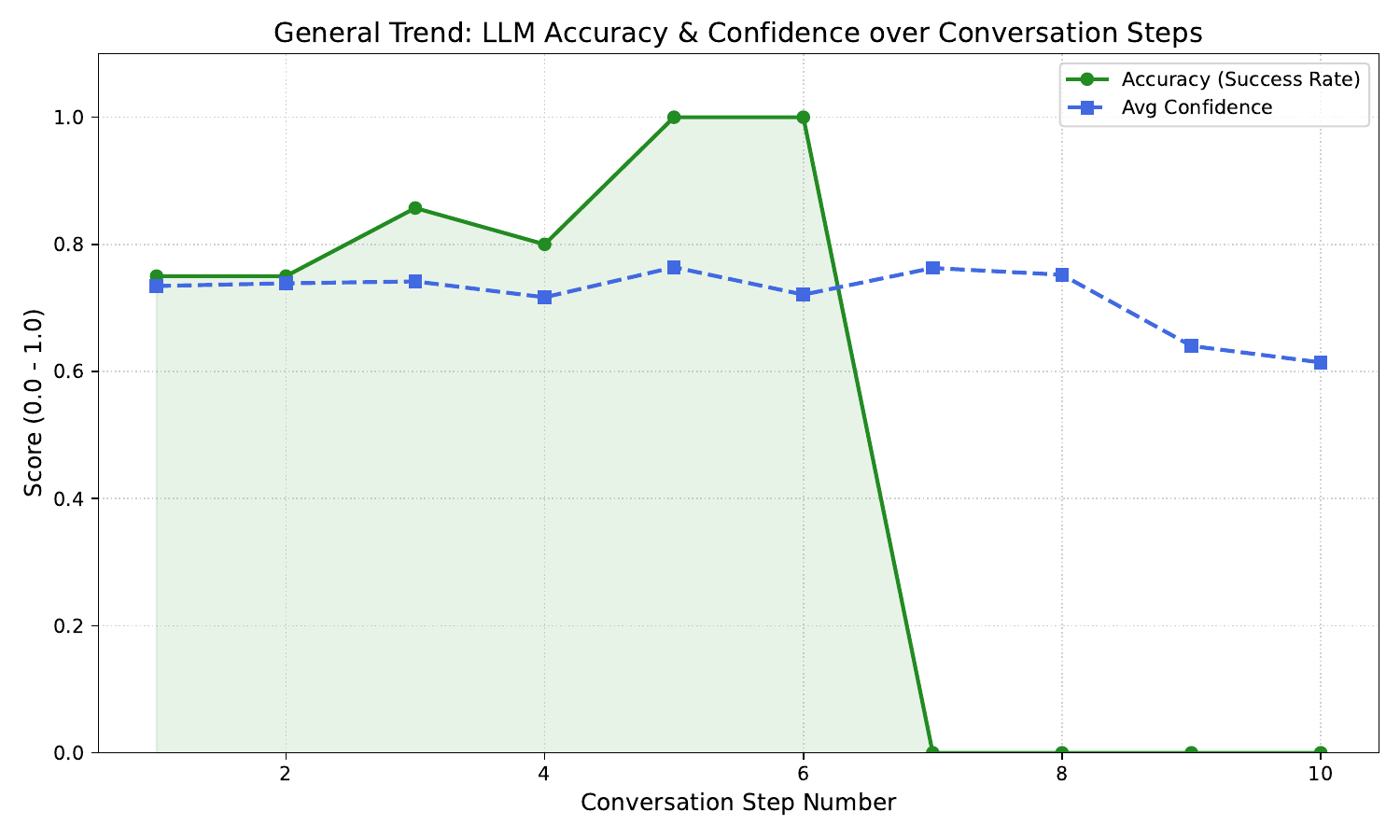}
    \caption{Unoptimized tabular LLM agent shows tendency to drift despite having already discovered the correct answer within the intermediate steps, and such drift would lead to the generation of an incorrect answer at final step. }
    \label{fig:drift_analysis}
\end{figure}

We measure LLM drift via the drifting on the \ourmetric{}, as we align all trajectories to the ending point to measure how each step performs before reaching conclusion. For $x=0$ representing the moment the model stops generating, while both lines show an upward trend as they approach termination, we observe that incorporating early stopping leads to correct selection of higher ROUGE stopping points. Moreover, in the second plot, we can see that the model is actually approaching the conclusion in a fair early stage with about 4 steps to achieve over 80\% of the work needed to be done, while the rest is less meaningful drifting which does not contribute to the answer accuracy. By eliminating these drifting rounds, we can both improve accuracy and remove the noisy extra tokens that cost additional computational burden. 

\begin{figure}[H]
    \centering
    \includegraphics[width=0.9\linewidth]{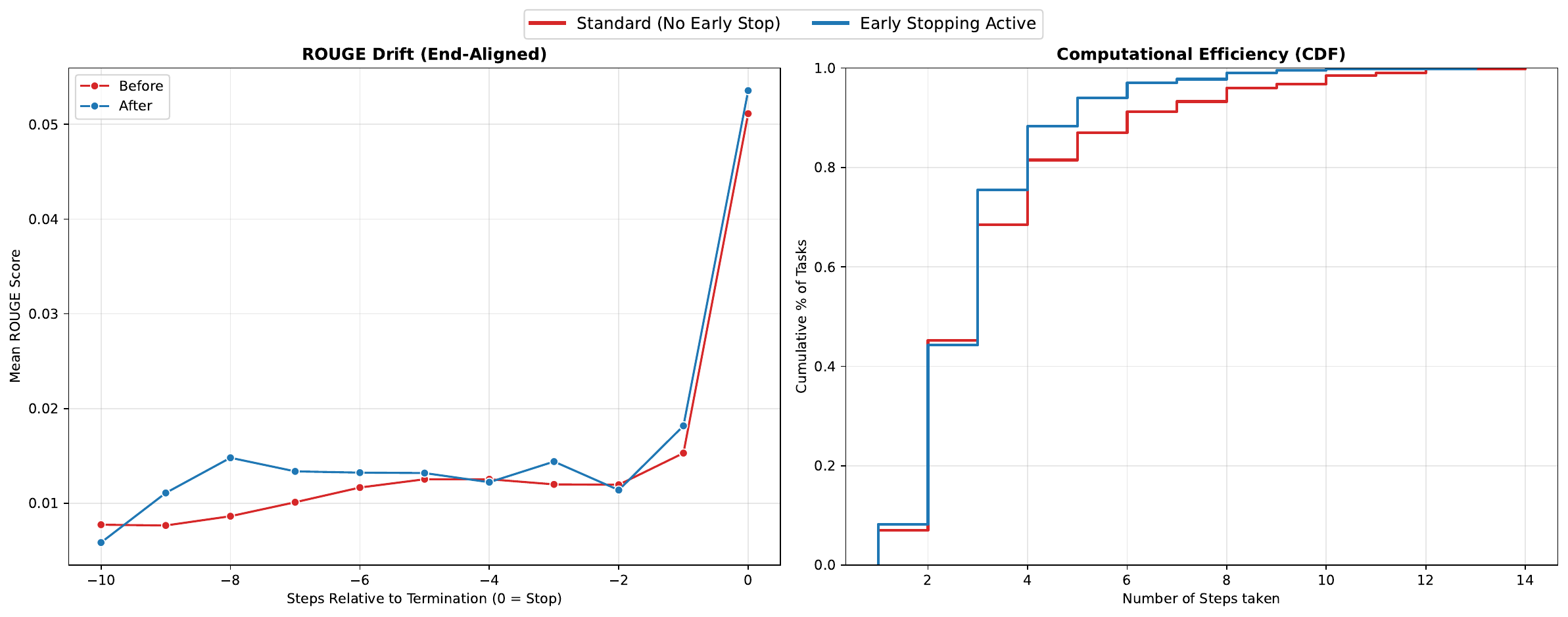}
    \caption{The first plot depicts LLM's drifting before reaching final conclusion, while the second plot illustrates the LLM reaches most of the work in the first four steps, while the subsequent drifting does not contribute to the performance significantly. }
    \label{fig:llm-drifting}
\end{figure}

\subsection{Case Study: VLM-Agent Baseline and \ourmetric{} Failure Modes} \label{subsec:case-study-vlm-appendix}
We unpack two results that complement the reward-family comparison in Section~\ref{subsec:motivating-study} (Table~\ref{tab:vs-vagen}): the rewarded VLM-agent baseline and the failure modes of \ourmetric{} itself.

\paragraph{VLM-agent baseline.} The rewarded VLM agent operates on rendered table images rather than serialized text. \ourmodel{} integrated into the tabular agent consistently outperforms both the rewarded VLM agent and the original tabular agent across all three datasets under a Best-of-5 single-shot setting with GPT-4.1-nano~\cite{openai_gpt41_2025}. Interestingly, although VLM agent lacks explicit table operations, it is slightly above the baseline on the reasoning-oriented benchmarks WTQ and MMQA, suggesting that world model reasoning can improve multi-step inference when table perception is not the primary bottleneck. However, on MMTU, which emphasizes exact value lookup over large and irregular tables, VLM agent degrades substantially and underperforms even the baseline. This contrast indicates that while world model reasoning benefits high-level reasoning, it struggles with precise tabular perception, for which tool-based operations provide a clear advantage.

\paragraph{\ourmetric{} failure modes.} \ourmetric{} is not failure-free. Its lexical nature can be misled in boundary cases where query tokens overlap with semantically irrelevant fields, such as a renamed column that happens to share a salient query word, or a distractor column whose header matches the query better than the truly useful column. In practice these collisions are relatively rare because the encoding couples headers with their cell values and because trajectory selection depends on relative reward changes rather than a single isolated score. We therefore supplement the qualitative discussion with a quantitative stress test (Table~\ref{tab:stress_test}) covering paraphrases (0.0\% inversion), unit conversions (0.0\%), renamed columns (0.8\%), and derived-evidence columns (0.0\%), documenting exactly where lexical matching succeeds and where it may introduce rare ranking errors. Likewise, because the reward depends on deterministic serialization, stronger main-text sensitivity reporting for ordering and tokenization choices remains an important direction beyond the delimiter analysis reported here. More broadly, the reward-family comparison in Section~\ref{subsec:motivating-study} should be read as evidence that \ourmetric{} is a useful control signal inside the present pipeline, not as proof that lexical rewards are universally superior to learned or execution-grounded verifiers. One should also be cautious about reward-hacking behaviors that manipulate the surface form of the serialization (for example, by echoing query tokens in low-information columns), even if such behaviors are partially discouraged by the denominator penalty in Equation~\ref{eq:tabrouge}.

\paragraph{Reward-Injection Ablation.} Comparing majority vote (no reward), ranking-only (\ourmetric{} for trajectory selection only), and full \ourmodel{} (\ourmetric{} both injected per-step and used for ranking) shows that ranking-only alone provides inconsistent gains over majority vote (negative on 6 of 8 conditions), while full \ourmodel{} matches or exceeds rank-only in 6 of 8 cases (and matches or exceeds majority vote in 4 of 8), indicating that per-step prompt injection contributes beyond pure trajectory selection. Full per-condition results are in Table~\ref{tab:reward-injection-detail}.

\subsection{Reward-Injection Ablation Detail}
\label{subsec:reward-injection-detail}

\begin{table}[H]
\centering
\caption{Full per-condition results for the reward-injection ablation (Section~\ref{sub:initial-results}). ``Majority'' = no reward, equal-weight selection. ``Rank-only'' = \ourmetric{} used only for trajectory ranking, not injected per step. ``Full \ourmodel{}'' = \ourmetric{} injected per step and used for ranking. Gain columns show absolute accuracy change (\%). Negative gains indicate conditions where reward injection hurts relative to the baseline.}
\label{tab:reward-injection-detail}
\resizebox{0.95\columnwidth}{!}{
\begin{tabular}{ll|ccc|cc}
\toprule
Dataset & Model & Majority & Rank-only & Full \ourmodel{} & $\Delta$(Rank vs.~Maj.) & $\Delta$(Full vs.~Rank) \\
\midrule
WikiTQ & QWEN3-8B        & 30.0 & 28.0 & 30.0 & $-2.0$ & $+2.0$ \\
WikiTQ & Ministral-3-8B  & 44.0 & 40.0 & 36.0 & $-4.0$ & $-4.0$ \\
MMQA   & QWEN3-8B        & 35.4 & 31.6 & 32.9 & $-3.8$ & $+1.3$ \\
MMQA   & Ministral-3-8B  & 43.0 & 41.8 & 44.3 & $-1.3$ & $+2.5$ \\
MMQA   & GPT-oss-20B     & 59.5 & 54.4 & 51.9 & $-5.1$ & $-2.5$ \\
MMTU   & QWEN3-8B        & 14.1 & 12.0 & 13.0 & $-2.2$ & $+1.1$ \\
MMTU   & Ministral-3-8B  & 18.5 & 20.6 & 22.8 & $+2.2$ & $+2.2$ \\
MMTU   & GPT-oss-20B     & 19.6 & 20.6 & 20.6 & $+1.1$ & $0.0$ \\
\bottomrule
\end{tabular}
}
\end{table}

\subsection{Echo-Column Audit}
\label{subsec:echo-column-audit}

\begin{table}[H]
\centering
\caption{Echo-column audit across datasets and backbone models. An echo column is defined as a \texttt{f\_compute\_column} call whose new column name shares at least one non-stopword token with the query. ``Echo calls'' counts such calls across all trajectories. ``Echo \%'' is the fraction of \texttt{f\_compute\_column} calls that are echo columns. ``Trajectories w/ echo'' is the fraction of questions (trajectories) that contain at least one echo call. GPT-oss-20B made zero \texttt{f\_compute\_column} calls across all datasets, so echo rates are 0\% by construction.}
\label{tab:echo-column-audit}
\resizebox{0.85\columnwidth}{!}{
\begin{tabular}{ll|rrr|r}
\toprule
Dataset & Model & Compute calls & Echo calls & Echo \% (of compute calls) & Trajectories w/ echo \\
\midrule
WikiTQ  & QWEN3-8B        &  6 &  2 & 33.3\% & 4.0\% \\
WikiTQ  & Ministral-3-8B  & 36 & 14 & 38.9\% & 28.0\% \\
WikiTQ  & GPT-oss-20B     &  0 &  0 &  0.0\% &  0.0\% \\
\midrule
MMQA    & QWEN3-8B        &  0 &  0 &  0.0\% &  0.0\% \\
MMQA    & Ministral-3-8B  &  3 &  3 & 100.0\% & 3.8\% \\
MMQA    & GPT-oss-20B     &  0 &  0 &  0.0\% &  0.0\% \\
\midrule
MMTU    & QWEN3-8B        & 17 &  3 & 17.6\% &  3.3\% \\
MMTU    & Ministral-3-8B  & 40 & 12 & 30.0\% & 13.0\% \\
MMTU    & GPT-oss-20B     &  0 &  0 &  0.0\% &  0.0\% \\
\bottomrule
\end{tabular}
}
\end{table}

\subsection{Failure Cases and Boundary Conditions}
\label{subsec:failure-cases}
Although \ourmetric{} works well as a lightweight trajectory-level reward, it remains a lexical matching signal and therefore admits edge cases where token overlap is not perfectly aligned with semantic usefulness. The first two rows below concern reward-level failure modes of \ourmetric{}, whereas the last row isolates a downstream evaluation-normalization discrepancy that is related in spirit but conceptually distinct.

\begin{table}[t]
\centering
\small
\caption{Quantitative breakdown of failure modes. The first two rows summarize \ourmetric{}-specific failure modes: semantically correct but lexically mismatched transformations, and query-matching column names that can occasionally make an incorrect trajectory appear competitive. The final row is not a reward failure of \ourmetric{} itself, but an evaluation-normalization discrepancy caused by added numerical units in otherwise correct answers.}
\label{tab:error-breakdown}
\resizebox{0.98\columnwidth}{!}{%
\begin{tabular}{
  >{\raggedright\arraybackslash}p{2.5cm}
  >{\raggedright\arraybackslash}p{7cm}
  >{\raggedright\arraybackslash}p{3.6cm}
}
\toprule
\textbf{Category} & \textbf{Quantitative evidence} & \textbf{Interpretation / Scope} \\
\midrule
Paraphrase / semantic mismatch & In the ``margin'' example, \ourmetric{} remains $0.00$ across all correct steps, including \texttt{f\_compute\_column("margin",\ldots)}, \texttt{f\_select\_column([margin])}, and \texttt{f\_select\_row(\ldots, "margin > 20")}. & A semantically correct trajectory can receive no lexical reward when the derived representation does not share query tokens. \\
\midrule
Renamed / partial-match columns & For paired correct vs. wrong trajectories with the same generated column name ``At least 20'', final F1 is $0.103$ (correct) vs. $0.113$ (wrong). In aggregate, only $2.5\%$ of wrong trajectories exceed terminal \ourmetric{} $> 0.05$, while $4.3\%$ of correct ones have terminal \ourmetric{} $\le 0$. & Query-matching column names can occasionally favor the wrong trajectory, but such collisions appear rare overall. \\
\midrule
Units / evaluation normalization & On WikiTQ evaluation, LLM-as-a-judge plus human verification reports $81.63\%$ versus $80.10\%$ for automated denotation EM when extra units are present in otherwise correct answers. & This is an answer-evaluation discrepancy rather than a trajectory-reward failure, but it shows that unit-bearing outputs can still shift reported accuracy. \\
\bottomrule
\end{tabular}}
\vspace{-2mm}
\end{table}

\paragraph{Case 1: Misleading lexical overlap from distractor columns.}
Consider a query asking for a team with the best \emph{score differential}, while the table contains both a \texttt{team} column and a \texttt{score} column. Because the query explicitly contains the words ``team'' and ``score'', an intermediate table that preserves the \texttt{team} column together with a superficially relevant but incomplete score field can receive a non-trivial \ourmetric{} value even if the actually decisive evidence lies in another derived column or a filtered subset of rows. In such cases, lexical overlap can temporarily overvalue a locally plausible but semantically incomplete table state.

In practice, this effect is moderated by the serialization scheme in Appendix~\ref{sec:textual-encoding}, which binds each cell value to its header rather than scoring isolated tokens. As a result, the reward usually improves only when the retained state preserves both the relevant field and the matching row context. Still, if several columns share overlapping query words, \ourmetric{} may rank two semantically different states too closely.

\paragraph{Case 2: Renamed or query-biased columns.}
Another boundary case arises when a column name is changed to more closely resemble the query while the underlying values remain less useful for solving the task. For example, a generic result column renamed from \texttt{Result} to \texttt{score} may obtain additional lexical overlap with the question even though it does not by itself identify the correct answer. This type of collision is most likely when multiple columns carry partially related evidence and the reward is evaluated on a single intermediate state in isolation.

Several aspects of the method reduce the practical impact of this failure mode. First, repeated rows or duplicated columns do not help because \ourmetric{} is based on a single longest common subsequence and therefore does not reward naive duplication. Second, the serialization is dominated by cell values rather than header tokens alone, so renaming one column contributes only a limited change to the overall score. Third, our agent ranks complete trajectories, meaning a misleading intermediate reward must persist through later transitions to alter the final choice. Empirically, such cases are uncommon, but they motivate future work on hybrid rewards that combine training-free reproducibility with stronger schema-aware semantics.

\paragraph{Case 3: Computed-column mismatch.}
Some useful intermediate states contain evidence that is semantically decisive but not explicitly lexicalized in the query. For example, a trajectory may construct a derived column for score differential, normalized rank, or filtered counts, even though the question only refers to the final concept and never names the exact derived representation. In such a case, the transformed table may be more useful for solving the problem but receives only a modest reward increase because the newly introduced evidence is implicit rather than lexically aligned with the query tokens.

This failure mode reflects the fact that \ourmetric{} rewards query-aligned state refinement rather than full semantic equivalence. In practice, the reward still remains helpful when the surrounding tool trajectory progressively aligns the schema and values with the question, but it may underestimate intermediate states whose value comes primarily from latent computation.

\paragraph{Case 4: Unit-conversion mismatch.}
Another edge case occurs when the table state is semantically correct but expressed in a different unit or tokenization from the query. For instance, the useful state may convert miles to kilometers, dollars to millions of dollars, or percentages to decimal ratios before the final answer is produced. Such a state can improve reasoning quality while reducing direct lexical overlap with the original query, causing \ourmetric{} to under-reward a genuinely helpful transformation.

This limitation is again moderated by the fact that \ourmetric{} is used for relative trajectory ranking over multiple steps rather than as a binary accept-or-reject criterion for a single state. When tool trajectories gradually align schema, values, and representations, the reward often remains informative enough to guide search. Nevertheless, these examples show that the method optimizes for reproducibility and efficiency rather than full semantic invariance, and thus works best when combined with transformations that progressively expose query-aligned evidence.

\paragraph{Case 5: Echo-column transient boost (concrete example).}
The following example from our MMTU trajectories directly answers the question of whether \ourmetric{} can improve while end-task accuracy decreases.

\textbf{Question}: ``As of December 31 2013, what was the ratio of equity compensation plans approved by security holders remaining to be issued to the amount to be issued upon exercise of outstanding \ldots{}'' (gold answer: 14.1).

\textbf{What happened}: QWEN3-8B (trajectory 2) used \texttt{f\_compute\_column} to create a new column named \texttt{ratio} (dividing two long-named columns). The token \texttt{ratio} appears in the query, producing a transient \ourmetric{} increase from $0.137$ to $0.150$ ($+0.013$). Immediately afterwards, the model mistakenly called \texttt{f\_retrieve\_original\_df}, resetting the dataframe to its original state, so the computed column was lost and the final answer was empty.

\textbf{Interpretation}: This is a confirmed case where an echo column briefly raised \ourmetric{} while the trajectory ultimately failed. Two observations limit its severity: (1) the boost magnitude ($+0.013$) is roughly $7\%$ of the mean correct-wrong reward gap ($\approx 0.19$, Table~\ref{tab:stress_test}), meaning it is far too small to flip a ranking in a typical question, (2) the trajectory failure was caused by the subsequent \texttt{f\_retrieve\_original\_df} reset, not by the echo column itself. A second example (WikiTQ/Ministral-3-8B: ``which country has produced the most drivers?'', gold: France, model answer: Italy) shows an echo column (\texttt{Country}) being created without any \ourmetric{} boost at all ($\Delta{=}0$), because the denominator growth absorbed the LCS gain. Across all trajectories audited in Table~\ref{tab:echo-column-audit}, only one question yielded a measurable transient boost from an echo column, consistent with the stress-test inversion rates remaining at or near $0\%$.

\subsection{Stress Tests on Reward Robustness and Encoding Sensitivity}
\label{subsec:stress-tests}

%
%

\begin{table}[t]
\centering
\caption{%
  \textbf{(a) Failure-mode stress-test.}
  Quantitative breakdown across four semantics-preserving perturbation
  categories on 463 real trajectories (WikiTQ, MMQA, MMTU; models
  gpt-4.1-nano, gpt-oss-20b, gpt-5.4).
  For each example a \emph{correct} state (maximal query–table token overlap)
  and a \emph{wrong} state (minimal overlap) are constructed automatically;
  all perturbations preserve the gold answer.
  \textbf{Rank.\ acc.} = fraction of examples where \textsc{TabRouge}
  correctly ranks the correct state above the wrong one,
  before and after perturbation.
  \textbf{Inv.} = inversion rate: fraction of originally-correct rankings
  that are flipped (the reward now \emph{misleads} the agent).
  \textbf{Mean gap} = mean of (correct-state reward $-$ wrong-state reward)
  before perturbation; this is the baseline margin the perturbation must
  overcome to cause an inversion.
  \textbf{Mean~$\Delta$} / \textbf{Std~$\Delta$} = mean and standard
  deviation of the reward change on the correct state after perturbation.
  \textbf{\% drop} = fraction where the correct-state reward falls by more
  than 0.02 / 0.05 absolute points.%
}
\label{tab:stress_test}
\small
\setlength{\tabcolsep}{4pt}
\resizebox{0.98\columnwidth}{!}{%
\begin{tabular}{lr rr r r rr rr}
\toprule
 & & \multicolumn{2}{c}{Rank.\ acc.\ (\%)} & & & \multicolumn{2}{c}{Reward $\Delta$ (correct)} & \multicolumn{2}{c}{\% reward drop} \\
\cmidrule(lr){3-4} \cmidrule(lr){7-8} \cmidrule(lr){9-10}
Category & $N$ & Before & After & Inv.\ (\%) & Mean gap & Mean & Std & ${>}0.02$ & ${>}0.05$ \\
\midrule
Paraphrase / semantic mismatch  & 410 & 89.3 & 89.5 & 0.0 & 0.195 & $+$0.004 & 0.025 & 10.5 &  3.2 \\
Renamed / partial-match columns & 275 & 93.8 & 94.2 & 0.8 & 0.188 & $-$0.018 & 0.067 & 21.8 & 14.2 \\
Unit conversion                 & 395 & 88.6 & 88.6 & 0.0 & 0.195 & $-$0.015 & 0.022 & 28.4 &  6.1 \\
Derived-evidence columns        & 258 & 88.0 & 88.0 & 0.0 & 0.211 & $+$0.028 & 0.026 &  0.0 &  0.0 \\
\bottomrule
\end{tabular}
}
\vspace{8pt}

\caption*{%
  \textbf{(b) Encoding sensitivity} across four axes (463 examples,
  20 shuffles / 2 variants each).
  All perturbations are semantics-preserving.
  \textbf{Row shuffle}: random permutation of row order.
  \textbf{Col shuffle}: random permutation of column order.
  \textbf{Delimiter}: swap \texttt{``\textless col\textgreater\ is \textless val\textgreater,''}
    to \texttt{``\textless col\textgreater:\textless val\textgreater;''} and
    \texttt{``\textless col\textgreater=\textless val\textgreater|''}.
  \textbf{Header case}: lower-case / upper-case all column headers.
  Mean $|\Delta|$ and max $|\Delta|$ are averaged across examples.%
}
\small
\setlength{\tabcolsep}{5pt}
\resizebox{0.98\columnwidth}{!}{%
\begin{tabular}{l rr r rr r rr r rr}
\toprule
 & \multicolumn{2}{c}{Row shuffle} & \phantom{x} & \multicolumn{2}{c}{Col shuffle} & \phantom{x} & \multicolumn{2}{c}{Delimiter} & \phantom{x} & \multicolumn{2}{c}{Header case} \\
\cmidrule(lr){2-3} \cmidrule(lr){5-6} \cmidrule(lr){8-9} \cmidrule(lr){11-12}
$N$ & Mean $|\Delta|$ & Max $|\Delta|$ && Mean $|\Delta|$ & Max $|\Delta|$ && Mean $|\Delta|$ & Max $|\Delta|$ && Mean $|\Delta|$ & Max $|\Delta|$ \\
\midrule
463 & 0.0074 & 0.0144 && 0.0049 & 0.0095 && 0.0198 & 0.0219 && 0.000 & 0.000 \\
\bottomrule
\end{tabular}
}
\end{table}

\paragraph{Analysis.}
Tables~\ref{tab:stress_test}a--b provide a quantitative characterisation of
the four failure modes and four encoding-sensitivity axes discussed in
Cases~1--4 above.

\textbf{Perturbation protocol.}
For each example we construct a \emph{correct} intermediate state---the
sub-table whose column headers and cell values have maximal token overlap
with the query---and a \emph{wrong} state with minimal overlap.
All perturbations are applied automatically and are semantics-preserving:
synonym substitutions do not alter the question's denotation; column
renames, unit suffixes, and derived columns leave the set of gold answers
unchanged.
The key metric is the \emph{inversion rate}: the fraction of
originally-correct rankings that the perturbation flips, i.e.\ cases
where the reward actively misleads the agent.

\textbf{Inversion rates remain negligible.}
Across all four categories, inversion rates are at most 0.8\%
(column rename), and zero for the remaining three categories.
This is because the \emph{mean reward gap} between correct and wrong states
before perturbation is large ($0.19$--$0.21$, Table~\ref{tab:stress_test}a),
so a perturbation must overcome a substantial margin to flip the ranking.
The largest mean perturbation effect is $-0.018$ (column rename), roughly
one tenth of the baseline gap---an order of magnitude too small to
systematically cause inversions.

\textbf{Ranking accuracy is stable before and after perturbation.}
Across all four categories, ranking accuracy changes by at most
$0.4$ percentage points after perturbation.
Although the reward \emph{magnitude} on the correct state shifts
(unit conversion: $-0.015$; column rename: $-0.018$), the wrong state
is affected similarly, preserving the ordering.

\textbf{Column rename is the most variable failure mode.}
The standard deviation of reward change is highest for column rename
($\sigma{=}0.067$), and 14.2\% of examples suffer a drop exceeding 0.05
absolute points.
This is consistent with the lexical nature of \textsc{TabRouge}: when a
header is replaced by a synonym absent from the query, token overlap drops
discontinuously.
Nevertheless, because the wrong state is equally affected, the ranking
signal is preserved.

\textbf{Encoding sensitivity is small across all four axes
(Table~\ref{tab:stress_test}b).}
Row-order shuffles produce a mean $|\Delta|{=}0.0074$ and a mean
worst-case of $0.0144$.
Column-order shuffles are even smaller ($0.0049$ / $0.0095$).
Delimiter changes (the largest perturbation axis) produce a mean
$|\Delta|{=}0.020$, still roughly one tenth of the mean correct--wrong
reward gap of $0.20$.
Header case changes produce \emph{exactly zero} perturbation because
\textsc{TabRouge} lowercases both the query and the encoding before
computing the LCS, making it invariant to capitalisation by design.
Across all axes, the encoding fluctuations are an order of magnitude
smaller than the typical reward gap, confirming they cannot systematically
affect ranking decisions during search.

\subsection{$\beta$ Sensitivity Sweep} \label{subsec:beta-sensitivity}
We sweep the recall-aware hybrid weight $\beta\in\{0, 0.25, 0.5, 0.75, 1\}$ across three datasets and three backbones (Table~\ref{tab:beta-sensitivity}). Recall the convention from Appendix~\ref{sec:tabrouge-design}: $r_{\text{hybrid}}=\beta\cdot r_{\ourmetric{}}+(1-\beta)\cdot r_{\text{rec}}$, so $\beta{=}1$ recovers pure precision (the default \ourmetric{}) and $\beta{=}0$ is pure recall.

\begin{table}[H]
\centering
\caption{$\beta$ sensitivity sweep. Bold marks the best $\beta$ in each row. ``Spread'' is $\max-\min$ across the five $\beta$ values. MMTU uniformly favors precision ($\beta{=}1$). The larger gpt-oss-20B is nearly $\beta$-insensitive (plateau at $\beta{\geq}0.25$). Smaller models on WikiTQ/MMQA show dataset-specific preferences with modest spread on most cells.}
\label{tab:beta-sensitivity}
\small
\setlength{\tabcolsep}{4pt}
\begin{tabular}{ll|rrrrr|cr}
\toprule
Dataset & Model & $\beta{=}0$ & $0.25$ & $0.5$ & $0.75$ & $1.0$ & Best $\beta$ & Spread \\
\midrule
WikiTQ & qwen3-8B          & 26.0 & \textbf{28.0} & \textbf{28.0} & \textbf{28.0} & \textbf{28.0} & ${\geq}0.25$ & 2.0 \\
WikiTQ & ministral-3-8B    & \textbf{44.0} & 42.0 & 42.0 & 42.0 & 42.0 & $0$ & 2.0 \\
\midrule
MMQA   & qwen3-8B          & \textbf{31.7} & \textbf{31.7} & \textbf{31.7} & 30.4 & 27.9 & ${\leq}0.5$ & 3.8 \\
MMQA   & ministral-3-8B    & 38.0 & 38.0 & 38.0 & 39.2 & \textbf{41.8} & $1$ & 3.8 \\
MMQA   & gpt-oss-20B       & 53.2 & \textbf{54.4} & \textbf{54.4} & \textbf{54.4} & \textbf{54.4} & ${\geq}0.25$ & 1.3 \\
\midrule
MMTU   & qwen3-8B          & 12.0 & 12.0 & 12.0 & 12.0 & \textbf{13.0} & $1$ & 1.1 \\
MMTU   & ministral-3-8B    & 14.1 & 14.1 & 14.1 & 17.4 & \textbf{21.7} & $1$ & 7.6 \\
MMTU   & gpt-oss-20B       & 18.5 & \textbf{20.7} & \textbf{20.7} & 19.6 & \textbf{20.7} & ${\geq}0.25$ & 2.2 \\
\bottomrule
\end{tabular}
\end{table}

\paragraph{Reading the sweep.} Three observations stand out. First, MMTU consistently rewards $\beta{=}1$ across all three backbones, with the single largest gain on the whole sweep ($+7.6$ pts on ministral-3-8B), consistent with MMTU's lookup-heavy character where retaining auxiliary content dilutes the precision signal. Second, the largest model gpt-oss-20B is essentially $\beta$-insensitive, plateauing at $\beta{\geq}0.25$ across all datasets, suggesting that strong calibration absorbs differences between the precision and recall components. Third, smaller models on WikiTQ/MMQA show dataset-specific rather than monotone preferences: qwen3-8B prefers $\beta{=}1$ on WikiTQ and MMTU but lower $\beta$ on MMQA, while ministral-3-8B prefers $\beta{=}0$ on WikiTQ but $\beta{=}1$ on MMQA and MMTU. Spreads are modest ($\leq 4$ pts on 7 of 8 cells), so we caution against strong cross-condition claims and retain $\beta{=}1$ as the default precision-leaning operating point, with the recall-aware hybrid available where retrieval cells benefit from it.

\subsection{Delimiter Sensitivity Analysis} \label{sub:delimiter-sensitivity}
We further analyze the sensitivity of the textual encoding to delimiter choices by measuring paired per-question differences relative to the baseline. Concretely, we vary the relation token (\texttt{is} → \texttt{:}, \texttt{=}), the separator token (\texttt{,} → \texttt{;} , \texttt{and}), and the row delimiter (\texttt{\textbackslash n} → \texttt{|}, \texttt{.}). As shown in Figure~\ref{fig:delimiter-sensitivity}, these substitutions yield only marginal changes in performance. All effects remain small within about 2 - 3 questions among the sampled 50 test cases and their confidence intervals overlap zero, indicating no statistically significant impact. In particular, the relation token exhibits near-zero difference, while separator and row delimiters show slight but inconsistent fluctuations. These results suggest that our method is largely robust to reasonable delimiter variations, and that performance gains are not attributable to specific token engineering but instead arise from the reward design itself.

\begin{figure}[H]
    \centering
    \includegraphics[width=0.8\linewidth]{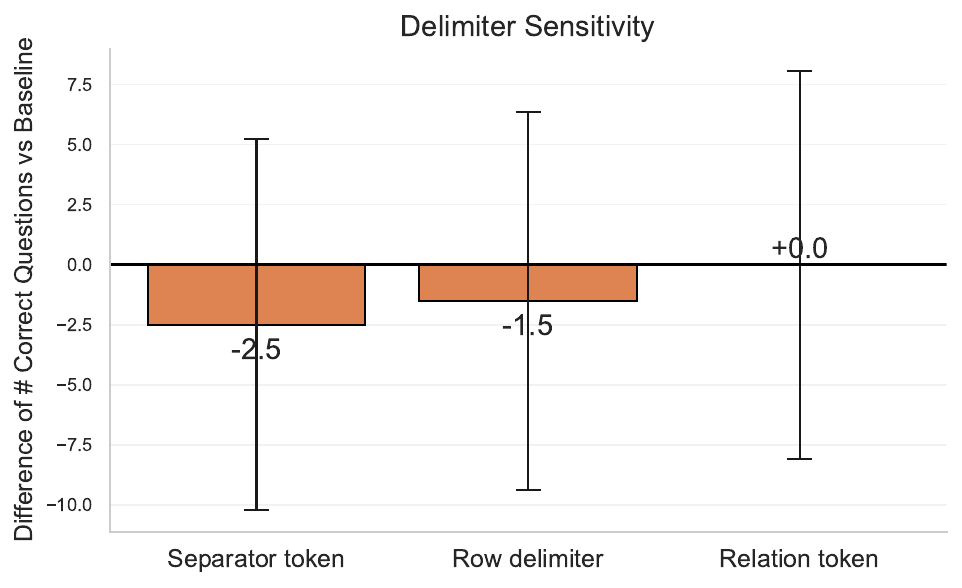}
    \caption{Delimiter sensitivity measured via paired per-question differences relative to the baseline encoding. Changes to relation, separator, or row delimiters produce only small fluctuations, with confidence intervals overlapping zero, indicating limited impact on performance.}
    \label{fig:delimiter-sensitivity}
\end{figure}

\subsection{Sensitivity Analysis on Tabular Size} \label{sec:analysis-size}
Large tables pose significant challenges for LLM-based reasoning, as models struggle to interpret multi-dimensional structures, particularly when tables are serialized as text or images~\cite{wang2025needleinatable}. We categorize tables into three groups based on token count: \emph{small} ($<$300 tokens), \emph{medium} (300 to 2,000 tokens), and \emph{large} ($>$2,000 tokens).

By enforcing partial observability, we expect overall performance to remain stable across table sizes, as the agent's decision process depends on identifying the correct operations rather than fully parsing the entire table. Consistent with this expectation, Table \ref{tab:different-size-performance} shows stable relative advantages of different reward models across table sizes.

However, a counter-intuitive phenomenon emerges: Performance of the Chain-of-Table framework \emph{improves} as table size increases, whereas Tree-of-Table exhibits the opposite trend. Because both frameworks operate under identical partial observability, this behavior cannot be directly attributed to table size alone. To understand this discrepancy, we further analyze how each framework interacts with the underlying task semantics across datasets.

\begin{table}
\centering
\caption{As tabular size increases, performance trend moves in opposite direction between Chain-of-Table and Tree-of-Table, which cannot be directly attributed to table size due to the partially observable table setting, motivating additional analysis on Table \ref{tab:chain-vs-tree}.}
\label{tab:different-size-performance}
\resizebox{\columnwidth}{!}{
\small
\setlength{\tabcolsep}{4pt}
\begin{tabular}{c c cccc cccc}
\toprule
& & \multicolumn{4}{c}{Chain} & \multicolumn{4}{c}{Tree} \\
Size & & EQ & AC & AS & RG & EQ & AC & AS & RG \\
\midrule
S & ($<$300)
& 57.58 & 62.22 & 63.23 & \textbf{65.66}
& 69.29 & 68.89 & 71.11 & \textbf{72.32} \\

M & (300 to 2000)
& 62.47 & 67.67 & 72.60 & \textbf{73.97}
& 63.84 & 69.04 & 70.14 & \textbf{70.96} \\

L & ($>$2000)
& 66.12 & 75.51 & 76.33 & \textbf{78.37}
& 50.61 & 54.69 & 55.10 & \textbf{57.14} \\
\bottomrule
\end{tabular}}
\end{table}

\subsection{Performance Analysis between Chain-of-Table and Tree-of-Table} \label{sub:cot-vs-tot}
\begin{table}
\caption{Deviated performance in tabular size arises from the two frameworks' strength in different semantic tasks, whereas Chain-of-Table and Tree-of-Table perform best on lookup and computation tasks respectively, which dominate in different datasets. Results are reported as accuracy (average $\pm$ standard deviation).}
\label{tab:chain-vs-tree}
\centering
\resizebox{0.98\columnwidth}{!}{
\begin{tabular}{cc|cc|cc}
\toprule
\multirow{2}{*}{Dataset} & \multirow{2}{*}{\makecell[c]{Average \\ Token}} & \multicolumn{2}{c}{Chain-of-Table (Tools)} & \multicolumn{2}{c}{Tree-of-Table (Tools)} \\
& & Accuracy & Acc. (+Reward) & Accuracy & Acc. (+Reward) \\
WikiTQ & 661.22 & 53.37 $\pm$ 3.53 & 73.55 $\pm$ 3.12 & 62.86 $\pm$ 3.42 & 73.96 $\pm$ 3.10\\
MMQA & 339.46 & 41.01 $\pm$ 3.48 & 61.97 $\pm$ 3.43 & 50.63 $\pm$ 3.54 & 67.14 $\pm$ 3.32 \\
MMTU & 9,499.04 & 47.44 $\pm$ 3.53 & 65.47 $\pm$ 3.36 & 40.67 $\pm$ 3.47 & 58.18 $\pm$ 3.49 \\
\bottomrule
\end{tabular}
}
\end{table}
Further analysis on dataset-wise performance (Table~\ref{tab:chain-vs-tree}) shows that Tree-of-Table performs best on WikiTQ and MMQA, which require semantic inference to align latent attributes with the query. These datasets consist largely of small to medium tables, where hierarchical decomposition enables independent resolution of conceptual components. Such decomposition is particularly beneficial for MMQA, where relevant signals are less explicit. In contrast, MMTU contains large tables focused on exact lookups over long and irregular expressions. Such patterns are difficult to identify directly from prompts but can be precisely retrieved via atomic operations (e.g., \texttt{f\_select\_row}, \texttt{f\_select\_column}), favoring the progressive reasoning style of Chain-of-Table. For Tree-of-Table, early semantic decomposition conversely introduces biases, such as normalizing intentionally irregular values by misinterpreting the term as a typographical error, hence degrading exact retrieval. This is further supported by Figure~\ref{fig:question-type-analysis}, where Chain-of-Table performs best on \emph{needle-in-a-table} queries, while Tree-of-Table is stronger on computation-intensive tasks. Incorporating reward feedback via \ourmodel{} mitigates these complementary weaknesses, producing more consistent performance across datasets and task types.
\subsection{Question Type Analysis} \label{subsec:question-type-analysis}
\begin{figure}[H]
    \centering
    \includegraphics[width=0.95\linewidth]{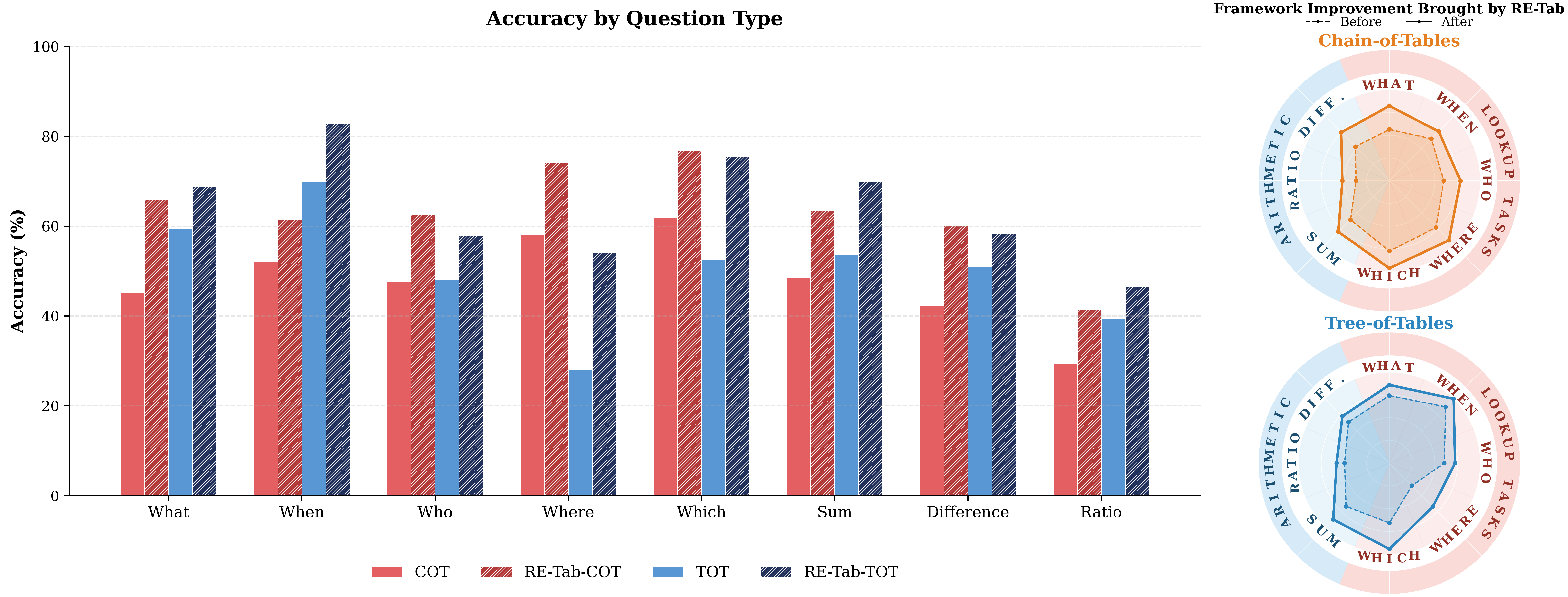}
    \caption{\textbf{\ourmetric{} delivers its largest gains on arithmetic queries.} Per-question-type accuracy on WikiTQ across eight categories: five lookup intents (\texttt{What}, \texttt{When}, \texttt{Who}, \texttt{Where}, \texttt{Which}) and three arithmetic intents (\texttt{Sum}, \texttt{Difference}, \texttt{Ratio}). Incorporating \ourmodel{} improves \texttt{Sum} by +15-16~pp and \texttt{Difference} by +7-18~pp over both Chain- and Tree-of-Table, showing that the lexical reward does not penalize aggregation or numeric reasoning despite operating purely on serialized table strings. Within lookup intents we further distinguish \texttt{which} queries (options given in the question) from \texttt{what} queries (require table lookup).}
    \label{fig:question-type-analysis}
\end{figure}

\section{Implementation Details} \label{sec:implementation-details}
In our implementation, for GPT-4.1-nano, GPT-5-nano, and GPT-5.4~\cite{openai2026gpt54}, we use the API provided by OpenAI for inference. For the three open-source models (QWEN3-8B, Ministral-3-8B, and GPT-oss-20B), we use Ollama~\cite{ollama2025} for efficient inference on an NVIDIA GeForce RTX 4090 GPU. Except for GPT-5-nano and GPT-5.4, all models expose top-$k$ log-probabilities. For these two backbones, we use an additional GPT-4.1-nano agent to rewrite reasoning text at each step, indirectly estimating the action-level confidence.

To focus on evaluating quality of \ourmetric{}, we set all discount factors to be 1. Regarding halting criteria, we continuously measure \ourmetric{} in each step, and we declare a signal convergence if the rolling variance of the signal is less than a threshold, which indicates that the agent is unable to further improve table precision. Further hyperparameter details are included in the following detailed algorithm:

\paragraph{Evaluation Datasets.}
For the Tier-1 main results (Table~\ref{tab:initial-result}), we evaluate on $n{=}200$ questions per dataset across WTQ~\cite{pasupat-liang-2015-compositional}, TableBench~\cite{10.1609/aaai.v39i24.34739}, TabFact~\cite{Chen2020TabFact}, and MMQA~\cite{wu2025mmqa}, sampled under the unified agentic protocol described in Appendix~\ref{subsec:evaluation-protocol}. For the Tier-2 controlled ablations (Table~\ref{tab:main_results}), we evaluate on three benchmarks at $n=200$ per datasets. The smaller subsets reflect the cross-product cost of sweeping six backbones and four reward conditions per dataset. Given these subset sizes, the binomial 95\% Wilson resolution is approximately $\pm$6.86\,pp, so cross-condition gaps within these margins should be read as ties rather than ranked differences. Outputs are reduced to concise answer spans using the same backbone model family as the generator and compared against ground truth via exact-match with NFKD normalization and canonicalization of common symbols and units (full extraction and normalization details in Appendix~\ref{subsec:evaluation-protocol}).

\subsection{Hyperparameter Configuration}
Table~\ref{tab:all_hyperparameters} summarizes the key rendering, generation, and agent-loop hyperparameters used throughout our experiments. We keep these settings fixed across datasets unless a model API imposes additional constraints, so that performance differences are attributable to the reward design rather than ad hoc tuning.

\begin{table*}[ht]
\centering
\caption{Summary of image rendering, model generation, and agent-loop hyperparameters used in our experiments.}
\label{tab:all_hyperparameters}
\renewcommand{\arraystretch}{1.12}
\setlength{\tabcolsep}{8pt}
\resizebox{0.95\textwidth}{!}{%
\begin{tabular}{lll}
\toprule
\textbf{Parameter} & \textbf{Value / Default} & \textbf{Description} \\
\midrule
\multicolumn{3}{c}{\textbf{Image Rendering Hyperparameters}} \\
\midrule
Max rows displayed & 20 & Truncate overly long table views during rendering \\
Max cols displayed & 10 & Truncate overly wide table views during rendering \\
Font size & 8pt & Standardize table readability across renderings \\
Row scale & 1.2$\times$ & Standardize vertical spacing for table rows \\
DPI & 200 & Standardize figure resolution \\
Format & \texttt{.jpg} & Standardize rendered image format \\
\midrule
\multicolumn{3}{c}{\textbf{Model / Generation Hyperparameters}} \\
\midrule
Max tokens & 8192 & Maximum output tokens per model call \\
Temperature & 0.7 & Sampling temperature (set to 1 when required by GPT-5 APIs) \\
Top-$k$ logprobs & 20 & Used for action-based confidence rewards when available \\
Seed & 42 & Default seed for reproducible runs when supported \\
\midrule
\multicolumn{3}{c}{\textbf{Experiment / Agentic Loop Hyperparameters}} \\
\midrule
Max steps & 12 & Maximum tool-use steps per question attempt \\
Runtime retries & 2 & Retries for malformed JSON/tool outputs \\
Chain attempts $I$ & 10 & Independent attempts for Chain-of-Table aggregation \\
Tree attempts $K$ & 20 & Independent attempts for Tree-of-Table aggregation \\
ROUGE window $w$ & 5 & Window size for rolling reward convergence \\
Convergence threshold $\varepsilon$ & 0.005 & Variance threshold for early stopping \\
Discount factor $\gamma$ & 1.0 & Fixed to isolate reward quality from temporal discounting \\
\bottomrule
\end{tabular}}
\end{table*}

\subsection{Computational Configuration}
Inference is conducted using OpenAI-hosted APIs for GPT-4.1-nano, GPT-5-nano, and GPT-5.4~\cite{openai2026gpt54}, and with Ollama~\cite{ollama2025} for the open-weight models QWEN3-8B, Ministral-3-8B, and GPT-oss-20B on NVIDIA GeForce RTX 4090 GPUs. Unless the serving API requires otherwise, we use the same generation settings across models to reduce confounding from infrastructure differences. For GPT-5-nano and GPT-5.4, which do not expose token-level confidence in the same way as the other backbones, we employ an auxiliary GPT-4.1-nano rewriting step to approximate action-level confidence for the action-based reward baselines.

\subsection{GRPO Post-Training Setup} \label{sub:grpo-setup}
We provide setup details for the preliminary GRPO experiments reported in Section~\ref{sub:grpo-ablation}. All runs train Qwen3-8B with the GRPO~\cite{shao2024deepseekmathpushinglimitsmathematical} advantage estimator on a single node of $4{\times}$~A100 GPUs.

\paragraph{Algorithm and policy regularization.}
We use group-relative advantage estimation with group size $n{=}8$ rollouts per prompt. The KL-divergence regularizer is applied to the policy loss with coefficient $\beta_{\mathrm{KL}}{=}0.01$ using the low-variance estimator (rather than added to the reward). Entropy regularization is included with coefficient $0.001$. Per-group standardization of advantages within each prompt's eight rollouts means the optimizer never compares raw \ourmetric{} magnitudes across different table-query pairs, sidestepping cross-table reward-calibration concerns.

\paragraph{Optimization.}
Learning rate is $1\!\times\!10^{-6}$ with a cosine warmup over the first $5\%$ of steps and a minimum-LR ratio of $0.1$. Train batch size is $64$, PPO mini-batch size $64$, and PPO micro-batch size per GPU $4$. Maximum prompt length is $6{,}144$ tokens. Maximum response length is $1{,}024$ tokens. Over-long prompts are truncated from the left. Multi-turn rollouts are capped at $12$ assistant turns and $13$ user turns, matching the agent's existing tool-loop budget.

\paragraph{Datasets and steps.}
Training subsets are prepared via \texttt{prepare\_datasets\_parquet.py}: WikiTQ uses $n{=}200$ training examples (random seed $42$). MMQA and MMTU use $n{=}92$ (random seed $1018$). We train WikiTQ and MMQA for $300$ steps and MMTU for $150$ steps. The MMTU schedule is halved due to out-of-memory errors when handling MMTU's long-valued tables on the available hardware. Held-out evaluation uses the same extraction-and-comparison protocol as the controlled ablation in Section~\ref{sub:initial-results}.

\paragraph{Rollout engine and FSDP.}
Rollouts use SGLang as the inference engine with $50\%$ GPU memory utilization and tensor-parallel size $2$, on top of FSDP-based actor optimization with parameter and optimizer offloading to CPU.

\subsection{Efficiency Benchmark Protocol} \label{sub:efficiency-protocol}
Our efficiency analysis is designed to measure not only whether a state-supervision signal improves accuracy, but also how much test-time search is required to obtain that improvement. Following the style of recent efficiency reporting in related table-reasoning work, we make the benchmark setup and metric definitions explicit so that the reported search-cost reductions are reproducible and interpretable.

\paragraph{Benchmark setup and sampling.}
For each method, we evaluate the same benchmark questions under the same backbone model, prompt template, tool inventory, and execution environment, and vary only the state-supervision strategy used for trajectory ranking and stopping. On each question, we sample multiple independent reasoning trajectories and record the full search trace until the method reaches a stable consensus answer or satisfies its stopping rule. This design isolates differences attributable to reward-guided search rather than implementation mismatch.

\paragraph{Reported metrics.}
We report the following efficiency indicators.
\begin{itemize}
    \item \textbf{Sampled trajectories to convergence:} the number of sampled reasoning chains required to reach 95\% of the best observed accuracy for a reward strategy. This is our main test-time scaling metric. Lower values indicate that the method reaches near-peak performance with fewer sampled trajectories.
    \item \textbf{Reasoning steps per trajectory:} the average number of tool-use steps executed within a sampled trajectory. This quantifies step efficiency and indicates whether reward feedback helps the agent halt or prune earlier once the decisive table state has been reached.
    \item \textbf{Late-stage token generation:} the number of token-consuming reasoning rounds that occur after the trajectory has already captured most of the useful evidence. This measures how much redundant deliberation remains in the search process.
    \item \textbf{End-to-end latency:} the total wall-clock time required to process a single sample under a fixed serving setup. We report this as a deployment-facing measure of inference cost, while noting that absolute values depend on provider and hardware stack.
\end{itemize}

The reported cost reduction should therefore be interpreted as the joint effect of \emph{trajectory efficiency} (fewer samples), \emph{step efficiency} (shorter reasoning traces), and \emph{generation efficiency} (less redundant late-stage deliberation). In the main text, we emphasize relative comparisons within a fixed pipeline rather than universal wall-clock speedups across providers.

From a computational perspective, the per-step cost of \ourmetric{} is dominated by LCS computation between the query and the serialized intermediate table, giving worst-case complexity $O(|Q|\,|Enc(T_t)|)$ at step $t$, and total trajectory cost $\sum_t O(|Q|\,|Enc(T_t)|)$. In practice, this overhead is modest because $|Q|$ is small and intermediate states are already bounded by the agent’s serialization and observation pipeline. As a result, reward computation remains far cheaper than an additional LLM call, so the overall runtime is still dominated by model inference rather than by scoring. We therefore interpret the measured efficiency gains primarily as reductions in search and deliberation burden within a fixed inference stack.

\begin{table}[ht]
\centering
\caption{Token budget and wall-clock latency for \ourmodel{} ($K{=}3$ trajectories per question) across datasets and backbone models. Mean and standard deviation computed over all questions in each split. Missing cells indicate configurations not evaluated.}
\label{tab:cost-accounting}
\small
\setlength{\tabcolsep}{5pt}
\renewcommand{\arraystretch}{1.10}
\begin{tabular}{llcrrrr}
\toprule
\textbf{Dataset} & \textbf{Model} & \textbf{Condition} & \textbf{Latency (s)} & \textbf{$\sigma$ lat.} & \textbf{Tokens} & \textbf{$\sigma$ tok.} \\
\midrule
\multirow{5}{*}{WikiTQ}
 & Qwen3-8B       & no reward & 36.4 & 30.7 & 161.7 & 160.2 \\
 & Qwen3-8B       & \ourmodel{}    & 38.9 & 32.2 & 125.1 & 123.0 \\
 & Ministral-3-8B & no reward & 16.9 &  9.0 & 491.3 & 228.1 \\
 & Ministral-3-8B & \ourmodel{}    & 13.0 &  4.9 & 408.0 & 187.5 \\
 & GPT-oss-20B    & \ourmodel{}    &  9.5 &  3.9 &  61.6 &  52.8 \\
\midrule
\multirow{6}{*}{MMQA}
 & Qwen3-8B       & no reward & 55.9 & 59.0 & 103.6 & 121.9 \\
 & Qwen3-8B       & \ourmodel{}    & 49.0 & 37.5 & 106.4 & 123.8 \\
 & Ministral-3-8B & no reward & 14.7 &  6.8 & 493.8 & 216.0 \\
 & Ministral-3-8B & \ourmodel{}    & 12.4 &  3.8 & 457.8 & 189.2 \\
 & GPT-oss-20B    & no reward & 11.0 &  5.8 &  77.3 & 129.8 \\
 & GPT-oss-20B    & \ourmodel{}    & 13.0 &  4.1 &  30.5 &  33.5 \\
\midrule
\multirow{6}{*}{MMTU}
 & Qwen3-8B       & no reward & 37.7 & 33.3 & 182.6 & 144.2 \\
 & Qwen3-8B       & \ourmodel{}    & 36.3 & 24.2 & 218.1 & 291.8 \\
 & Ministral-3-8B & no reward & 15.5 & 10.7 & 494.9 & 220.8 \\
 & Ministral-3-8B & \ourmodel{}    & 13.8 &  6.0 & 475.8 & 155.6 \\
 & GPT-oss-20B    & no reward & 37.2 & 45.6 &  75.1 & 157.4 \\
 & GPT-oss-20B    & \ourmodel{}    & 29.2 & 24.7 &  75.6 & 263.8 \\
\bottomrule
\end{tabular}
\end{table}

\begin{figure}[H]
\centering

\begin{minipage}[t]{0.48\linewidth}
\begin{algorithm}[H]
\caption{Chain-of-Table with \ourmodel{} Reasoning}
\label{alg:chain_of_tables}
\small
\begin{algorithmic}[1]

\State {\bfseries Input:} Dataset path $D$, number of samples $N$, 
iterations per question $I{=}10$, LLM $\mathcal{M}$, 
\ourmetric{} window $w{=}5$, convergence threshold $\varepsilon{=}0.005$

\State Load dataset samples $\{(Q_i, T_i, \alpha_i)\}_{i=1}^{N}$ from $D$

\For{$i = 1$ {\bfseries to} $N$}
    \State Initialize table state $T_0 \leftarrow T_i$

    \For{$j = 1$ {\bfseries to} $I$}
        \State Initialize \ourmetric{} history $\mathcal{R} \leftarrow [\,]$
        \State $t \leftarrow 0$

        \Repeat
            \State Initialize observation $o_{t} \leftarrow T_{t}$
            \State $t \leftarrow t + 1$

            \State LLM generates reasoning and tool call:
            \State \hspace{1em} $(r_t, \tau_t) \leftarrow \mathcal{M}(Q_i, T_{t-1})$

            \State Execute the single proposed tool call (no branch expansion or rollback):
            \State \hspace{1em} $T_{t} \leftarrow a_t(T_{t-1})$

            \State Compute \ourmetric{} on the resulting state:
            \State \hspace{1em} $r^{\text{\ourmetric{}}}_t \leftarrow \text{\ourmetric{}}(Q_i, T_t)$

            \State Append $r^{\text{\ourmetric{}}}_t$ to $\mathcal{R}$ for stopping and final trajectory ranking

            \If{\ourmetric{}Converged($\mathcal{R}, w, \varepsilon$)}
                \State Instruct LLM to produce final answer
                \State {\bfseries break}
            \EndIf
        \Until{LLM emits {\ttfamily f\_final\_answer}}

        \State Record predicted answer $\hat{\alpha}_{i,j}$
    \EndFor

    \State Select final answer $\hat{\alpha}_i$ by Simulative Reasoning on the state-supervision signal
    \State Output $(Q_i, \hat{\alpha}_i, \alpha_i)$
\EndFor

\end{algorithmic}
\end{algorithm}
\end{minipage}
\hfill
\begin{minipage}[t]{0.48\linewidth}
\begin{algorithm}[H]
\caption{Tree-of-Table with \ourmodel{} Reasoning}
\label{alg:tree_of_tables}
\small
\begin{algorithmic}[1]
\State {\bfseries Input:} Dataset path $D$, number of samples $N$, LLM $\mathcal{M}$,
maximum attempts $K{=}20$, ROUGE window $w{=}5$, convergence threshold $\varepsilon{=}0.005$

\State Load dataset samples $\{(Q_i, T_i, \alpha_i)\}_{i=1}^{N}$ from $D$

\For{$i = 1$ {\bfseries to} $N$}
    \State Initialize table state $T_0 \leftarrow T_i$
    \State Initialize Tree-of-Table structure $\mathcal{G} \leftarrow \{\text{root}\}$
    \State Initialize \ourmetric{} history $\mathcal{R} \leftarrow [\,]$

    \For{$k = 1$ {\bfseries to} $K$}
        \State $t \leftarrow 0$
        \Repeat
            \State Initialize observation $o_t \leftarrow T_t$
            \State $t \leftarrow t + 1$
            \State LLM generates reasoning and selects tool call:
            \State \hspace{1em} $(r_t, \tau_t) \leftarrow \mathcal{M}(Q_i, o_{t})$
            \State Execute the single proposed tool call (no branch expansion or rollback): $T_{t+1} \leftarrow \tau_t(T_{t})$
            \State Append snapshot $o_{t}$ to Tree-of-Table $\mathcal{G}$
            \State Compute \ourmetric{} on the resulting state $r^{\text{\ourmetric{}}}_t \leftarrow \text{\ourmetric{}}(Q_i, T_t)$
            \State Append $r^{\text{\ourmetric{}}}_t$ to $\mathcal{R}$ for stopping and final trajectory ranking

            \If{\ourmetric{}Converged($\mathcal{R}, w, \varepsilon$)}
                \State Produce final answer
                \State {\bfseries break}
            \EndIf
        \Until{LLM emits {\ttfamily f\_final\_answer}}

        \State Record predicted answer $\hat{\alpha}_{i,k}$
    \EndFor

    \State Select final answer $\hat{\alpha}^{*}_i$ via Simulative Reasoning on the state-supervision signal
    \State Write $(Q_i, \hat{\alpha}_i, \alpha_i)$ to output file $F$
\EndFor

\end{algorithmic}
\end{algorithm}
\end{minipage}

\end{figure}

\newpage

\subsection{Observation Model Details}
\label{subsec:observation-model}
Our agent never reasons over the entire table state at every step. Instead, the environment exposes a partial observation $o_t$ derived from the current dataframe $T_t$, and the agent conditions its next action only on that observation together with the question and prior reasoning history.

\paragraph{What is shown to the model.}
Whenever the environment reveals the table, we serialize the current dataframe into a compact textual snapshot. This snapshot contains the currently retained columns and a bounded prefix of rows from the dataframe, rather than the full table. In our default implementation, the visible window contains the first 20 rows of the current dataframe after the latest transformation. This design keeps prompts within a stable context budget and mirrors realistic agentic settings where repeatedly exposing the full table is both expensive and unnecessary.

\paragraph{Which actions expose the table state.}
Not every action refreshes the observation. The agent receives an explicit table snapshot at the initial load and after actions that directly materialize a table view, namely \texttt{f\_select\_column}, \texttt{f\_select\_row}, and \texttt{f\_print\_table}. These actions update the dataframe and then serialize the visible window for the next reasoning step. Other actions may change intermediate reasoning state without expanding the visible table context. Consequently, the agent often acts under partial observability even though the underlying table state has changed.

\paragraph{Why the observation can be insufficient.}
This observation protocol creates a natural mismatch between local evidence and global table state. Relevant rows may fall outside the visible prefix, and a locally plausible snapshot may still omit the column or record needed for the final answer. This is particularly important for ambiguous or compositional questions, where several candidate subspaces appear promising from the local window alone. The purpose of the reward signal is therefore not to replace the observation, but to complement it with a scalar summary of whether the hidden full state has become more aligned with the query.

\paragraph{Sensitivity to windowing and sampling.}
We use a fixed visible-window rule in the main experiments for reproducibility. Empirically, performance is reasonably stable to moderate changes in serialization and window design, provided the agent is consistently exposed to a coherent local view. Nonetheless, the observation function remains a meaningful source of difficulty: if the relevant evidence is absent from the visible rows, the agent may follow a drifted reasoning path despite selecting valid tools. This is one reason why explicit reward feedback is helpful. It provides a trajectory-level correction signal even when the currently displayed rows are unrepresentative of the best downstream action.

\paragraph{Relation to the theoretical discussion.}
The observation model operationalizes the POMDP view introduced in the main paper. The hidden state is the full transformed table $T_t$, while the model acts on a bounded observation $o_t$. In this setting, deterministic state supervision serves as an auxiliary channel for state disambiguation: it compresses information from the full post-action table into a stepwise signal that is inexpensive to compute and more globally informative than the displayed snapshot alone.

\subsection{Evaluation Protocol}
\label{subsec:evaluation-protocol}
We adopt a three-stage evaluation pipeline so that binary accuracy reflects answer equivalence rather than superficial formatting differences.

\paragraph{Stage 1: Answer extraction.}
Most TableQA agents generate a short rationale together with the final answer. To standardize evaluation across models, we first reduce each raw generation to a concise answer span. We use the same backbone model family that produced the response to extract the final answer phrase from its own output, which keeps the extraction rule uniform across methods while avoiding a systematic advantage for any one external extractor. This step is purely post-processing and does not alter the reasoning trajectory used during inference. Importantly, this extraction stage is used only to isolate the model's already-produced final answer span from a verbose response. It is not used to judge whether that answer is correct.

\paragraph{Stage 2: Answer normalization.}
Before comparison, both the predicted answer and the gold answer are normalized by applying Unicode NFKD decomposition, stripping diacritics, lowercasing when case is not semantically meaningful, collapsing repeated whitespace, and standardizing punctuation and symbols. We also normalize common answer variants that frequently appear in tabular data, including numerical formatting differences (e.g., commas, trailing zeros, signed symbols), simple unit variants, and visually different but denotationally equivalent strings. This normalization is designed to preserve answer meaning while removing incidental surface variation.

\paragraph{Stage 3: Standard comparison.}
After extraction and normalization, we compute binary accuracy using the standard task comparison protocol, i.e. exact match or denotation-equivalence evaluation under dataset-appropriate normalization. This handles cases where raw string equality is too brittle, such as reordered phrases, abbreviated entities, or answers with optional units, while remaining aligned with standard TableQA evaluation practice.

\paragraph{Scope and motivation.}
We use this evaluation design because agent outputs in TableQA are often verbose and heterogeneous even when they express the same answer. A raw surface-form exact-match protocol would therefore undercount correct predictions and confound reasoning quality with stylistic variation. Our evaluation framing follows recent table-reasoning work, which typically reports standard benchmark accuracy with limited additional adjudication detail~\cite{zou2026tattoo,yang2025cit,xing2026tabledart}. Compared with these works, we make the pipeline more explicit by separating answer extraction, normalization, and final deterministic comparison. In particular, correctness adjudication in our main results occurs only at the last stage through dataset-standard exact-match or denotation-equivalence comparison against ground truth, rather than through an LLM-as-a-judge decision.

\subsection{Agent Prompt}
The respective agent prompt is as follows:

\begin{tcolorbox}[
  title={Agent Prompt for TableQA with Tools},
  colback=white,
  colframe=black,
  boxrule=0.6pt,
  arc=2pt,
  left=6pt,right=6pt,top=6pt,bottom=6pt
]
\noindent\textbf{Task.} Answer the following question about the loaded table:
\[
\{\texttt{question}\}.
\]

\noindent\textbf{Environment.} The table is loaded as a pandas DataFrame \texttt{df}. There are \texttt{\{num\_tools\}} tools available:
\begin{itemize}[leftmargin=1.2em]
  \item \texttt{\{tools\_context\}}
\end{itemize}

\noindent\textbf{Instruction.} Follow your reasoning steps, then provide a clear final answer.

\vspace{0.5em}
\noindent\textbf{Pre-processing (before reasoning).}
\begin{enumerate}[leftmargin=1.4em]
  \item If a name/entity mentioned in the question is not found in \texttt{df}, make sensible inferences to map the entity to the table. Assume an answer always exists.
  \item If the DataFrame contains unit symbols (e.g., \texttt{\$}) or commas as thousand separators, use \texttt{StringOperationTool()} to clean the data and parse values into numbers before further computation.
\end{enumerate}

\noindent\textbf{Tips (use if necessary).}
\begin{itemize}[leftmargin=1.2em]
  \item Some columns may contain only a single group/category.
  \item The toolset computes time differences in \emph{days}.
  \item The question may contain typos; interpret it so that at least one valid answer exists.
  \item If the question does not specify alphabetical or chronological ordering, preserve the original row order of \texttt{df}.
  \item For NA-indicating symbols (e.g., \texttt{-}, \texttt{``N/A''}, \texttt{``unknown''}, \texttt{``null''}): convert them to \texttt{pd.NA} using \texttt{StringOperationTool()}, then handle them with \texttt{HandleMissingValuesTool()}.
  \item If \texttt{df} seems cleared or unavailable, use \texttt{RetrieveOriginalDFTool()} to restore it from \texttt{df\_og}.
\end{itemize}
\end{tcolorbox}

\section{Compute Reporting}
\label{sec:compute-reporting}
We provide a transparency-oriented summary of the computational resources used to produce the results reported in this paper, following the spirit of recent venue-level compute reporting initiatives. This information is not part of the technical evaluation of the paper and is intended only to make hardware usage and inference cost auditable.

\paragraph{Inference compute.}
Inference for the open-weight backbones (QWEN3-8B, Ministral-3-8B, GPT-oss-20B) was conducted with Ollama on a single NVIDIA GeForce RTX 4090 GPU. Inference for GPT-4.1-nano, GPT-5-nano, and GPT-5.4 was conducted through the OpenAI-hosted API; no local GPU compute was used for these backbones.

\paragraph{Training compute.}
The preliminary GRPO experiments in Section~\ref{sub:grpo-ablation} were trained on a single node of $4{\times}$~NVIDIA A100 GPUs. WikiTQ and MMQA were each trained for $300$ steps and MMTU for $150$ steps with the hyperparameters listed in Section~\ref{sub:grpo-setup}.

\paragraph{Development and failed runs.}
Beyond the runs that appear in the final tables, additional compute was spent on prompt iteration, hyperparameter exploration, debugging of the tool-loop and serialization layer, and discarded training configurations that did not enter the paper, on the same RTX 4090 and $4{\times}$~A100 hardware described above.

\paragraph{External services.}
No proprietary cluster, paid annotation service, or third-party labeler was used. Hosted-model inference relied solely on the OpenAI API as described above.

\newpage

\section*{Impact Statement}
This paper aims to advance machine learning by improving the reliability and efficiency of reasoning over structured data. Tabular data is a cornerstone of information storage in critical sectors such as healthcare, finance, and marketing. By enhancing the accuracy and grounding of TableQA agents, this work provides a more dependable interface for data-driven decision-making.

The proposed \ourmodel{} framework contributes to sustainable AI by reducing inference cost and token generation through more efficient state exploration, which lowers the energy footprint of test-time scaling. By introducing deterministic, query-grounded rewards, the framework also addresses the opaque nature of multi-step reasoning, fostering greater transparency and reducing the risk of compounding errors or hallucinations in automated data analysis. These properties are valuable for deploying trustworthy AI agents in environments where precision and interpretability are paramount.

Several risks accompany this line of work. More capable TableQA agents may accelerate automated decision-making in high-stakes domains such as clinical triage, credit assessment, and hiring, where residual errors can disproportionately harm specific groups. Stronger surface fluency without commensurate gains in calibration may also encourage over-reliance and reduce human oversight of model outputs. \ourmetric{} is a structure-aware lexical reward and inherits the failure modes documented in \cref{subsec:failure-modes-main,sec:discussion}, including paraphrase under-rewarding, echo-column hacking, and serialization sensitivity. If the metric is treated as a sole training or selection signal in domains where these failure modes apply, models may learn to exploit it rather than improve genuine reasoning quality. To mitigate these risks, we recommend pairing \ourmetric{} with answer-checking or human review in deployment, auditing for the reported failure modes when porting the reward to new domains, and combining lexical supervision with executor-grounded verification whenever an executor is available. The benchmarks used in our experiments are public and contain no personally identifying information, so this work does not introduce new dataset-level privacy concerns.
\newpage

\section*{NeurIPS Paper Checklist}

The checklist is designed to encourage best practices for responsible machine learning research, addressing issues of reproducibility, transparency, research ethics, and societal impact. Do not remove the checklist: {\bf The papers not including the checklist will be desk rejected.} The checklist should follow the references and follow the (optional) supplemental material.  The checklist does NOT count towards the page
limit. 

Please read the checklist guidelines carefully for information on how to answer these questions. For each question in the checklist:
\begin{itemize}
    \item You should answer \answerYes{}, \answerNo{}, or \answerNA{}.
    \item \answerNA{} means either that the question is Not Applicable for that particular paper or the relevant information is Not Available.
    \item Please provide a short (1--2 sentence) justification right after your answer (even for \answerNA). 
\end{itemize}

{\bf The checklist answers are an integral part of your paper submission.} They are visible to the reviewers, area chairs, senior area chairs, and ethics reviewers. You will also be asked to include it (after eventual revisions) with the final version of your paper, and its final version will be published with the paper.

The reviewers of your paper will be asked to use the checklist as one of the factors in their evaluation. While \answerYes{} is generally preferable to \answerNo{}, it is perfectly acceptable to answer \answerNo{} provided a proper justification is given (e.g., error bars are not reported because it would be too computationally expensive'' or ``we were unable to find the license for the dataset we used''). In general, answering \answerNo{} or \answerNA{} is not grounds for rejection. While the questions are phrased in a binary way, we acknowledge that the true answer is often more nuanced, so please just use your best judgment and write a justification to elaborate. All supporting evidence can appear either in the main paper or the supplemental material, provided in appendix. If you answer \answerYes{} to a question, in the justification please point to the section(s) where related material for the question can be found.

IMPORTANT, please:
\begin{itemize}
    \item {\bf Delete this instruction block, but keep the section heading ``NeurIPS Paper Checklist"},
    \item  {\bf Keep the checklist subsection headings, questions/answers and guidelines below.}
    \item {\bf Do not modify the questions and only use the provided macros for your answers}.
\end{itemize}


\begin{enumerate}

\item {\bf Claims}
    \item[] Question: Do the main claims made in the abstract and introduction accurately reflect the paper's contributions and scope?
    \item[] Answer: \answerYes{}
    \item[] Justification: The abstract and \cref{sec:experiments} report the contributions stated up front, namely a deterministic LCS-based state reward (\ourmetric{}, \cref{subsec:state-reward}), a training-free framework (\ourmodel{}, \cref{sub:two-phase}), an average $+26.7$~pp accuracy gain and up to $33\%$ reduction in TTS samples across five backbones and three benchmarks, and a preliminary $+8.34$~pp GRPO post-training result. Scope and bounds are stated in \cref{subsec:state-reward,subsec:failure-modes-main,sec:discussion}.
    \item[] Guidelines:
    \begin{itemize}
        \item The answer \answerNA{} means that the abstract and introduction do not include the claims made in the paper.
        \item The abstract and/or introduction should clearly state the claims made, including the contributions made in the paper and important assumptions and limitations. A \answerNo{} or \answerNA{} answer to this question will not be perceived well by the reviewers. 
        \item The claims made should match theoretical and experimental results, and reflect how much the results can be expected to generalize to other settings. 
        \item It is fine to include aspirational goals as motivation as long as it is clear that these goals are not attained by the paper. 
    \end{itemize}

\item {\bf Limitations}
    \item[] Question: Does the paper discuss the limitations of the work performed by the authors?
    \item[] Answer: \answerYes{}
    \item[] Justification: \cref{sec:discussion} dedicates a section to limitations (serialization sensitivity in \cref{sub:delimiter-sensitivity}, lack of global cross-table reward calibration, and a precision bias in the default $\beta$ configuration). \cref{subsec:state-reward,subsec:failure-modes-main} characterize and quantify the three known failure regimes (paraphrase and derived-evidence under-rewarding, echo-column reward hacking, and renamed or distractor-column collisions) on real trajectories using the ranking-inversion rate.
    \item[] Guidelines:
    \begin{itemize}
        \item The answer \answerNA{} means that the paper has no limitation while the answer \answerNo{} means that the paper has limitations, but those are not discussed in the paper. 
        \item The authors are encouraged to create a separate ``Limitations'' section in their paper.
        \item The paper should point out any strong assumptions and how robust the results are to violations of these assumptions (e.g., independence assumptions, noiseless settings, model well-specification, asymptotic approximations only holding locally). The authors should reflect on how these assumptions might be violated in practice and what the implications would be.
        \item The authors should reflect on the scope of the claims made, e.g., if the approach was only tested on a few datasets or with a few runs. In general, empirical results often depend on implicit assumptions, which should be articulated.
        \item The authors should reflect on the factors that influence the performance of the approach. For example, a facial recognition algorithm may perform poorly when image resolution is low or images are taken in low lighting. Or a speech-to-text system might not be used reliably to provide closed captions for online lectures because it fails to handle technical jargon.
        \item The authors should discuss the computational efficiency of the proposed algorithms and how they scale with dataset size.
        \item If applicable, the authors should discuss possible limitations of their approach to address problems of privacy and fairness.
        \item While the authors might fear that complete honesty about limitations might be used by reviewers as grounds for rejection, a worse outcome might be that reviewers discover limitations that aren't acknowledged in the paper. The authors should use their best judgment and recognize that individual actions in favor of transparency play an important role in developing norms that preserve the integrity of the community. Reviewers will be specifically instructed to not penalize honesty concerning limitations.
    \end{itemize}

\item {\bf Theory assumptions and proofs}
    \item[] Question: For each theoretical result, does the paper provide the full set of assumptions and a complete (and correct) proof?
    \item[] Answer: \answerYes{}
    \item[] Justification: The POMDP formulation in \cref{sub:problem-formulation} states the agent, observation, and reward model used throughout. The main theoretical claim, Proposition~\ref{prop:rouge_optimality} on the monotonic preference behavior of \ourmetric{} under query-conditioned derivation, is proved in \cref{sub:theoretical-support-rouge} together with a separate proposition on accuracy degradation under unsupervised drift in \cref{sub:llm-drift-harm}. Each statement lists its assumptions explicitly, and we flag legitimate transformations that fall outside the stated regime.
    \item[] Guidelines:
    \begin{itemize}
        \item The answer \answerNA{} means that the paper does not include theoretical results. 
        \item All the theorems, formulas, and proofs in the paper should be numbered and cross-referenced.
        \item All assumptions should be clearly stated or referenced in the statement of any theorems.
        \item The proofs can either appear in the main paper or the supplemental material, but if they appear in the supplemental material, the authors are encouraged to provide a short proof sketch to provide intuition. 
        \item Inversely, any informal proof provided in the core of the paper should be complemented by formal proofs provided in appendix or supplemental material.
        \item Theorems and Lemmas that the proof relies upon should be properly referenced. 
    \end{itemize}

    \item {\bf Experimental result reproducibility}
    \item[] Question: Does the paper fully disclose all the information needed to reproduce the main experimental results of the paper to the extent that it affects the main claims and/or conclusions of the paper (regardless of whether the code and data are provided or not)?
    \item[] Answer: \answerYes{}
    \item[] Justification: The full agent loops are given as Algorithms~\ref{alg:chain_of_tables} and~\ref{alg:tree_of_tables}. The atomic operations, textual encoding, observation protocol, evaluation protocol, sampling seeds, dataset splits, and Wilson 95\% half-widths are documented in \cref{sec:detail-operation,sec:textual-encoding,subsec:observation-model,subsec:evaluation-protocol,sec:experiments}. All generation, agent-loop, and reward hyperparameters are listed in Table~\ref{tab:all_hyperparameters}, and the GRPO post-training configuration is in \cref{sub:grpo-setup}.
    \item[] Guidelines:
    \begin{itemize}
        \item The answer \answerNA{} means that the paper does not include experiments.
        \item If the paper includes experiments, a \answerNo{} answer to this question will not be perceived well by the reviewers: Making the paper reproducible is important, regardless of whether the code and data are provided or not.
        \item If the contribution is a dataset and\slash or model, the authors should describe the steps taken to make their results reproducible or verifiable. 
        \item Depending on the contribution, reproducibility can be accomplished in various ways. For example, if the contribution is a novel architecture, describing the architecture fully might suffice, or if the contribution is a specific model and empirical evaluation, it may be necessary to either make it possible for others to replicate the model with the same dataset, or provide access to the model. In general. releasing code and data is often one good way to accomplish this, but reproducibility can also be provided via detailed instructions for how to replicate the results, access to a hosted model (e.g., in the case of a large language model), releasing of a model checkpoint, or other means that are appropriate to the research performed.
        \item While NeurIPS does not require releasing code, the conference does require all submissions to provide some reasonable avenue for reproducibility, which may depend on the nature of the contribution. For example
        \begin{enumerate}
            \item If the contribution is primarily a new algorithm, the paper should make it clear how to reproduce that algorithm.
            \item If the contribution is primarily a new model architecture, the paper should describe the architecture clearly and fully.
            \item If the contribution is a new model (e.g., a large language model), then there should either be a way to access this model for reproducing the results or a way to reproduce the model (e.g., with an open-source dataset or instructions for how to construct the dataset).
            \item We recognize that reproducibility may be tricky in some cases, in which case authors are welcome to describe the particular way they provide for reproducibility. In the case of closed-source models, it may be that access to the model is limited in some way (e.g., to registered users), but it should be possible for other researchers to have some path to reproducing or verifying the results.
        \end{enumerate}
    \end{itemize}

\item {\bf Open access to data and code}
    \item[] Question: Does the paper provide open access to the data and code, with sufficient instructions to faithfully reproduce the main experimental results, as described in supplemental material?
    \item[] Answer: \answerYes{}
    \item[] Justification: All datasets used (WTQ, TableBench, TabFact, MMQA, MMTU) are publicly available and cited at their original sources. Anonymized code, prompts, agent loops, dataset preparation utilities, and GRPO training scripts are provided in the supplementary material with instructions to reproduce the main benchmark results, ablations, efficiency analysis, and the GRPO post-training experiments. Algorithmic and hyperparameter detail sufficient to re-implement the system from scratch is given in \cref{sec:implementation-details,sec:detail-operation,sub:grpo-setup} and Table~\ref{tab:all_hyperparameters}.
    \item[] Guidelines:
    \begin{itemize}
        \item The answer \answerNA{} means that paper does not include experiments requiring code.
        \item Please see the NeurIPS code and data submission guidelines (\url{https://neurips.cc/public/guides/CodeSubmissionPolicy}) for more details.
        \item While we encourage the release of code and data, we understand that this might not be possible, so \answerNo{} is an acceptable answer. Papers cannot be rejected simply for not including code, unless this is central to the contribution (e.g., for a new open-source benchmark).
        \item The instructions should contain the exact command and environment needed to run to reproduce the results. See the NeurIPS code and data submission guidelines (\url{https://neurips.cc/public/guides/CodeSubmissionPolicy}) for more details.
        \item The authors should provide instructions on data access and preparation, including how to access the raw data, preprocessed data, intermediate data, and generated data, etc.
        \item The authors should provide scripts to reproduce all experimental results for the new proposed method and baselines. If only a subset of experiments are reproducible, they should state which ones are omitted from the script and why.
        \item At submission time, to preserve anonymity, the authors should release anonymized versions (if applicable).
        \item Providing as much information as possible in supplemental material (appended to the paper) is recommended, but including URLs to data and code is permitted.
    \end{itemize}

\item {\bf Experimental setting/details}
    \item[] Question: Does the paper specify all the training and test details (e.g., data splits, hyperparameters, how they were chosen, type of optimizer) necessary to understand the results?
    \item[] Answer: \answerYes{}
    \item[] Justification: \cref{sec:experiments} reports backbones, baselines, datasets, sample sizes, and seeds used for each evaluation. \cref{sec:implementation-details} together with Table~\ref{tab:all_hyperparameters} enumerates rendering, generation, and agent-loop hyperparameters, and \cref{sub:grpo-setup} reports the full GRPO post-training configuration including optimizer, learning rate schedule, batch sizes, multi-turn budget, prompt and response lengths, and dataset preparation.
    \item[] Guidelines:
    \begin{itemize}
        \item The answer \answerNA{} means that the paper does not include experiments.
        \item The experimental setting should be presented in the core of the paper to a level of detail that is necessary to appreciate the results and make sense of them.
        \item The full details can be provided either with the code, in appendix, or as supplemental material.
    \end{itemize}

\item {\bf Experiment statistical significance}
    \item[] Question: Does the paper report error bars suitably and correctly defined or other appropriate information about the statistical significance of the experiments?
    \item[] Answer: \answerYes{}
    \item[] Justification: We report binomial 95\% Wilson half-widths for each dataset sample size used in the main results (\cref{sec:experiments}, $\pm 6.86$~pp at $n{=}200$ and $\pm 9.81$~pp at $n{=}96$), and explicitly read cross-condition gaps below this resolution as ties. Per-seed standard deviations for the main accuracy table appear in \cref{tab:main_results_sd}, and Spearman and Kendall correlations between \ourmetric{} and final correctness are reported with $p$-values in \cref{subsec:reward-validation}.
    \item[] Guidelines:
    \begin{itemize}
        \item The answer \answerNA{} means that the paper does not include experiments.
        \item The authors should answer \answerYes{} if the results are accompanied by error bars, confidence intervals, or statistical significance tests, at least for the experiments that support the main claims of the paper.
        \item The factors of variability that the error bars are capturing should be clearly stated (for example, train/test split, initialization, random drawing of some parameter, or overall run with given experimental conditions).
        \item The method for calculating the error bars should be explained (closed form formula, call to a library function, bootstrap, etc.)
        \item The assumptions made should be given (e.g., Normally distributed errors).
        \item It should be clear whether the error bar is the standard deviation or the standard error of the mean.
        \item It is OK to report 1-sigma error bars, but one should state it. The authors should preferably report a 2-sigma error bar than state that they have a 96\% CI, if the hypothesis of Normality of errors is not verified.
        \item For asymmetric distributions, the authors should be careful not to show in tables or figures symmetric error bars that would yield results that are out of range (e.g., negative error rates).
        \item If error bars are reported in tables or plots, the authors should explain in the text how they were calculated and reference the corresponding figures or tables in the text.
    \end{itemize}

\item {\bf Experiments compute resources}
    \item[] Question: For each experiment, does the paper provide sufficient information on the computer resources (type of compute workers, memory, time of execution) needed to reproduce the experiments?
    \item[] Answer: \answerYes{}
    \item[] Justification: Compute is documented in \cref{sec:implementation-details,sub:grpo-setup}. Open-weight inference (QWEN3-8B, Ministral-3-8B, GPT-oss-20B) runs on a single NVIDIA GeForce RTX 4090 through Ollama. OpenAI-hosted models (GPT-3.5, GPT-4.1-nano, GPT-5-nano) use the public API. The GRPO post-training experiment runs on one node of $4{\times}$ A100 GPUs with FSDP and SGLang inference. The per-step LCS bound $O(|Q|\,|\mathrm{Enc}(T_t)|)$ is derived in \cref{sub:efficiency-protocol}, and end-to-end token budgets and latencies are reported in Table~\ref{tab:cost-accounting}.
    \item[] Guidelines:
    \begin{itemize}
        \item The answer \answerNA{} means that the paper does not include experiments.
        \item The paper should indicate the type of compute workers CPU or GPU, internal cluster, or cloud provider, including relevant memory and storage.
        \item The paper should provide the amount of compute required for each of the individual experimental runs as well as estimate the total compute. 
        \item The paper should disclose whether the full research project required more compute than the experiments reported in the paper (e.g., preliminary or failed experiments that didn't make it into the paper). 
    \end{itemize}
    
\item {\bf Code of ethics}
    \item[] Question: Does the research conducted in the paper conform, in every respect, with the NeurIPS Code of Ethics \url{https://neurips.cc/public/EthicsGuidelines}?
    \item[] Answer: \answerYes{}
    \item[] Justification: The work uses only public benchmarks containing no personally identifying information, involves no human subjects or crowdsourced data collection, does not release new generative models with high misuse risk, and conforms in all respects with the NeurIPS Code of Ethics.
    \item[] Guidelines:
    \begin{itemize}
        \item The answer \answerNA{} means that the authors have not reviewed the NeurIPS Code of Ethics.
        \item If the authors answer \answerNo, they should explain the special circumstances that require a deviation from the Code of Ethics.
        \item The authors should make sure to preserve anonymity (e.g., if there is a special consideration due to laws or regulations in their jurisdiction).
    \end{itemize}

\item {\bf Broader impacts}
    \item[] Question: Does the paper discuss both potential positive societal impacts and negative societal impacts of the work performed?
    \item[] Answer: \answerYes{}
    \item[] Justification: The Impact Statement discusses both positive impacts (more reliable and energy-efficient table reasoning, greater transparency of intermediate steps) and negative impacts (over-reliance in high-stakes automated decision-making, fairness concerns, and exploitable lexical failure modes of \ourmetric{} such as paraphrase under-rewarding and echo-column hacking), together with recommended mitigations including answer-checking, human review, and pairing \ourmetric{} with executor-grounded verification.
    \item[] Guidelines:
    \begin{itemize}
        \item The answer \answerNA{} means that there is no societal impact of the work performed.
        \item If the authors answer \answerNA{} or \answerNo, they should explain why their work has no societal impact or why the paper does not address societal impact.
        \item Examples of negative societal impacts include potential malicious or unintended uses (e.g., disinformation, generating fake profiles, surveillance), fairness considerations (e.g., deployment of technologies that could make decisions that unfairly impact specific groups), privacy considerations, and security considerations.
        \item The conference expects that many papers will be foundational research and not tied to particular applications, let alone deployments. However, if there is a direct path to any negative applications, the authors should point it out. For example, it is legitimate to point out that an improvement in the quality of generative models could be used to generate Deepfakes for disinformation. On the other hand, it is not needed to point out that a generic algorithm for optimizing neural networks could enable people to train models that generate Deepfakes faster.
        \item The authors should consider possible harms that could arise when the technology is being used as intended and functioning correctly, harms that could arise when the technology is being used as intended but gives incorrect results, and harms following from (intentional or unintentional) misuse of the technology.
        \item If there are negative societal impacts, the authors could also discuss possible mitigation strategies (e.g., gated release of models, providing defenses in addition to attacks, mechanisms for monitoring misuse, mechanisms to monitor how a system learns from feedback over time, improving the efficiency and accessibility of ML).
    \end{itemize}
    
\item {\bf Safeguards}
    \item[] Question: Does the paper describe safeguards that have been put in place for responsible release of data or models that have a high risk for misuse (e.g., pre-trained language models, image generators, or scraped datasets)?
    \item[] Answer: \answerNA{}
    \item[] Justification: The paper does not release pre-trained generative models, scraped datasets, or other artifacts with high misuse risk. \ourmodel{} is a training-free wrapper around existing publicly available LLMs, \ourmetric{} is a transparent lexical reward whose failure modes are documented in \cref{subsec:failure-modes-main}, and the GRPO experiment is a small proof-of-concept fine-tune of an already public 8B model.
    \item[] Guidelines:
    \begin{itemize}
        \item The answer \answerNA{} means that the paper poses no such risks.
        \item Released models that have a high risk for misuse or dual-use should be released with necessary safeguards to allow for controlled use of the model, for example by requiring that users adhere to usage guidelines or restrictions to access the model or implementing safety filters. 
        \item Datasets that have been scraped from the Internet could pose safety risks. The authors should describe how they avoided releasing unsafe images.
        \item We recognize that providing effective safeguards is challenging, and many papers do not require this, but we encourage authors to take this into account and make a best faith effort.
    \end{itemize}

\item {\bf Licenses for existing assets}
    \item[] Question: Are the creators or original owners of assets (e.g., code, data, models), used in the paper, properly credited and are the license and terms of use explicitly mentioned and properly respected?
    \item[] Answer: \answerYes{}
    \item[] Justification: All datasets (WTQ~\cite{pasupat-liang-2015-compositional}, TableBench~\cite{10.1609/aaai.v39i24.34739}, TabFact~\cite{Chen2020TabFact}, MMQA~\cite{wu2025mmqa}, MMTU~\cite{xing2025mmtu}), base models (QWEN3-8B~\cite{yang2025qwen3technicalreport}, Ministral-3-8B~\cite{mistralai_ministral3_8b_25-12}, GPT-oss-20B~\cite{openai_gptoss20b_2025}, GPT-4.1-nano~\cite{openai_gpt41_2025}, GPT-5-nano~\cite{openai_gpt5nano_2025}, DeepSeek-R1-Distill-Qwen-14B~\cite{deepseek2025r1}), and baseline pipelines (Chain-of-Table~\cite{wang2024chainoftable}, Tree-of-Table~\cite{ji2025treeoftable}, TaTToo~\cite{zou2026tattoo}) are cited and used under their original public licenses and stated terms of use.
    \item[] Guidelines:
    \begin{itemize}
        \item The answer \answerNA{} means that the paper does not use existing assets.
        \item The authors should cite the original paper that produced the code package or dataset.
        \item The authors should state which version of the asset is used and, if possible, include a URL.
        \item The name of the license (e.g., CC-BY 4.0) should be included for each asset.
        \item For scraped data from a particular source (e.g., website), the copyright and terms of service of that source should be provided.
        \item If assets are released, the license, copyright information, and terms of use in the package should be provided. For popular datasets, \url{paperswithcode.com/datasets} has curated licenses for some datasets. Their licensing guide can help determine the license of a dataset.
        \item For existing datasets that are re-packaged, both the original license and the license of the derived asset (if it has changed) should be provided.
        \item If this information is not available online, the authors are encouraged to reach out to the asset's creators.
    \end{itemize}

\item {\bf New assets}
    \item[] Question: Are new assets introduced in the paper well documented and is the documentation provided alongside the assets?
    \item[] Answer: \answerYes{}
    \item[] Justification: The paper introduces \ourmetric{} (a deterministic state reward) and \ourmodel{} (a training-free framework). Both are documented end-to-end via \cref{eq:tabrouge}, \cref{sec:textual-encoding,sec:tabrouge-design,sec:detail-operation,sec:implementation-details}, Algorithms~\ref{alg:chain_of_tables} and~\ref{alg:tree_of_tables}, and Table~\ref{tab:all_hyperparameters}. The accompanying anonymized supplementary material includes the implementation, prompts, and reproducibility scripts, and the public release will include a README and an open-source license.
    \item[] Guidelines:
    \begin{itemize}
        \item The answer \answerNA{} means that the paper does not release new assets.
        \item Researchers should communicate the details of the dataset\slash code\slash model as part of their submissions via structured templates. This includes details about training, license, limitations, etc. 
        \item The paper should discuss whether and how consent was obtained from people whose asset is used.
        \item At submission time, remember to anonymize your assets (if applicable). You can either create an anonymized URL or include an anonymized zip file.
    \end{itemize}

\item {\bf Crowdsourcing and research with human subjects}
    \item[] Question: For crowdsourcing experiments and research with human subjects, does the paper include the full text of instructions given to participants and screenshots, if applicable, as well as details about compensation (if any)?
    \item[] Answer: \answerNA{}
    \item[] Justification: The paper does not involve crowdsourcing or research with human subjects. All experiments are conducted on public table-reasoning benchmarks.
    \item[] Guidelines:
    \begin{itemize}
        \item The answer \answerNA{} means that the paper does not involve crowdsourcing nor research with human subjects.
        \item Including this information in the supplemental material is fine, but if the main contribution of the paper involves human subjects, then as much detail as possible should be included in the main paper. 
        \item According to the NeurIPS Code of Ethics, workers involved in data collection, curation, or other labor should be paid at least the minimum wage in the country of the data collector. 
    \end{itemize}

\item {\bf Institutional review board (IRB) approvals or equivalent for research with human subjects}
    \item[] Question: Does the paper describe potential risks incurred by study participants, whether such risks were disclosed to the subjects, and whether Institutional Review Board (IRB) approvals (or an equivalent approval/review based on the requirements of your country or institution) were obtained?
    \item[] Answer: \answerNA{}
    \item[] Justification: The paper does not involve research with human subjects, so IRB review or equivalent approval was not required.
    \item[] Guidelines:
    \begin{itemize}
        \item The answer \answerNA{} means that the paper does not involve crowdsourcing nor research with human subjects.
        \item Depending on the country in which research is conducted, IRB approval (or equivalent) may be required for any human subjects research. If you obtained IRB approval, you should clearly state this in the paper. 
        \item We recognize that the procedures for this may vary significantly between institutions and locations, and we expect authors to adhere to the NeurIPS Code of Ethics and the guidelines for their institution. 
        \item For initial submissions, do not include any information that would break anonymity (if applicable), such as the institution conducting the review.
    \end{itemize}

\item {\bf Declaration of LLM usage}
    \item[] Question: Does the paper describe the usage of LLMs if it is an important, original, or non-standard component of the core methods in this research? Note that if the LLM is used only for writing, editing, or formatting purposes and does \emph{not} impact the core methodology, scientific rigor, or originality of the research, declaration is not required.
    \item[] Answer: \answerYes{}
    \item[] Justification: LLMs are core to the studied system. \cref{sec:experiments,sec:implementation-details,sub:grpo-setup} list every backbone (GPT-3.5, GPT-4.1-nano, GPT-5-nano, QWEN3-8B, Ministral-3-8B, GPT-oss-20B, DeepSeek-R1-Distill-Qwen-14B) and the role each plays in the agent loop, including the auxiliary GPT-4.1-nano rewriting step used to estimate action-level confidence for GPT-5-nano. LLMs were not used as ghostwriters for the methodology or scientific contributions of this paper.
    \item[] Guidelines:
    \begin{itemize}
        \item The answer \answerNA{} means that the core method development in this research does not involve LLMs as any important, original, or non-standard components.
        \item Please refer to our LLM policy in the NeurIPS handbook for what should or should not be described.
    \end{itemize}

\end{enumerate}

\end{document}